\newtheorem{lemma}{Lemma}
\newtheorem{theorem}{Theorem}
\newtheorem{assumption}{Assumption}
\newtheorem{remark}{Remark}
\DeclareMathOperator*{\argmin}{\mathop{\rm argmin}}
\DeclarePairedDelimiterX{\KL}[2]{\mathrm{KL}[}{]}{#1\;\delimsize\|\;#2}
\DeclarePairedDelimiterX\braket[2]{\langle}{\rangle}{#1 \delimsize\vert #2}
\newcommand{\calx}{\mathcal{X}}
\newcommand{\KLD}{\mathrm{KL}}
\newcommand{\Ex}{\mathbb{E}}
\newcommand{\calo}{\mathcal{O}}
\newcommand{\calw}{\mathcal{W}}
\newcommand{\calz}{\mathcal{Z}}
\newcommand{\cala}{\mathcal{A}}
\newcommand{\cald}{\mathcal{D}}
\newcommand{\calf}{\mathcal{F}}
\newcommand{\caln}{\mathcal{N}}
\newcommand{\EX}{\mathop{\mathbb{E}}\limits}
\title{Information-theoretic Generalization Analysis for VQ-VAEs: A Role of Latent Variables}
\author{
    Futoshi Futami$^{*,1,2,3}$, 
    Masahiro Fujisawa$^{*,1,2}$, \\
    \small{$^1$ The University of Osaka}, \ \small{$^2$ RIKEN AIP},
    \small{$^3$ The University of Tokyo}, \\
    \small{$^*$ Equal Contribution} \\ 
    \small{\texttt{futami.futoshi.es@osaka-u.ac.jp}, \ \texttt{fujisawa@ist.osaka-u.ac.jp}}\\
}
\begin{document}

\maketitle

\begin{abstract}
Latent variables (LVs) play a crucial role in encoder--decoder models by enabling effective data compression, prediction, and generation. Although their theoretical properties, such as generalization, have been extensively studied in supervised learning, similar analyses for unsupervised models such as variational autoencoders (VAEs) remain insufficiently explored. In this work, we extend information-theoretic generalization analysis to vector-quantized (VQ) VAEs with discrete latent spaces, introducing a novel data-dependent prior to rigorously analyze the relationship among LVs, generalization, and data generation. We derive a novel generalization error bound of the reconstruction loss of VQ-VAEs, which depends solely on the complexity of LVs and the encoder, independent of the decoder. Additionally, we provide the upper bound of the 2-Wasserstein distance between the distributions of the true data and the generated data, explaining how the regularization of the LVs contributes to the data generation performance.
\end{abstract}

\section{Introduction}
\label{sec_intro}
Encoder--decoder (ED) models have demonstrated remarkable performance~\citep{Goodfellow-et-al-2016} in (un)supervised tasks such as classification~\citep{Achille2018b,alemi2017deep} and data generation~\citep{kingma2013auto,van2017neural}, compressing input data into latent variables (LVs) via an encoder. The success of ED models hinges on how effectively the encoder can represent essential features of the input in LVs, stimulating analyses of the relationship between LVs and ED model performance, as well as developing algorithms designed to appropriately control LVs.

In supervised learning, the information bottleneck (IB) hypothesis~\cite{tishby2000information,SHAMIR2010} has gained significant attention for proposing that minimizing the mutual information (MI) between input data and LVs enhances generalization by ensuring LVs retain the minimal information necessary for prediction.
This hypothesis has motivated numerous learning algorithms for deep neural networks and empirical studies exploring their performance~\citep{tishby2015deep,shwartz2017opening,saxe2019information,goldfeld19a,Achille2018,Achille2018b}.
Moreover, theoretical research about how LVs contribute to generalization has been actively pursued~\cite{Vera2018,HAFEZKOLAHI2020286,kawaguchi23a,vera2023role} within the IB hypothesis.
Recently, \citet{sefidgaran2024minimum} has highlighted the limitations of these analyses, particularly in terms of assumptions and the sample complexity represented by the MI.
To address these limitations, they proposed extending the supersample setting of information-theoretic (IT) analysis~\citep{steinke20a}. Their approach induces a \emph{symmetric, data-dependent prior over LVs} that facilitates rigorous analysis, which successfully characterizes generalization performance using the Kullback--Leibler (KL) divergence between the posterior distribution of the LVs and this prior. These results suggest that, by carefully constructing the data-dependent prior distribution, we can obtain {\bf a decoder-independent bound}, which illustrates clearly how LVs contribute to the generalization for ED models in classification. Their analysis has recently been extended to multi-view learning settings~\citep{sefidgaran2025generalization,sefidgaran2025generalization-multiview}.

LVs play a key role in deep generative models for unsupervised learning tasks such as data compression and generation. 
For example, variational autoencoders (VAEs)~\citep{kingma2013auto} are trained by optimizing an objective function that includes the KL divergence of the posterior from the prior in the LV space as a regularization term.
Extended methods such as $\beta$-VAE~\citep{higgins2017betavae} highlight the importance of appropriately tuning the strength of KL regularization to improve LV representations.
Additionally, methods like vector-quantized VAEs (VQ-VAEs)~\citep{van2017neural}, which discretize the latent space, have been developed to address posterior collapse. 
Numerous empirical studies have also evaluated model performance based on the MI, such as the IB hypothesis and rate-distortion theory~\citep{alemi2018fixing,blau2019rethinking,Tschannen2020On,bond2021deep}.

In contrast to supervised learning, theoretical insights into the relationship between the generalization of ED models and LVs in unsupervised learning remain limited. Although \citet{cherief2022pac} has employed \emph{probably approximately correct} (PAC) Bayes analysis~\citep{McAllester03, alquier2016properties} to investigate the generalization error defined in terms of reconstruction loss, they consider the posterior and prior distributions over the \emph{encoder and decoder parameters}. Similarly, \citet{epstein2019generalization} focused on the complexity of encoder and decoder parameters to analyze the generalization capability. Therefore, these studies lack the analysis of the relationship between LVs and generalization capability.  \citet{mbacke2024statistical} attempted to address this problem by deriving PAC-Bayes bounds based on the KL divergence within prior and posterior distributions over LVs; however, their analysis relies on the impractical assumption that decoders are not trained, leaving significant challenges in achieving a practical understanding of the role of LVs in generalization performance.

To address these challenges, we provide the first rigorous theoretical analysis of the relationship among LVs, generalization, and data generation in ED models, with a focus on VQ-VAEs~\citep{van2017neural}.
Motivated by \citet{sefidgaran2024minimum}, we construct a data-dependent prior over LVs using the supersample setting from IT analysis~\citep{steinke20a, harutyunyan2021, hellstrom2022a}.
This approach yields a generalization error bound for the reconstruction loss, characterized by the KL divergence between the prior and the posterior over LVs (Theorem~\ref{reconstructon_gen}).
Similar to \citet{sefidgaran2024minimum}, our bound remains independent of decoder complexity even when the encoder and decoder are trained jointly, underscoring the critical role of designing the encoder network for the generalization.

However, we observe that the bound based on the supersample setting does not necessarily converge to $0$ asymptotically with respect to the number of samples.
To address this issue, we extend the supersample framework by introducing a novel data-dependent prior, called the \emph{permutation symmetric prior distribution}, which explicitly accounts for the inherent symmetries specific to unsupervised learning tasks (Theorem~\ref{reconstructon_gen_permutation}).
This formulation enables us to derive a generalization error bound that asymptotically converges to $0$ as the number of samples increases and is independent of the decoder.

Finally, we investigate the data generation capability of VQ-VAEs by deriving the upper bound on the 2-Wasserstein distance between the true data and the generated data distributions (Theorem~\ref{data_generalization_bound}).
Our analyses reveal that the generalization and data-generating capabilities of VQ-VAEs depend solely on the parameters of the encoder and LVs, \emph{remaining entirely independent of the decoder}.

\section{Background}\label{sec:Preliminaries}
In this section, we introduce the VQ-VAE and define the reconstruction-based generalization error, which forms the basis of our analysis (Sections~\ref{sec_notations} and \ref{subsec:gen_error_recon}).
We then present the IT analysis using \emph{supersamples} (Section~\ref{subsec:supersample}), highlighting its limitations in unsupervised settings (Section~\ref{subsec:limitation_exsisting}).

{\bf Notations:} We use uppercase letters for random variables and lowercase letters for their realizations.
The distribution of $X$ is denoted by $p(X)$, and the conditional distribution of $Y$ given $X$ by $p(Y | X)$.
Expectations are written as $\mathbb{E}_{p(X)}$ or $\mathbb{E}_X$. The MI and conditional MI (CMI) are denoted by $I(X; Y)$ and $I(X; Y | Z)$, respectively.
The KL divergence from $p(X)$ to $p(Y)$ is written as $\mathrm{KL}(p(X) \| p(Y))$.
For $a \in \mathbb{N}$, we define $[a] := \{1, \dots, a\}$.

\subsection{VQ-VAE and its stochastic extensions}\label{sec_notations}
Let $\calx \subset \mathbb{R}^{d}$ denote the data space, and assume an unknown data-generating distribution $\cald$.
The latent space is represented as $\calz \subset \mathbb{R}^{d_z}$, where both $\calx$ and $\calz$ are equipped with the Euclidean metric $\|\cdot\|$.
The discrete latent space comprises $K$ distinct points, collectively referred to as the \emph{codebook}, denoted by $\mathbf{e} = \{e_j\}_{j=1}^{K} \in \calz^K$, which are learned from the training data.

The VQ-VAE model consists of the encoder network $\!f_{\phi}\!:\calx\to\calz$ and the decoder network $g_{\theta}\!:\calz\to\calx$ responsible for (i) data compression and (ii) reconstruction, where $\phi\in \Phi\subset \mathbb{R}^{d_\phi}$ and $\theta\in\Theta\subset\!\mathbb{R}^{d_\theta}$ denote the parameters of the encoder and decoder, respectively.
In the compression phase, a data point $x$ is mapped to $f_{\phi}(x)$, and the discrete representation $e_j$ is selected from the codebook $\mathbf{e}$. 
Then, the posterior distribution of the discrete representation indexed by $j$ is denoted as $q(J=j|\mathbf{e},\phi,x)$ for all $j=1, \dots, K$.
In the original VQ-VAE \citep{van2017neural}, the following deterministic posterior is used:
\begin{align}\label{deterministic_encoder}
    q(J=j|\mathbf{e},\phi,x)= \begin{cases}
    &1\quad \text{for}\ j=\argmin_{k\in[K]}\|f_\phi(x)-e_{k}\|,\\
    &0\quad \text{otherwise},
    \end{cases}
\end{align}
where the distance between the encoder output and the codebook entries determines the posterior. Recent extensions of VQ-VAE~\citep{williams2020hierarchical, sonderby2017continuous, roy2018theory, takida22a} introduce a stochastic posterior defined by
\begin{align}\label{stochastic_encoder}
q(J = j | \mathbf{e}, \phi, x) \propto \exp\left(-\beta \|f_\phi(x) - e_j\|^2\right),
\end{align}
where a softmax is applied over codebook indices, and the temperature parameter $\beta \in \mathbb{R}^+$ controls the level of stochasticity. The data is then reconstructed by passing the selected latent representation $e_{J = j}$ through the decoder, resulting in $g_\theta(e_{J = j})$. The fidelity of the reconstruction to the original input is measured by the \emph{reconstruction loss}, defined as $l(x, g_\theta(e_{J = j}))$, where $l : \mathcal{X} \times \mathcal{X} \to \mathbb{R}^+$.

\subsection{Generalization error based on reconstruction loss}
\label{subsec:gen_error_recon}

Hereafter, let the set of parameters be denoted as $W\coloneqq \{\mathbf{e}, \phi, \theta\} \in \calw~(\coloneqq \calz^K \times \Phi \times \Theta)$. 
Given the training dataset $S = (S_1, \dots, S_n) \in \calx^{n}$ consisting of independently and identically distributed~(i.i.d.) data points sampled from the data distribution $\cald$, these parameters are learned jointly using a randomized algorithm $\mathcal{A}: \calx^{n} \to \mathcal{W}$ that minimizes the reconstruction loss between a data point $x$ and a reconstructed data $g_{\theta}(e_j)$, i.e., $l(x, g_\theta(e_j))$. 
Consequently, the learned parameters $\mathbf{e}, \phi, \theta$ follow the conditional distribution $q(\mathbf{e}, \phi, \theta | S)$. 
For simplicity, we define the expected reconstruction loss for an input $x$ and $w$ as $l_0:\calw \times \calx \to \mathbb{R}$, where $l_0(w, x) \coloneqq \mathbb{E}_{q(J|\mathbf{e}, \phi, x)}[l(x, g_\theta(e_J))]$. In this study, we consider the squared distance as $l$.  Accordingly, our objective is to minimize $l_0(w, x) \coloneqq \mathbb{E}_{q(J|\mathbf{e}, \phi, x)}[\|x - g_\theta(e_J)\|^2]$ over the training dataset $x \in S$. We introduce the following assumption about the data space imposed on our analysis.
\begin{assumption}\label{asm_bounded}
There exists a positive constant $\Delta$ such that $\sup_{x,x'\in\calx}\|x-x'\|<\Delta^{1/2}$.
\end{assumption} 
This assumption ensures that the reconstruction loss $l(x, g_\theta(e_j))$ is bounded by $\Delta$ for all $x$, $e_j$, and $\theta$.

Our goal is to theoretically characterize the relationship between generalization performance and LVs in VQ-VAEs.
To this end, we analyze the following generalization error:
\begin{align}
\mathrm{gen}(n,\cald)\coloneqq \Big|\Ex_{S,X}\Ex_{q(W|S)}l_0(W,X) - \frac{1}{n}\sum_{m=1}^nl_0(W,S_m)\Big|,\label{eq:gen_recon}
\end{align}
where the first term denotes the expected test reconstruction loss, and the second term is the empirical training loss. Following the success of \citet{sefidgaran2024minimum}, we also consider analyzing Eq.~\eqref{eq:gen_recon} under the IT analysis framework with the \emph{supersample} (or ghost sample) setting~\citep{steinke20a, harutyunyan2021, hellstrom2022a}.

\subsection{Supersample settings for IT analysis}
\label{subsec:supersample}
Now, we introduce the supersample setting for IT analysis.
We begin by defining a supersample $\tilde{X}\in\calx^{n\times 2}$ as an $n\times 2$ matrix containing $2n$ data points drawn i.i.d.~from $\cald$.
Each row $m \in [n]$ of this matrix, denoted $\tilde{X}_m$, represents a pair of data points: $(\tilde{X}_{m,0},\tilde{X}_{m,1})$.
We then generate a random binary vector $U=(U_1,\dots,U_n)\sim \text{Uniform}(\{0,1\}^n)$, which is independent of $\tilde{X}$.
This index vector $U$ determines the training and test sets by selecting exactly one sample from each row.
The training dataset is formed as $\tilde{X}_U\coloneqq(\tilde{X}_{m,U_m})_{m=1}^n$, and the test dataset is composed of the remaining sample from each pair, $\tilde{X}_{\bar{U}}\coloneqq(\tilde{X}_{m,\bar{U}_m})_{m=1}^n$, where $\bar{U}_m = 1-U_m$.
After training a model $W = \cala(\tilde{X}_U)$, we define the loss matrix $l_0(W,\tilde{X})$ by evaluating the loss $l_0(W,\cdot)$ on all $2n$ data points in the original supersample matrix $\tilde{X}$.
This results in an $n \times 2$ matrix of loss values. This distinction between the $n$-point training set and the $2n$-point loss evaluation matrix is a key concept for the subsequent analysis.
The IT analysis of Eq.~\eqref{eq:gen_recon} under the supersample setting gives the following result.
\begin{theorem}[\citet{hellstrom2022a}]\label{naive_it_bound}
Under Assumption~\ref{asm_bounded} and the supersample setting, we have
\begin{align}\label{eq_naitve_mi}
\mathrm{gen}&(n,\cald) 
\leq \Delta\sqrt{2I(l_0(W,\tilde{X});U|\tilde{X})/n}.
\end{align}
\end{theorem}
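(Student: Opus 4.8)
The plan is to reduce the statement to a standard information-theoretic concentration bound by exploiting the symmetry between the training role and test role of the two columns of the supersample. First I would observe that, conditioned on $\tilde{X}$, the randomness in the generalization gap comes entirely from $U$ (which selects which column is training and which is test) and from the internal randomness of $\cala$. For a fixed row $m$, the contribution to the test-minus-train difference is $l_0(W,\tilde{X}_{m,\bar U_m}) - l_0(W,\tilde{X}_{m,U_m})$, which can be written as $(-1)^{U_m}\bigl(l_0(W,\tilde{X}_{m,1}) - l_0(W,\tilde{X}_{m,0})\bigr)$. Averaging over $m$, the quantity inside the absolute value in $\mathrm{gen}(n,\cald)$ equals $\Ex\bigl[\frac{1}{n}\sum_{m=1}^n (-1)^{U_m} d_m\bigr]$ where $d_m \coloneqq l_0(W,\tilde{X}_{m,1}) - l_0(W,\tilde{X}_{m,0})$ is a function of the loss matrix $l_0(W,\tilde X)$, and $|d_m| \le \Delta$ by Assumption~\ref{asm_bounded}.

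The key step is a change-of-measure / decoupling argument. The law of $(l_0(W,\tilde X), U)$ conditioned on $\tilde X$ has mutual information $I(l_0(W,\tilde X); U \mid \tilde X)$; if $U$ and $l_0(W,\tilde X)$ were independent given $\tilde X$, the sign flips $(-1)^{U_m}$ would be independent of the $d_m$'s and the expectation above would vanish. The gap is controlled by the mutual information. Concretely, I would use the Donsker--Varadhan variational representation of KL divergence together with a sub-Gaussianity estimate: for the independent-coupling reference measure, $\frac{1}{n}\sum_m (-1)^{U_m} d_m$ is a sum of bounded independent symmetric terms, hence $\sigma^2$-sub-Gaussian with $\sigma^2 = \Delta^2/n$ (each term in $[-\Delta/n,\Delta/n]$, symmetric mean-zero, so Hoeffding's lemma gives the per-term bound $\Delta^2/(4n^2)$ and summing gives variance proxy $\Delta^2/(4n)$; one then gets the stated constant after optimizing). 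Feeding this into the standard lemma that an $\mathrm{KL}$-penalized expectation of a $\sigma^2$-sub-Gaussian variable is at most $\sqrt{2\sigma^2 \, \mathrm{KL}}$, and identifying the relevant KL as $I(l_0(W,\tilde X);U\mid\tilde X) = \Ex_{\tilde X}\mathrm{KL}\bigl(p(l_0(W,\tilde X),U\mid\tilde X)\,\|\,p(l_0(W,\tilde X)\mid\tilde X)\otimes p(U)\bigr)$, yields the bound $\Delta\sqrt{2 I(l_0(W,\tilde X);U\mid\tilde X)/n}$ after taking the outer expectation over $\tilde X$ with Jensen's inequality.

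I would then note that this is essentially a restatement of the known CMI bound of \citet{hellstrom2022a} (and \citet{steinke20a}) applied to the loss matrix $l_0(W,\tilde X)$ in place of a prediction/loss object, so the cleanest exposition is to verify that (i) $\mathrm{gen}(n,\cald)$ coincides with the supersample generalization gap $\Ex\bigl[\frac1n\sum_m (-1)^{U_m} d_m\bigr]$ under the supersample coupling, using that $\tilde X_{m,U_m}$ is distributed as a genuine i.i.d.\ sample and $\tilde X_{m,\bar U_m}$ as an independent test point, and (ii) the loss entries lie in $[0,\Delta]$ so the differences are $\Delta$-bounded and the sub-Gaussian parameter is as claimed; then invoke the cited theorem verbatim.

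The main obstacle I anticipate is the bookkeeping in step (i): carefully checking that, although $W=\cala(\tilde X_U)$ depends on the \emph{training} column, the expected reconstruction loss $l_0(W, \tilde X_{m,\bar U_m})$ on the test column genuinely has the distribution of $\Ex_{S,X}\Ex_{q(W|S)} l_0(W,X)$, and likewise that the average over rows reproduces the empirical term $\frac1n\sum_m l_0(W,S_m)$ in expectation. This requires using the independence of $U$ from $\tilde X$ and the exchangeability of the two entries in each row, and being careful that the absolute value in Eq.~\eqref{eq:gen_recon} is handled by noting the sub-Gaussian tail bound is two-sided (so one bounds $\Ex[\,\cdot\,]$ and $\Ex[-\,\cdot\,]$ symmetrically, or equivalently bounds $\Ex|\cdot|$ directly via the same argument). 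The sub-Gaussianity and change-of-measure parts are routine once this identification is pinned down.
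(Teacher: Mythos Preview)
Your proposal is correct and follows essentially the same approach as the paper: both rewrite $\mathrm{gen}(n,\cald)$ in the supersample form, note that the loss entries lie in $[0,\Delta]$ so the relevant difference is $\Delta$-sub-Gaussian, and then invoke the CMI bound of \citet{hellstrom2022a} directly. The paper's own proof is even terser---it simply verifies the supersample identification and cites the result---so your more explicit Donsker--Varadhan unpacking is a faithful expansion of the same argument.
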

The complete proof is provided in Appendix~\ref{app_backgound}. We refer to this bound as the {\bf basic IT-bound}, as it arises from the direct application of existing IT analysis~\citep{hellstrom2022a} developed for supervised learning. Unfortunately, we find that the basic IT-bound is insufficient to fully understand the role of LVs in the generalization performance of VQ-VAE. The next section elaborates on this limitation.

\begin{figure*}[t!]
  \centering
  \begin{minipage}{0.48\linewidth}
    \centering
    \includegraphics[scale=0.8]{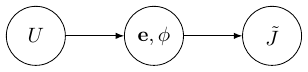}
  \end{minipage}
  \hfill
  \begin{minipage}{0.48\linewidth}
    \centering
    \includegraphics[scale=0.8]{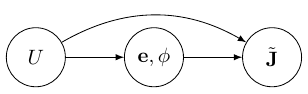}
  \end{minipage}
  \caption{
Graphical models illustrating different dependency structures for LVs. The left panel shows the structure considered in the standard supersample setting (Theorem~\ref{naive_it_bound}).
The right panel depicts our proposed structure tailored for unsupervised learning.
See Appendix~\ref{app_intuiton} for further details.
  }
  \label{fig:two-pdfs}
\end{figure*}

\subsection{Limitation of the direct application of IT analysis}\label{subsec:limitation_exsisting}
The limitation of the basic IT-bound is that it does not offer a clear interpretation of how the LVs contribute to the generalization performance independently of other random variables. Specifically, let $\widetilde{J}$ denote the random variable that follows the distribution $q(\widetilde{J} | \mathbf{e}, \phi, \tilde{X})$, which is defined by applying $q(J | \mathbf{e}, \phi, \cdot)$ elementwise to $\tilde{X}$. 
With this definition, we can upper bound Eq.~\eqref{eq_naitve_mi} as
\begin{align}\label{DPI_usual_supersample}
    I(l_0(W,\tilde{X});U|\tilde{X})\leq I(\theta;U|\tilde{X})+I(\widetilde{J};U|\tilde{X},\theta).
\end{align}
See Appendix~\ref{app_existing_fcmi_difference} for the proof.
This result implies that the generalization of VQ-VAE can be bounded by the CMI related to the decoder parameter $\theta$ and the selected index $\widetilde{J}$.
Note that selecting $J$ corresponds to selecting an LV $e_J$ from the codebook. Therefore, the second term above illustrates how LVs contribute to generalization. However, since conditioning on $\theta$ is taken, it does not allow the independent analysis of $e_J$ and $\theta$. 
This dependence hinders a precise theoretical analysis of how LVs affect generalization performance.

We can better understand this difficulty by considering how IT-based generalization analysis is typically formulated: it is framed as the problem of inferring which samples were used for training, given a random supersample index,  $U$, that determines the shuffling of the dataset.
The randomness introduced by this shuffling is governed by the design of the prior, which plays a central role in applying the Donsker–Varadhan inequality to derive an upper bound on the generalization error.
In the basic IT-bound (Theorem~\ref{naive_it_bound}), shuffling via $U$ leads to randomly altering the training dataset, producing a bound that jointly depends on both model parameters and LVs, thereby entangling $\theta$ and $J$.
This illustrates that a straightforward extension of standard IT analysis is insufficient to isolate the contribution of LVs to generalization, motivating the development of a new analytical framework.

\section{Proposed IT analysis under supersamples and its limitations}
\label{sec_main}
In this section, we first present the results of our generalization analysis for VQ-VAE (Section~\ref{sec_main_CMI_vqvae}). 
We then offer a detailed interpretation of the resulting generalization error bound and discuss its limitations (Section~\ref{subsec:bound_discuss_KL}). 
All corresponding proofs are provided in Appendix~\ref{sec_cmi_main_proofs}.

\subsection{Our supersample setting and result}\label{sec_main_CMI_vqvae}
As discussed in Section~\ref{subsec:limitation_exsisting}, the naive application of the existing supersample setting in IT analysis is insufficient to capture the role of LVs. To address this limitation, we introduce posterior and prior distributions over $J$ that explicitly encode the dependence between the supersample index $U$ and the LVs, on the basis of the approach of \citet{sefidgaran2024minimum}.

To this end, we define the following posterior distributions based on both $\tilde{X}_{U}$ and $\tilde{X}_{\bar{U}}$: $q(\mathbf{J} | \mathbf{e}, \phi, \tilde{X}_{U}) \coloneqq \prod_{m=1}^{n} q(J_m | \mathbf{e}, \phi, \tilde{X}_{m, U_m})$ and $q(\bar{\mathbf{J}} | \mathbf{e}, \phi, \tilde{X}_{\bar{U}}) \coloneqq \prod_{m=1}^{n} q(\bar{J}_m | \mathbf{e}, \phi, \tilde{X}_{m, \bar{U}_m})$.
For notational simplicity, we write $\mathbf{Q}_{\mathbf{J}, U} \coloneqq q(\mathbf{J} | \mathbf{e}, \phi, \tilde{X}_{U})$. 
We then define the following joint distribution to capture the dependence of the LVs on both $\tilde{X}_U$ and $\tilde{X}_{\bar{U}}$: $\mathbf{Q}_{\mathbf{\widetilde{J}}, U} \coloneqq q(\bar{\mathbf{J}} | \mathbf{e}, \phi, \tilde{X}_{\bar{U}}) \cdot q(\mathbf{J} | \mathbf{e}, \phi, \tilde{X}_{U})$.

We consider two types of prior distribution to facilitate the analysis of VQ-VAEs: a \emph{data-independent prior} $\mathbf{P}$ and a \emph{data-dependent prior} $\mathbf{Q}_{\mathbf{\widetilde{J}}}$ defined as 
\begin{align}\label{eq_two_priors}
\mathbf{P}\coloneqq \prod _{m=1}^n \pi(J_m|\mathbf{e},\phi), \quad \textrm{and} \quad \mathbf{Q}_{\mathbf{\widetilde J}}\coloneqq \Ex_U\mathbf{Q}_{\mathbf{\widetilde J},U}= \Ex_Uq(\bar{\mathbf{J}}|\mathbf{e},\phi,\tilde{X}_{\bar{U}})q(\mathbf{J}|\mathbf{e},\phi,\tilde{X}_{U}),
\end{align}
where $\pi(J_m | \mathbf{e}, \phi)$ denotes an \emph{arbitrary} distribution over LVs that is independent of both $\tilde{X}$ and the supersample index $U$. For the data-dependent prior, we adopt the supersample setting specifically tailored to the LVs. The basis for introducing both types of prior is discussed following the main theorem. 
Figure~\ref{fig:two-pdfs} illustrates the distinction in LV dependencies between the conventional supersample setting (as used in Theorem~\ref{naive_it_bound}) and our approach. 
The central idea is to apply supersample-based shuffling to the LVs directly. Under these settings, the following is our main result.
\begin{theorem}\label{reconstructon_gen}
Under Assumption~\ref{asm_bounded} and the supersample setting, we have 
\begin{align}\label{eq_reconstructon_bound1}
\mathrm{gen}(n,\cald)\leq 2\Delta\sqrt{\frac{\Ex_{\tilde{X},U}\Ex_{q(\mathbf{e},\phi|\tilde{X}_U)}(\KLD(\mathbf{Q}_{\mathbf{J}, U}\|\mathbf{P})+\KLD(\mathbf{Q}_{\mathbf{\widetilde J},U}\|\mathbf{Q}_{\mathbf{\widetilde J}}))}{n}}+\frac{\Delta}{\sqrt{n}}.
\end{align}
\end{theorem}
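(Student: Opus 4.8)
The plan is to lift $\mathrm{gen}(n,\cald)$ to the supersample space, split each per‑row loss difference through a single cross term, and then neutralize the two resulting pieces by \emph{two different changes of measure} — one to the data‑dependent prior $\mathbf{Q}_{\mathbf{\tilde{J}}}$ and one to the data‑independent prior $\mathbf{P}$ — each followed by the Donsker--Varadhan inequality and a boundedness (Hoeffding) control of the log‑moment‑generating function. Since $\tilde{X}_{\bar{U}}$ is an i.i.d.\ copy of $\tilde{X}_U$ that is independent of $W=\cala(\tilde{X}_U)$, the standard supersample identity (as in the proof of Theorem~\ref{naive_it_bound}) un‑marginalizes the latent indices and gives
\[
\mathrm{gen}(n,\cald)=\Big|\Ex_{\tilde{X},U}\Ex_{q(W|\tilde{X}_U)}\Ex_{\mathbf{Q}_{\mathbf{\tilde{J}},U}}\Big[\tfrac{1}{n}\sum_{m=1}^{n}\big(l(\tilde{X}_{m,\bar{U}_m},g_\theta(e_{\bar{J}_m}))-l(\tilde{X}_{m,U_m},g_\theta(e_{J_m}))\big)\Big]\Big|,
\]
so it suffices to bound the quantity inside $|\cdot|$; the same argument on its negative closes the absolute value.

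Next I would add and subtract $l(\tilde{X}_{m,\bar{U}_m},g_\theta(e_{J_m}))$ — the \emph{test} point reconstructed through the \emph{training} point's latent index — writing each summand as $A_m+B_m$, where $A_m=l(\tilde{X}_{m,\bar{U}_m},g_\theta(e_{\bar{J}_m}))-l(\tilde{X}_{m,\bar{U}_m},g_\theta(e_{J_m}))$ is a function of the pair $(J_m,\bar{J}_m)$ only, while $B_m=l(\tilde{X}_{m,\bar{U}_m},g_\theta(e_{J_m}))-l(\tilde{X}_{m,U_m},g_\theta(e_{J_m}))$ is a function of $J_m$ only. For the $A$‑part, with $(\tilde{X},U,W)$ fixed, apply Donsker--Varadhan to $\tfrac1n\sum_m A_m$ with the change of measure $\mathbf{Q}_{\mathbf{\tilde{J}},U}\to\mathbf{Q}_{\mathbf{\tilde{J}}}$: this costs $\KLD(\mathbf{Q}_{\mathbf{\tilde{J}},U}\|\mathbf{Q}_{\mathbf{\tilde{J}}})$ plus the log‑MGF of $\tfrac1n\sum_m A_m$ under $\mathbf{Q}_{\mathbf{\tilde{J}}}=\Ex_{U'}\mathbf{Q}_{\mathbf{\tilde{J}},U'}$. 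Here the design of the data‑dependent prior pays off: re‑drawing a fresh supersample index $U'$ makes the two latent indices of row $m$ exchangeable, so conditionally on the fresh indices $A_m$ equals $\varepsilon_m D_m$ with $\varepsilon_m$ a uniform sign (induced by $U'_m$) and $|D_m|\le\Delta$ by Assumption~\ref{asm_bounded}; as the rows are independent, the log‑MGF is $\sum_m\log\Ex[\cosh(\tfrac{\lambda}{n}D_m)]\le\lambda^2\Delta^2/(2n)$ via $\cosh t\le e^{t^2/2}$ — \emph{purely sub‑Gaussian, with no linear term and no dependence on $\theta$}. Taking expectations in the Donsker--Varadhan bound and optimizing $\lambda>0$ yields $\Ex[\tfrac1n\sum_m A_m]\le\Delta\sqrt{2\,\Ex_{\tilde{X},U}\Ex_{q(\mathbf{e},\phi|\tilde{X}_U)}\KLD(\mathbf{Q}_{\mathbf{\tilde{J}},U}\|\mathbf{Q}_{\mathbf{\tilde{J}}})/n}$.

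The hard part will be the $B$‑part and its residual. Since $\tfrac1n\sum_m B_m$ is a function of $\mathbf{J}$ alone, apply Donsker--Varadhan with $\mathbf{Q}_{\mathbf{J},U}\to\mathbf{P}=\prod_m\pi(J_m|\mathbf{e},\phi)$; because $\mathbf{P}$ factorizes over rows, Hoeffding's lemma on $B_m\in[-\Delta,\Delta]$ gives log‑MGF $\le\tfrac{\lambda}{n}\sum_m\Ex_{\pi}[B_m]+\lambda^2\Delta^2/(2n)$. Unlike in the $A$‑part the linear term does not vanish, and the crux is to show that the residual $R:=\Ex_{\tilde{X},U}\Ex_{q(W|\tilde{X}_U)}[\tfrac1n\sum_m\Ex_{\pi}[B_m]]$ is only of order $\Delta/\sqrt{n}$, not $O(1)$. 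With $l$ the squared distance the index‑dependent quadratic terms cancel, so $\Ex_{\pi}[B_m]=\|\tilde{X}_{m,\bar{U}_m}\|^2-\|\tilde{X}_{m,U_m}\|^2-2\langle\tilde{X}_{m,\bar{U}_m}-\tilde{X}_{m,U_m},\bar{r}\rangle$ with $\bar{r}:=\Ex_{\pi(\cdot|\mathbf{e},\phi)}[g_\theta(e_J)]\in\conv(\calx)$ a function of $W$ only; averaging over $m$ turns this into a difference of the empirical means $\hat{\mu}_{\mathrm{te}}=\tfrac1n\sum_m\tilde{X}_{m,\bar{U}_m}$ and $\hat{\mu}_{\mathrm{tr}}=\tfrac1n\sum_m\tilde{X}_{m,U_m}$. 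The norm terms contribute zero in expectation (both columns are marginally $\cald$), and since $\hat{\mu}_{\mathrm{te}}\indep W$ while $\hat{\mu}_{\mathrm{tr}}$ is not, $R$ collapses to (up to sign) $2\,\Ex[\langle\hat{\mu}_{\mathrm{tr}}-\mu_\cald,\bar{r}-\Ex\bar{r}\rangle]$ with $\mu_\cald=\Ex_{X\sim\cald}[X]$, a covariance. By Cauchy--Schwarz this is at most $2\sqrt{\Ex\|\hat{\mu}_{\mathrm{tr}}-\mu_\cald\|^2}\sqrt{\Ex\|\bar{r}-\Ex\bar{r}\|^2}\le\Delta/\sqrt{n}$, where both factors are controlled by Assumption~\ref{asm_bounded} (the empirical mean of i.i.d.\ points concentrates at rate $1/\sqrt{n}$, and $\bar{r}$ lies in a set of diameter $<\Delta^{1/2}$). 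Optimizing $\lambda$ then gives $\Ex[\tfrac1n\sum_m B_m]\le\Delta\sqrt{2\,\Ex_{\tilde{X},U}\Ex_{q(\mathbf{e},\phi|\tilde{X}_U)}\KLD(\mathbf{Q}_{\mathbf{J},U}\|\mathbf{P})/n}+\Delta/\sqrt{n}$, again free of $\theta$.

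Finally, summing the $A$‑ and $B$‑bounds, merging the two square roots via $\sqrt{a}+\sqrt{b}\le\sqrt{2(a+b)}$, and accounting for $|\cdot|$ by the symmetric argument on the negative, gives exactly $\mathrm{gen}(n,\cald)\le 2\Delta\sqrt{\Ex_{\tilde{X},U}\Ex_{q(\mathbf{e},\phi|\tilde{X}_U)}(\KLD(\mathbf{Q}_{\mathbf{J},U}\|\mathbf{P})+\KLD(\mathbf{Q}_{\mathbf{\tilde{J}},U}\|\mathbf{Q}_{\mathbf{\tilde{J}}}))/n}+\Delta/\sqrt{n}$. The two places that really need care are (i) choosing the cross term so that $A_m$ depends on $(J_m,\bar{J}_m)$ only and $B_m$ on $J_m$ only — this is what lets the two changes of measure run independently and what drives $\theta$ out of both KL terms — and (ii) the residual estimate in the $B$‑part, whose $\Delta/\sqrt{n}$ bound is the quantitative heart of the argument and, as the text notes, is precisely the non‑vanishing term that Theorem~\ref{reconstructon_gen_permutation} later removes via the permutation‑symmetric prior.
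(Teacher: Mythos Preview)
Your proposal is correct and follows essentially the same route as the paper's proof: the same add--subtract of the cross term $l(\tilde{X}_{m,\bar{U}_m},g_\theta(e_{J_m}))$, the same two changes of measure (to $\mathbf{Q}_{\tilde{\mathbf{J}}}$ for the $A$-part and to $\mathbf{P}$ for the $B$-part), and the same residual control via the squared-loss expansion and Cauchy--Schwarz on $\langle\hat\mu_{\mathrm{tr}}-\mu_\cald,\bar r\rangle$. The only cosmetic differences are that the paper phrases the symmetrization for the $A$-part via an auxiliary $U''$ and McDiarmid's inequality (where you use an explicit Rademacher sign and Hoeffding), and that the paper expands the squared loss \emph{before} applying Donsker--Varadhan in the $B$-part (where you apply it first and expand inside the residual); both orderings are equivalent here.
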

The upper bound comprises two distinct complexity terms. The first, $\KLD(\mathbf{Q}_{\mathbf{J}, U}\|\mathbf{P})$, captures the \emph{complexity of the LVs}. The second, $\KLD(\mathbf{Q}_{\mathbf{\widetilde{J}}, U} \| \mathbf{Q}_{\mathbf{\widetilde{J}}})$, reflects the complexity of the LVs and the \emph{degree of overfitting when learning parameters $\mathbf{e}$ and $\phi$}, as we will further discuss in Section~\ref{subsec:bound_discuss_KL}.

Consistent with the findings of \citet{sefidgaran2024minimum}, our bound is \emph{independent of the decoder} $g_{\theta}$. 
This indicates that increasing the complexity of $g_{\theta}$ has a limited effect on the generalization performance. 
Our empirical results support this implication. 
Figure~\ref{fig:res_ecmi_kl} shows that adding a single ResBlock---introducing approximately $74,000$ additional parameters---has a negligible effect on the generalization gap.
Furthermore, Table~\ref{tab:app_fig2_train_loss} in Appendix~\ref{app:exp_settings} shows the corresponding training losses. 
For larger sample sizes ($n \ge 1000$), the training loss tends to decrease as the decoder becomes more complex, confirming its enhanced ability to fit the training data. 
Critically, despite this improved expressiveness, the generalization gap remains largely unaffected. This strongly suggests that the key to improving generalization lies not in the decoder's capacity, but in the complexity of the encoder and the LVs. Further experiments across various datasets and decoder architectures in Appendix~\ref{app:exp_settings} reinforce this observation.

We emphasize that our results \emph{do not imply that the decoder is unimportant}. 
Although our generalization bound is independent of decoder complexity, a sufficiently expressive decoder is still required to fit the training data.
Otherwise, the test loss may remain high since \emph{$\text{Test Loss} \leq \text{Training Loss} + \text{Generalization Gap}$}.
Our analysis specifically focuses on bounding the generalization gap, under the implicit assumption that the decoder can adequately fit the training data. 
In practice, this suggests that improving generalization in VQ-VAEs hinges more on careful encoder design, since overly complex encoders can increase the KL divergence of the LVs. 
We discuss this point further in Section~\ref{sec_conclusion}.

\begin{figure*}[t!]
  \centering
  \includegraphics[width=1.\textwidth]{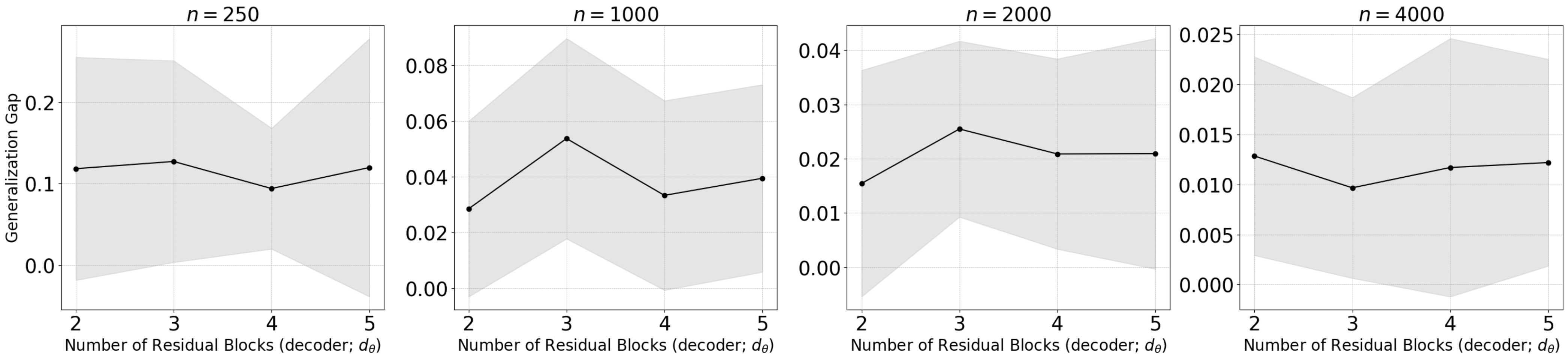}
    \caption{The behavior of the generalization gap on the MNIST dataset when increasing the number of residual blocks to enlarge the decoder dimension $d_\theta$ ($K=128$, $d_{z}=64$). See Appendix~\ref{app:exp_settings} for detailed experimental settings. 
    }
    \label{fig:res_ecmi_kl}
\end{figure*}

\paragraph{Why two types of prior are required:}
Our proof reveals that isolating the LVs from the decoder parameter and obtaining a decoder-independent generalization bound requires the prior to satisfy two essential conditions: (A) {\bf it allows random shuffling without changing the LV distribution}, and (B) {\bf it supports a swap between training and test samples to assess overfitting}.
From this perspective, the shuffling induced by $U$ in the basic IT-bound (Theorem~\ref{naive_it_bound}) satisfies condition (B) but violates condition (A), as it changes the distribution of LVs.
To address this issue, the proof of Theorem~\ref{reconstructon_gen} decomposes the generalization gap into two components: the term associated with condition (A), which is controlled using a data-independent prior $\mathbf{P}$, and the term associated with condition (B), which is controlled using a data-dependent prior $\mathbf{Q}_{\tilde{\mathbf{J}}}$.
By combining both priors, we can derive the final upper bound in Eq.~\eqref{eq_reconstructon_bound1}.
For a detailed explanation, see Appendices~\ref{app_intuiton} and~\ref{proof_vqvae_reconstruction_loss}.

\begin{remark}
When $K = 1$, VQ-VAEs map all input data to the same LV, effectively estimating the low-dimensional mean of the data distribution. In this case, the generalization error should not depend on the decoder. It is straightforward to show---without using our IT-based analysis---that $\mathrm{gen}(n, \cald) = O(1/\sqrt{n})$. Notably, our bound in Eq.~\eqref{eq_reconstructon_bound1} correctly reflects this behavior, as the square root term vanishes when $K = 1$ (see Appendix~\ref{app_K_1_naive} for details).
\end{remark}

\subsection{Further analyses of our bound and limitations on convergence}
\label{subsec:bound_discuss_KL}
In this section, we further analyze the properties of the two KL divergence terms in Theorem~\ref{reconstructon_gen} and discuss their asymptotic behavior as the sample size $n$ increases.

\paragraph{Regarding $\KLD(\mathbf{Q}_{\mathbf{\widetilde J},U}\|\mathbf{Q}_{\mathbf{\widetilde J}}))$:}
We can derive the following upper bound:
\begin{align}\label{eq_cmi_super}
\Ex_{\tilde{X},U}\Ex_{q(\mathbf{e},\phi|\tilde{X}_U)}\KLD(\mathbf{Q}_{\mathbf{\widetilde J},U}\|\mathbf{Q}_{\mathbf{\widetilde J}}) 
    &\leq I(\mathbf{e},\phi;U|\tilde{X})+I(\tilde{\mathbf{J}};U|\mathbf{e},\phi,\tilde{X}).
\end{align}
Since $\tilde{X}_{U} = S$, the data processing inequality implies that $I(\mathbf{e}, \phi; U | \tilde{X}) \leq I(\mathbf{e}, \phi; S)$. 
This quantity captures \emph{how much information about the training data is retained in the encoder}, thereby reflecting the degree of overfitting of the encoder parameters. The term $I(\tilde{\mathbf{J}}; U | \mathbf{e}, \phi, \tilde{X})$ can be viewed as a regularization term for the LVs, analogous to the IB hypothesis; see Appendix~\ref{app_proof_IT_IB} for further details.

Next, we investigate whether each term in Eq.~\eqref{eq_cmi_super} exhibits asymptotic convergence as the sample size $n$ increases, which is a key requirement for a valid generalization error bound. 
We begin by analyzing the asymptotic behavior of $I(\tilde{\mathbf{J}}; U | \mathbf{e}, \phi, \tilde{X})$.
\begin{lemma}\label{eq_natarajan_cmi}
    Let the posterior distribution over $J$ be deterministic as defined in Eq.~\eqref{deterministic_encoder}, and we denote the composition of this mapping with the encoder $f_\phi$ by $f'_{\mathbf{e},\phi}: \calx \to [K]$.
    If the function class to which $f'_{\mathbf{e},\phi}$ belongs has a finite Natarajan dimension, then $I(\tilde{\mathbf{J}}; U | \mathbf{e}, \phi, \tilde{X})/n = \mathcal{O}(\log n / n)$.
\end{lemma}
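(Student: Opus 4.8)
The plan is to bound the conditional mutual information $I(\tilde{\mathbf{J}}; U \mid \mathbf{e}, \phi, \tilde{X})$ by a counting argument over the set of labelings that the classifier $f'_{\mathbf{e},\phi}$ can induce on the supersample. First, I would note that conditioned on $(\mathbf{e},\phi,\tilde{X})$, the random variable $\tilde{\mathbf{J}}$ is a deterministic function of the data: $\tilde{J}_{m,b} = f'_{\mathbf{e},\phi}(\tilde{X}_{m,b})$ for each $m \in [n]$ and $b \in \{0,1\}$. Therefore $I(\tilde{\mathbf{J}}; U \mid \mathbf{e},\phi,\tilde{X}) \le H(\tilde{\mathbf{J}} \mid \mathbf{e},\phi,\tilde{X})$, and in turn this entropy is at most $\log N$, where $N$ is the number of distinct values the matrix $\tilde{\mathbf{J}} \in [K]^{n\times 2}$ can take as $f'_{\mathbf{e},\phi}$ ranges over its function class, with the $2n$ points of the supersample held fixed. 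Equivalently, $N$ is the number of distinct restrictions (to a set of $2n$ points) of functions in the class mapping $\calx \to [K]$.

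Next, I would invoke the generalized Sauer–Shelah lemma for multiclass function classes: if a class of functions into $[K]$ has Natarajan dimension $d_N < \infty$, then the number of distinct labelings it can realize on any fixed set of $M$ points is at most $(MK^2)^{d_N}$ (up to standard polynomial constants — see e.g. Haussler–Long or Shalev-Shwartz–Ben-David). Applying this with $M = 2n$ gives $N \le (2nK^2)^{d_N}$, hence
\begin{align}
I(\tilde{\mathbf{J}}; U \mid \mathbf{e},\phi,\tilde{X}) \le \log N \le d_N \log(2nK^2) = \bigO(\log n),
\end{align}
where the implicit constant absorbs $d_N$ and $K$, both treated as fixed. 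Dividing by $n$ yields the claimed $\bigO(\log n / n)$ rate.

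A slightly more careful treatment would keep the expectation over $(\mathbf{e},\phi,\tilde{X})$ explicit: the bound $\log N$ holds for \emph{every} realization of the conditioning variables (since $N$ depends only on the function class and the number of points, not on which points), so taking expectations preserves the inequality. I should also double-check the direction of the information-theoretic step — we want $I(\tilde{\mathbf{J}}; U \mid \dots) \le H(\tilde{\mathbf{J}} \mid \dots)$, which follows because mutual information is bounded by the entropy of either argument and conditioning only reduces entropy, so $H(\tilde{\mathbf{J}} \mid \mathbf{e},\phi,\tilde{X}) \le \log|\mathrm{range}|$.

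The main obstacle is not conceptual but bookkeeping: pinning down exactly which version of the multiclass Sauer–Shelah bound to cite and making sure the exponent and polynomial factors are stated consistently with whatever convention the paper adopts for the Natarajan dimension. There is some variation in the literature (bounds of the form $(eMK/d_N)^{d_N}$, $M^{d_N}K^{2d_N}$, etc.), but all of them are polynomial in $M$ and $K$ with exponent $d_N$, so they all deliver the same $\bigO(\log n/n)$ conclusion once $K$ and $d_N$ are held fixed. A secondary point worth a remark is that this argument genuinely uses the determinism of the encoder–quantizer map in Eq.~\eqref{deterministic_encoder}; for the stochastic posterior of Eq.~\eqref{stochastic_encoder} the entropy bound $H(\tilde{\mathbf{J}}\mid\dots) \le n\log 2$ still holds trivially (one bit per coordinate from $U$... actually from the data), but one no longer gets the $\log n$ improvement for free, which is presumably why the lemma is stated only for the deterministic case.
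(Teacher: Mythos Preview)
Your approach is essentially the paper's: bound the CMI by the conditional entropy of $\tilde{\mathbf{J}}$, then bound that entropy by the logarithm of the multiclass growth function via the Natarajan--Sauer--Shelah lemma (the paper cites Theorem~8 of \citet{harutyunyan2021} and states the explicit bound $d_K\log\bigl(\binom{K}{2}\tfrac{2en}{d_K}\bigr)$). One small slip: in the paper's convention $\tilde{\mathbf{J}}=(\bar{\mathbf{J}},\mathbf{J})$ is indexed by test/train rather than by the raw supersample columns, so it \emph{does} depend on $U$ even after conditioning on $(\mathbf{e},\phi,\tilde X)$ --- under your elementwise description $\tilde J_{m,b}=f'_{\mathbf{e},\phi}(\tilde X_{m,b})$ the CMI would already be zero and no Natarajan argument would be needed. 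The paper handles this by first relaxing $H[\tilde{\mathbf{J}}\mid \mathbf{e},\phi,\tilde X]\le H[\tilde{\mathbf{J}}\mid\tilde X]$ and only then invoking the growth-function count over the whole class; with that adjustment your argument and the paper's coincide.
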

This result implies that if the encoder is appropriately regularized, the quantity $I(\tilde{\mathbf{J}}; U | \mathbf{e}, \phi, \tilde{X})/n$ converges asymptotically to zero.
We also empirically evaluated this term in practical settings (see Appendix~\ref{app:exp_settings}) and observed that it indeed decreases as the sample size $n$ increases.

Next, the CMI term $I(\mathbf{e}, \phi; U | \tilde{X})$ has been extensively analyzed under the standard supersample setting of IT analysis~\citep{steinke20a}. 
Prior works have established its asymptotic convergence through various approaches, including algorithmic stability~\citep{steinke20a}, analyses of specific optimization methods such as stochastic gradient descent (SGD)~\citep{wang2022on} and stochastic gradient Langevin dynamics (SGLD)~\citep{futami2023timeindependent}, and complexity-based arguments using covering numbers~\citep{Xu2017}, all showing that $I(\mathbf{e}, \phi; U | \tilde{X})/n \to 0$ as $n \to \infty$.
In conclusion, the term $\Ex_{\tilde{X},U}\Ex_{q(\mathbf{e},\phi|\tilde{X}U)}\KLD(\mathbf{Q}_{\mathbf{\widetilde J},U}\|\mathbf{Q}_{\mathbf{\widetilde J}}))/n$ can be shown to converge asymptotically under certain algorithmic conditions. For a detailed discussion, see Appendix~\ref{app_eCMI}.

\paragraph{Regarding $\KLD(\mathbf{Q}_{\mathbf{ J}, U}\|\mathbf{P})$:}
This term can be rewritten as $\KLD(\mathbf{Q}_{\mathbf{ J}, U}\|\mathbf{P})/n=\frac{1}{n}\sum_{m=1}^n\KLD(q(J_m|\mathbf{e},\phi,S_m)\|\pi(J_m|\mathbf{e},\phi))$,
where the training data is selected via $U$, i.e., $\tilde{X}_U = S = (S_1,\dots,S_n)$.
This quantity corresponds to the \emph{empirical KL divergence}, which also appears in the analysis of \citet{mbacke2024statistical}, and reflects the complexity of the LVs. Such a term is commonly used as the regularization term appearing in many VAE training procedures~\citep{kingma2013auto,higgins2017beta,takida22a}.

A key factor in minimizing $\KLD(\mathbf{Q}_{\mathbf{J}, U}\|\mathbf{P})$ is the choice of the prior $\mathbf{P}$.
In VQ-VAEs, a uniform distribution is typically adopted~\cite{takida22a}; however,  is this choice optimal for minimizing the KL divergence?
The following lemma addresses this question.
\begin{lemma}\label{lemm_optimal_prior}
Assume that for any fixed training dataset $S = (s_1, \dots, s_n)$ and any permutation $\tau$, the posterior satisfies permutation invariance, i.e., $q(\mathbf{e}, \phi, \theta | S) = q(\mathbf{e}, \phi, \theta | S^\tau)$, where $S^\tau = (s_{\tau_1}, \dots, s_{\tau_n})$. Then, the optimal prior that minimizes $\Ex_{S} \Ex_{q(\mathbf{e},\phi | S)} \KLD(\mathbf{Q}_{\mathbf{J}, U} \| \mathbf{P})$ is given by $\mathbf{P}^* = \prod_{m=1}^n \Ex_{q(S_m | \mathbf{e}, \phi)} q(J_m | \mathbf{e}, \phi, S_m)$. Moreover, under this prior, we obtain $\Ex_{S}\Ex_{q(\mathbf{e},\phi|S)}\KLD(\mathbf{Q}_{\mathbf{ J}, U}\|\mathbf{P}^*) = \sum_m I(J_m;S_m|\mathbf{e},\phi)$.
\end{lemma}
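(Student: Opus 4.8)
The plan is to recognize the minimization $\min_{\mathbf{P}} \Ex_S \Ex_{q(\mathbf{e},\phi|S)} \KLD(\mathbf{Q}_{\mathbf{J},U}\|\mathbf{P})$ as a standard information-projection problem. Since $\mathbf{P} = \prod_{m=1}^n \pi(J_m|\mathbf{e},\phi)$ factorizes and is required to be independent of $\tilde{X}$ and $U$, I would first expand the KL divergence termwise, $\KLD(\mathbf{Q}_{\mathbf{J},U}\|\mathbf{P}) = \sum_{m=1}^n \KLD(q(J_m|\mathbf{e},\phi,S_m)\|\pi(J_m|\mathbf{e},\phi))$, using the chain rule for the product distributions. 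Then, after taking $\Ex_S\Ex_{q(\mathbf{e},\phi|S)}$, I would add and subtract $\log \bar q(J_m|\mathbf{e},\phi)$ inside the logarithm, where $\bar q(J_m|\mathbf{e},\phi) \coloneqq \Ex_{q(S_m|\mathbf{e},\phi)} q(J_m|\mathbf{e},\phi,S_m)$ is the candidate optimal prior, to obtain the decomposition
\begin{align}
\Ex_S\Ex_{q(\mathbf{e},\phi|S)}\KLD(q(J_m|\mathbf{e},\phi,S_m)\|\pi(J_m|\mathbf{e},\phi)) = \Ex_S\Ex_{q(\mathbf{e},\phi|S)}\KLD(q(J_m|\mathbf{e},\phi,S_m)\|\bar q(J_m|\mathbf{e},\phi)) + \Ex_{\mathbf{e},\phi}\KLD(\bar q(J_m|\mathbf{e},\phi)\|\pi(J_m|\mathbf{e},\phi)).
\end{align}
The cross-term vanishes because $\Ex_{S_m|\mathbf{e},\phi} q(J_m|\mathbf{e},\phi,S_m) = \bar q(J_m|\mathbf{e},\phi)$ by definition, so the expectation of $\log(\bar q/\pi)$ over the posterior of $J_m$ reduces to an expectation under $\bar q$ itself. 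The second term is a KL divergence and hence nonnegative, minimized (to zero) precisely by $\pi = \bar q$, which gives $\mathbf{P}^* = \prod_m \bar q(J_m|\mathbf{e},\phi)$.

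A subtlety I need to handle carefully is the joint distribution over $(\mathbf{e},\phi)$ and $S_m$: I would invoke the permutation-invariance hypothesis $q(\mathbf{e},\phi,\theta|S) = q(\mathbf{e},\phi,\theta|S^\tau)$ to argue that the marginal joint law of $(\mathbf{e},\phi,S_m)$ is the same for every index $m$, so the $\bar q(J_m|\mathbf{e},\phi)$ are genuinely a single well-defined conditional, consistent with the $U$-independence required of the prior. This is what lets the per-coordinate argument assemble into a legitimate product prior of the required form. Once $\mathbf{P}^*$ is established, the residual value is $\Ex_S\Ex_{q(\mathbf{e},\phi|S)}\sum_m \KLD(q(J_m|\mathbf{e},\phi,S_m)\|\bar q(J_m|\mathbf{e},\phi))$, and I would recognize each summand, after using permutation invariance to write $\Ex_S\Ex_{q(\mathbf{e},\phi|S)}[\cdot] = \Ex_{\mathbf{e},\phi}\Ex_{q(S_m|\mathbf{e},\phi)}[\cdot]$, as exactly the conditional mutual information $I(J_m;S_m|\mathbf{e},\phi)$ — the average KL from the conditional $q(J_m|\mathbf{e},\phi,S_m)$ to its $S_m$-marginal $\bar q(J_m|\mathbf{e},\phi)$ is the textbook definition of conditional MI.

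The main obstacle, I expect, is bookkeeping around which variables the prior is allowed to depend on and making the measure-theoretic identification of the joint law of $(\mathbf{e},\phi,S_m)$ precise: the prior $\pi(J_m|\mathbf{e},\phi)$ may depend on $\mathbf{e},\phi$ but must not depend on $\tilde{X}$ or $U$, whereas $\bar q(J_m|\mathbf{e},\phi) = \Ex_{q(S_m|\mathbf{e},\phi)}q(J_m|\mathbf{e},\phi,S_m)$ depends on $\mathbf{e},\phi$ through the posterior-induced conditional of $S_m$ given $(\mathbf{e},\phi)$ — so I must confirm this candidate is admissible, i.e., that it does not secretly encode forbidden dependence on the realized training set. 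Permutation invariance is exactly the lever that makes this work: it guarantees $q(S_m|\mathbf{e},\phi)$ is the same conditional for all $m$ and is a function only of the learned parameters, not of the sample ordering, so $\bar q$ is a bona fide data-independent (in the required sense) prior. Everything else is a routine application of the Gibbs/information-projection inequality and the definition of conditional mutual information.
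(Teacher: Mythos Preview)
Your proposal is correct and follows essentially the same approach as the paper: both use the information-projection decomposition $\KLD(\mathbf{Q}_{\mathbf{J},U}\|\mathbf{P}') = \KLD(\mathbf{Q}_{\mathbf{J},U}\|\mathbf{P}^*) + \KLD(\mathbf{P}^*\|\mathbf{P}')$ (the cross-term vanishing by the marginal definition of $\bar q$), and both invoke permutation invariance to establish that $q(S_m|\mathbf{e},\phi)$ is the same for all $m$, which makes $\bar q$ a legitimate data-independent prior and identifies the residual as $\sum_m I(J_m;S_m|\mathbf{e},\phi)$. The only cosmetic difference is the order of presentation---the paper first establishes the permutation-invariance consequence $q(S_1|\mathbf{e},\phi)=\dots=q(S_n|\mathbf{e},\phi)$ (citing \citet{Li2020On}) and then does the decomposition, whereas you lead with the decomposition and address admissibility afterward.
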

This connection provides insight into the choice of prior distributions in practical implementations---for instance, encouraging the use of mixture priors similar to the VampPrior~\citep{tomczak2018vae} (see Appendix~\ref{discuss_proof_marginal} for further discussion).
We also note that the assumption in Lemma~\ref{lemm_optimal_prior}, namely permutation invariance of the posterior, is standard in the analysis of randomized algorithms~\citep{Li2020On}, and is satisfied by commonly used training methods such as SGD and SGLD~\citep{welling2011bayesian}.

Next, we present the asymptotic behavior of the empirical KL divergence term as follows:
\begin{lemma}\label{lem_emp_order}
Suppose the assumptions in Lemma~\ref{eq_natarajan_cmi} hold.
Then, even under the optimal prior $\mathbf{P}^{*}$ given in Lemma~\ref{lemm_optimal_prior}, we have $\Ex_{S}\Ex_{q(\mathbf{e},\phi|S)}\KLD(\mathbf{Q}_{\mathbf{J}, U} \| \mathbf{P}^{*})/n = \mathcal{O}(1)$.
\end{lemma}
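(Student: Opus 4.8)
The plan is to collapse the left-hand side to a sum of $n$ mutual informations that is trivially $\mathcal{O}(n)$, divide by $n$, and then argue that none of the standing hypotheses can push this below $\Theta(n)$, so the $\mathcal{O}(1)$ rate is unimprovable. First I would invoke Lemma~\ref{lemm_optimal_prior}: the assumptions of Lemma~\ref{eq_natarajan_cmi} fix a deterministic posterior over $J$, and together with the permutation invariance of the randomized algorithm (the same hypothesis Lemma~\ref{lemm_optimal_prior} uses, satisfied by SGD/SGLD) this yields the exact identity $\Ex_S\Ex_{q(\mathbf{e},\phi\mid S)}\KLD(\mathbf{Q}_{\mathbf{J},U}\|\mathbf{P}^*)=\sum_{m=1}^n I(J_m;S_m\mid\mathbf{e},\phi)$. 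Since each $J_m$ takes values in $[K]$, we have $I(J_m;S_m\mid\mathbf{e},\phi)\le H(J_m\mid\mathbf{e},\phi)\le\log K$, so the quantity divided by $n$ is at most $\log K=\mathcal{O}(1)$. (The same bound follows without Lemma~\ref{lemm_optimal_prior} from $\mathbf{P}^*$ being a minimizer over data-independent priors: comparing against the uniform prior over $[K]$, each per-coordinate term equals $\log K-H(q(J_m\mid\mathbf{e},\phi,S_m))\le\log K$.) This already establishes the stated claim.

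The substantive content—showing this rate cannot be sharpened to $o(1)$, in contrast with $I(\tilde{\mathbf{J}};U\mid\mathbf{e},\phi,\tilde{X})/n=\mathcal{O}(\log n/n)$ under the \emph{same} assumptions—I would establish by a single explicit instance. Take $\cald$ to be an equal-weight mixture of $K$ clusters with inter-cluster distance much larger than intra-cluster diameter, and let the encoder induce the nearest-codebook-point assignment $f'_{\mathbf{e},\phi}$, drawn from a class whose Natarajan dimension is finite (controlled by $K$ and $d_z$, so Lemma~\ref{eq_natarajan_cmi} genuinely applies). A short consistency argument shows that for $n$ large the squared-reconstruction-loss minimizer places distinct codebook entries in distinct clusters; hence, given $(\mathbf{e},\phi)$, $J_m$ is exactly the (uniform) cluster label of $S_m$, so $J_m$ is determined by $S_m$ and $I(J_m;S_m\mid\mathbf{e},\phi)=H(J_m\mid\mathbf{e},\phi)=\log K$ for every $m$ and every $n$. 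The quantity then equals $\log K$, so combined with the upper bound it is $\Theta(1)$, confirming that the empirical KL term does not vanish even under the optimal prior.

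I expect the main obstacle to be the tightness construction, not the upper bound. One must simultaneously (i) keep the encoder class rich enough to realize the cluster-separating quantizer while remaining Natarajan-bounded, so that the hypotheses of Lemma~\ref{eq_natarajan_cmi} are truly in force and the contrast is meaningful, and (ii) control the \emph{data-dependent} codebook—i.e., argue that the trained $\mathbf{e}$ spreads its $K$ entries across the clusters rather than collapsing—which requires a brief argument that, under Assumption~\ref{asm_bounded} and well-separated data, the reconstruction-loss-optimal $K$-point quantizer assigns one entry per cluster for $n$ large. A secondary point is that the two lemmas invoked rest on slightly different hypotheses (deterministic posterior for Lemma~\ref{eq_natarajan_cmi}; permutation invariance of $q(\mathbf{e},\phi,\theta\mid S)$ for Lemma~\ref{lemm_optimal_prior}), so the statement should be read with both assumptions in place, as is standard in the randomized-algorithm literature cited here.
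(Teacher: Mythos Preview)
Your upper-bound argument is correct and matches the paper's proof essentially line for line: invoke Lemma~\ref{lemm_optimal_prior} to rewrite the quantity as $\sum_{m=1}^n I(J_m;S_m\mid\mathbf{e},\phi)$, then use the deterministic posterior from Lemma~\ref{eq_natarajan_cmi} to get $I(J_m;S_m\mid\mathbf{e},\phi)=H(J_m\mid\mathbf{e},\phi)\le\log K$, hence the average is $\mathcal{O}(1)$. Your alternative route via the uniform prior is also valid and gives the same bound.

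The one difference is emphasis. The lemma as stated only claims $\mathcal{O}(1)$, and the paper's formal proof stops at the upper bound; the non-vanishing of the term is argued only informally in the main text (via the observation that $I(J_m;S_m\mid\mathbf{e},\phi)\to 0$ would make the encoder's output independent of the input, contradicting learning). Your explicit cluster construction goes further than the paper does formally, supplying a concrete $\Theta(1)$ instance under the same hypotheses. This is a genuine addition, not required for the stated claim but useful for the contrast with Lemma~\ref{eq_natarajan_cmi}. If you keep it, the points you flag as obstacles are the right ones to worry about, especially ensuring the trained codebook actually separates the clusters; but for the lemma itself, your first paragraph already suffices.
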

This result indicates that asymptotic convergence cannot be achieved, even when using the optimal prior $\mathbf{P}^{*}$, which minimizes $\KLD(\mathbf{Q}_{\mathbf{J}, U} \| \mathbf{P})$, to regularize the complexity of the encoder.
Our empirical results provide validation for this theoretical finding.
As shown in Figure~\ref{fig:res_corr_mnist} (left and middle panels), the first KL term, $\KLD(\mathbf{Q}_{\mathbf{J},U}\|\mathbf{P})/n$, does not decrease as the sample size $n$ increases, confirming the behavior predicted by Lemma~\ref{lem_emp_order} (see Appendix~\ref{app:add_exp_res} for additional experimental results).
Furthermore, the right two panels of Figure~\ref{fig:res_corr_mnist} illustrate the relationship between these terms.
The second KL term exhibits a consistent positive correlation ($r\approx 0.46$-$0.60$) with the generalization gap across all tested decoder complexities.
This suggests that the second KL term is the component that effectively captures generalization behavior. Conversely, the non-converging first KL term, $\KLD(\mathbf{Q}_{\mathbf{J},U}\|\mathbf{P})/n$, shows a negative correlation, indicating it does not track generalization performance. This experiment empirically justifies our motivation to introduce the new permutation symmetric setting in Section~\ref{sec_permutation_bound} to eliminate this non-converging and poorly correlated term.

In the supervised learning context, it has similarly been observed that empirical KL terms analogous to $\KLD(\mathbf{Q}_{\mathbf{J}, U} \| \mathbf{P})/n$ do not necessarily converge, even for models that generalize well~\cite{Geiger2019, sefidgaran2024minimum}.
Our findings are consistent with these results.
\begin{figure*}[t]
  \centering
  \includegraphics[width=1.\textwidth]{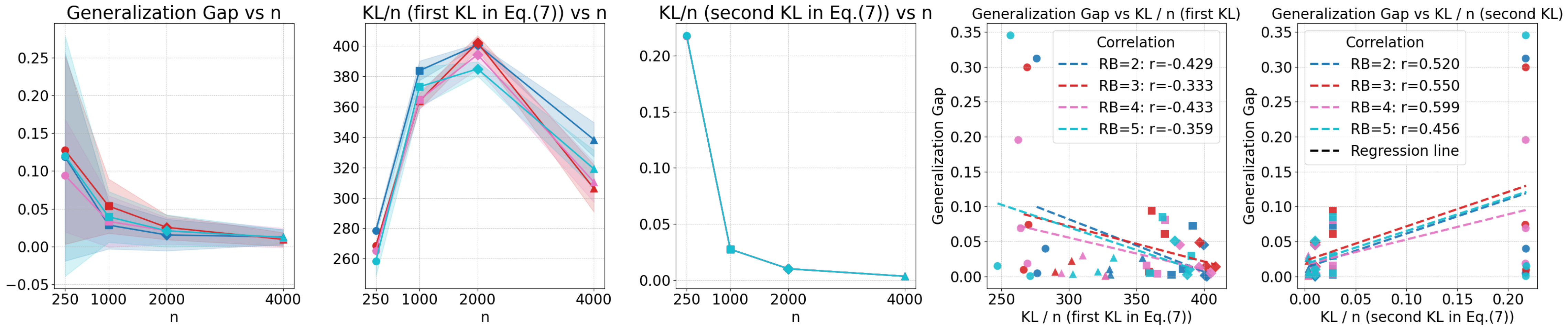}
    \caption{
    The behavior of the generalization gap and the two KL terms from Eq.~\eqref{eq_reconstructon_bound1} on the MNIST dataset ($K=128$, $d_{z}=64$).
    The three leftmost panels show the asymptotic behavior of the generalization gap, the first KL term, and the second KL term as a function of sample size $n$. The two rightmost panels show scatter plots correlating the generalization gap with the first KL term (fourth panel) and the second KL term (fifth panel). 
    In these plots, the color indicates the number of decoder Residual Blocks (RB=2, 3, 4, or 5) and the marker shape indicates the sample size $n$. (Circle for $n=250$, Square for $n=1000$, Diamond for $n=2000$, and Triangle for $n=4000$).
    }
    \label{fig:res_corr_mnist}
\end{figure*}

\begin{remark}
    Even when the posterior of $J$ is defined by Eq.~\eqref{stochastic_encoder}, a comparable upper bound on the KL regularization term can still be derived by analyzing the encoder’s complexity via metric entropy. For further details, see Section~\ref{sec_metric_entropy} and Appendix~\ref{sec_natarajan_dim_margin}.
\end{remark}

\section{Proposed IT analysis under the new permutation symmetric setting}\label{sec_permutation_bound}
The observations presented in the previous section motivate the derivation of a generalization error bound that avoids explicit dependence on $\mathrm{KL}(\mathbf{Q}_{\mathbf{J}, U} \| \mathbf{P})$.
We conjecture that the appearance of this term in Theorem~\ref{reconstructon_gen} arises from a fundamental limitation of the supersample setting, which necessitates the use of a data-independent prior $\mathbf{P}$ (as defined in Eq.~\eqref{eq_two_priors}) to satisfy the necessary conditions (A) and (B) described in Section~\ref{sec_main_CMI_vqvae}.
To overcome this limitation, in this section, we introduce an extension of the supersample framework---namely, a novel \emph{permutation symmetric setting}.
This new setting enables the construction of a data-dependent prior that satisfies both conditions simultaneously, thereby yielding a generalization error bound that achieves asymptotic convergence. 
All the proofs of this section are provided in Appendix~\ref{app_permutation_main}.

\subsection{Permutation symmetric setting}\label{sec_permutations}
To simultaneously satisfy the two conditions in Section~\ref{sec_main_CMI_vqvae}, we propose randomly shuffling all $2n$ data points in $\tilde{X}$ using a uniform distribution and taking their expectation as the data-dependent prior distribution. By definition, this distribution is permutation-invariant, thereby satisfying conditions (A) and (B), allowing us to obtain the improved bound. 

Formally, let us denote a random permutation of $[2n]$ as $\mathbf{T}=\{T_1,\dots,T_{2n}\}$, where each permutation appears with uniform probability, $P(\mathbf{T})=1/(2n)!$. Given a supersample $\tilde{X}=(\tilde X_{1},\dots, \tilde X_{2n})\in\calx^{2n}$, a set of $2n$ RVs drawn i.i.d. from $\cald$, we reorder the samples using $\mathbf{T}$ expressed as $\tilde X_{\mathbf{T}}=(\tilde X_{T_1},\dots,\tilde X_{T_{2n}})$. The first $n$ samples $(\tilde X_{T_1},\dots, \tilde X_{T_{n}})$ are used for the test dataset and the remaining $n$ samples $( \tilde X_{T_{n+1}},\dots,\tilde X_{T_{2n}})$ are used for the training dataset. We further express $\mathbf{T}=\{\mathbf{T}_0,\mathbf{T}_1\}$, and $\tilde X_{\mathbf{T}_0}=(\tilde X_{T_1},\dots,\tilde X_{T_{n}})$ and $\tilde X_{\mathbf{T}_1}=(\tilde X_{T_{n+1}},\dots,\tilde X_{T_{2n}})$ represent the test and training datasets,  respectively.

Given $\tilde{X}$ and $\mathbf{T}$, we define the posterior distributions over the LVs of the test and training data, respectively, as $q(\bar{\mathbf{J}}|\mathbf{e},\phi,\tilde{X}_{\mathbf{T}_0})\coloneqq \prod_{m=1}^n q(\bar{J}_m|\mathbf{e},\phi,\tilde{X}_{T_{m}})$, $ q(\mathbf{J}|\mathbf{e},\phi,\tilde{X}_{\mathbf{T_1}})\coloneqq \prod_{m=1}^{n}q(J_{m}|\mathbf{e},\phi,\tilde{X}_{T_{n+m}})$. We then define the joint posterior distribution as 
$\mathbf{Q}_{\mathbf{\widetilde J},\mathbf{T}}\coloneqq q(\bar{\mathbf{J}}|\mathbf{e},\phi,\tilde{X}_{\mathbf{T}_0})q(\mathbf{J}|\mathbf{e},\phi,\tilde{X}_{\mathbf{T}_1})$. 

Finally, we define our new data-dependent prior as
\begin{align}\label{eq_permutation_prior}
    \mathbf{Q}_{\mathbf{\widetilde J}}\coloneqq \Ex_{\mathbf{T}}\mathbf{Q}_{\mathbf{\widetilde J},\mathbf{T}}=\Ex_{\mathbf{T}}q(\bar{\mathbf{J}}|\mathbf{e},\phi,\tilde{X}_{\mathbf{T}_0})q(\mathbf{J}|\mathbf{e},\phi,\tilde{X}_{\mathbf{T}_1}).
\end{align}
We refer to these settings as {\bf the permutation symmetric (supersample)  setting}. The following is our main result.
\begin{theorem}\label{reconstructon_gen_permutation}
Under Assumptions~\ref{asm_bounded} and  the permutation symmetric setting, we have 
\begin{align}\label{eq_reconstructon_bound2_permutation}
  \mathrm{gen}(n,\cald) \leq 3\Delta\sqrt{\frac{\Ex_{\tilde{X},\mathbf{T}}\Ex_{q(\mathbf{e},\phi|\tilde{X}_{\mathbf{T}_1})}\KLD(\mathbf{Q}_{\mathbf{\widetilde J},\mathbf{T}}\|\mathbf{Q}_{\mathbf{\widetilde J}})}{n}}+ \frac{\Delta}{\sqrt{n}}.
\end{align}
\end{theorem}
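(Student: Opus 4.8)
The plan is to run the change-of-measure argument behind Theorem~\ref{reconstructon_gen}, but using the \emph{single} permutation-symmetric prior $\mathbf{Q}_{\mathbf{\tilde J}}$ of Eq.~\eqref{eq_permutation_prior}, which, being invariant under all permutations of the $2n$ indices, meets conditions (A) and (B) of Section~\ref{sec_main_CMI_vqvae} simultaneously, so only one KL term survives. First I would symmetrize: since $\tilde X$ is an i.i.d.\ vector of $2n$ points and $\mathbf T$ is an independent uniform permutation of $[2n]$, the reordered sequence is again i.i.d., so $\tilde X_{\mathbf T_0}$ and $\tilde X_{\mathbf T_1}$ are two independent $\cald^{n}$ vectors, and in particular $\tilde X_{\mathbf T_0}$ is independent of $W\sim q(W\mid\tilde X_{\mathbf T_1})$. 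Hence the expected test loss equals $\Ex_{\tilde X,\mathbf T}\Ex_{q(W\mid\tilde X_{\mathbf T_1})}[\tfrac1n\sum_m l_0(W,\tilde X_{T_m})]$ and the empirical training loss equals the analogous average over $\tilde X_{T_{n+m}}$. Expanding $l_0$ through the latents, $l_0(W,x)=\Ex_{q(J\mid\mathbf e,\phi,x)}[l(x,g_\theta(e_J))]$, gives $\mathrm{gen}(n,\cald)=\bigl|\Ex_{\tilde X,\mathbf T}\Ex_{q(W\mid\tilde X_{\mathbf T_1})}\Ex_{\mathbf Q_{\mathbf{\tilde J},\mathbf T}}[g]\bigr|$ with the score
\begin{align}
g=g(W,\tilde{\mathbf J},\tilde X,\mathbf T):=\tfrac1n\sum_{m=1}^{n}\bigl(l(\tilde X_{T_m},g_\theta(e_{\bar J_m}))-l(\tilde X_{T_{n+m}},g_\theta(e_{J_m}))\bigr),
\end{align}
which lies in $[-\Delta,\Delta]$ by Assumption~\ref{asm_bounded}. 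Crucially $g$ depends on the decoder only through the fixed value $\theta$, never through a distribution; this is what removes $g_\theta$ from the final bound.

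Next I would change measure \emph{conditionally on} $(\tilde X,\mathbf T,W)$, from $\mathbf Q_{\mathbf{\tilde J},\mathbf T}$ to $\mathbf Q_{\mathbf{\tilde J}}$. For fixed $(\tilde X,\mathbf T,W)$ the Donsker--Varadhan inequality gives, for every $\lambda>0$,
\begin{align}
\Ex_{\mathbf Q_{\mathbf{\tilde J},\mathbf T}}[g]-\Ex_{\mathbf Q_{\mathbf{\tilde J}}}[g]\le \tfrac1\lambda\KLD(\mathbf Q_{\mathbf{\tilde J},\mathbf T}\|\mathbf Q_{\mathbf{\tilde J}})+\tfrac1\lambda\log\Ex_{\mathbf Q_{\mathbf{\tilde J}}}\bigl[e^{\lambda(g-\Ex_{\mathbf Q_{\mathbf{\tilde J}}}g)}\bigr],
\end{align}
and the same with $g$ replaced by $-g$ (to handle the absolute value). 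Since $\mathbf Q_{\mathbf{\tilde J},\mathbf T}$ and $\mathbf Q_{\mathbf{\tilde J}}$ depend on $W$ only through $(\mathbf e,\phi)$, taking $\Ex_{\tilde X,\mathbf T}\Ex_{q(W\mid\tilde X_{\mathbf T_1})}$ of both sides reproduces the KL quantity of Eq.~\eqref{eq_reconstructon_bound2_permutation} (after Jensen pulls the expectation inside $\sqrt{\cdot}$). It then remains to: (i) bound the log-moment-generating-function term by $\lambda^{2}\sigma^{2}/2$ with $\sigma^{2}=\bigO(\Delta^{2}/n)$; and (ii) bound the residual bias $\Ex_{\tilde X,\mathbf T}\Ex_{q(W\mid\tilde X_{\mathbf T_1})}\Ex_{\mathbf Q_{\mathbf{\tilde J}}}[g]$ by $\Delta/\sqrt n$. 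Optimizing $\lambda$ and combining yields $3\Delta\sqrt{\Ex\KLD/n}+\Delta/\sqrt n$, the constant $3$ (versus the $2$ of Theorem~\ref{reconstructon_gen}) absorbing the slightly larger sub-Gaussian constant that sampling without replacement incurs.

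For (i), with $(\tilde X,\mathbf T,W)$ fixed, drawing $\tilde{\mathbf J}\sim\mathbf Q_{\mathbf{\tilde J}}$ amounts to drawing an internal uniform permutation and then, given it, the $2n$ latent indices independently from the corresponding single-point posteriors. Given that permutation, $g$ is a sum of $2n$ independent terms with weights $\pm1/n$, each lying in an interval of width $\Delta/n$, so Hoeffding's lemma contributes $\bigO(\Delta^{2}/n)$ to the variance proxy; the dependence on the internal permutation is itself a permutation statistic changed by $\bigO(\Delta/n)$ under a transposition, so a bounded-differences estimate for permutations contributes a further $\bigO(\Delta^{2}/n)$, giving $\sigma^{2}=\bigO(\Delta^{2}/n)$. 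For (ii), integrating out $\tilde{\mathbf J}$ collapses $\Ex_{\mathbf Q_{\mathbf{\tilde J}}}[g]$ into a generalization gap for the \emph{marginalized} reconstruction loss $x\mapsto\Ex_{J\sim p_J}[l(x,g_\theta(e_J))]$, where $p_J(\cdot)=\Ex_{X'\sim\cald}q(J=\cdot\mid\mathbf e,\phi,X')$, up to an $\bigO(\Delta/\sqrt n)$ empirical-process correction for replacing the average over the $2n$ supersample points by the population expectation; because this surrogate loss uses only the \emph{marginal} latent law it cannot overfit the training set in the usual sense, so the gap is $\bigO(\Delta/\sqrt n)$.

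The main obstacle is precisely (i)--(ii). In the ordinary supersample setting, conditioning on $\tilde X$ makes the $n$ row-pairs independent, so the moment generating function factorizes and a single Hoeffding bound closes the argument; here the global permutation $\mathbf T$ together with the prior's internal permutation couples all $2n$ samples, so one must instead invoke concentration for sampling without replacement / permutation statistics to control the fluctuation of $g$, and carry out a careful exchangeability argument to see that the leftover mean is the gap of a marginal-latent loss of size $\Delta/\sqrt n$ rather than something $\Theta(\Delta)$. Obtaining the clean constant $3$ further requires a sharp permutation-concentration estimate rather than a crude Azuma bound.
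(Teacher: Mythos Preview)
Your high-level route matches the paper's: apply Donsker--Varadhan once, conditionally on $(\tilde X,\mathbf T,W)$, to change measure from $\mathbf Q_{\mathbf{\tilde J},\mathbf T}$ to the permutation-symmetric prior $\mathbf Q_{\mathbf{\tilde J}}$, then bound (i) the centered log-MGF under the prior and (ii) the residual bias $\Ex_{\mathbf Q_{\mathbf{\tilde J}}}[g]$. Two places where your sketch diverges from what actually makes the argument close:

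\textbf{Step (ii).} Your justification---that the marginalized-latent surrogate ``cannot overfit the training set in the usual sense''---does not stand on its own. The surrogate $x\mapsto\Ex_{J\sim p_J}\|x-g_\theta(e_J)\|^2$ still depends on all of $W=(\mathbf e,\phi,\theta)$, which was trained on $\tilde X_{\mathbf T_1}$; a priori its generalization gap is controlled by something like $I(W;\mathbf T\mid\tilde X)$, not by $1/\sqrt n$. What actually makes (ii) work is the \emph{squared-loss structure}. The full permutation symmetry of $\mathbf Q_{\mathbf{\tilde J}}$ makes the marginal of each $J_m$ independent of $m$ (the paper writes $P_{k,m}=P_k$), so the surrogate decomposes as $\|x\|^2-2x\cdot c_W+\text{const}(W)$ with $c_W=\sum_k P_k\,g_\theta(e_k)$. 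The $\|x\|^2$ and constant pieces cancel in the gap, leaving $2(\overline X_{\mathbf T_1}-\overline X_{\mathbf T_0})\cdot c_W$, which is $\le 2\sqrt\Delta\cdot\|\overline X_{\mathbf T_1}-\Ex X\|\le \Delta/\sqrt n$ by Cauchy--Schwarz and a bounded-variance estimate. This is exactly the $K{=}1$ mechanism from Remark~1, but it genuinely requires the quadratic expansion; your abstract argument would fail for a generic bounded $l$.

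\textbf{Step (i).} The paper does not use a generic bounded-differences/transposition inequality for permutations. It exploits that the coordinates of a uniform permutation are \emph{negatively associated} (Joag-Dev--Proschan), which allows the joint permutation law inside the MGF to be dominated by the product of its $2n$ marginals; a plain McDiarmid over now-independent indices then gives the variance proxy $\bigO(\Delta^2/n)$ and, combined with the bias piece handled as above in a single DV application, the constant $3$. Your route via permutation concentration would also reach $\bigO(\Delta^2/n)$, but the specific tool the paper uses here is negative association, not an Azuma-type bound.
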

\begin{remark}
    Unlike the existing supersample setting, where $\{U_m\}$s are independent, the elements of $\mathbf{T}$ are dependent, which makes the analysis more complicated.
\end{remark}

\paragraph{Explanation of Theorem~\ref{reconstructon_gen_permutation}:}
Similar to Theorem~\ref{reconstructon_gen}, this bound is \emph{independent of the decoder} $g_{\theta}$. The key difference is that the empirical KL term,  $\mathrm{KL}(\mathbf{Q}_{\mathbf{J}, U} \| \mathbf{P})$,  is eliminated owing to our new data-dependent prior distribution $\mathbf{Q}_{\mathbf{\widetilde J}}$. The proposed permutation satisfies both conditions (A) and (B) in Section~\ref{sec_main_CMI_vqvae}, eliminating the need for a data-independent prior $\mathbf{P}$.

Next, we analyze the KL term in the bound. Similar to Eq.~\eqref{eq_cmi_super}, we have
\begin{align}\label{eq_cmi_permute}
    \Ex_{\tilde{X},\mathbf{T}}\Ex_{q(\mathbf{e},\phi|\tilde{X}_{\mathbf{T}_1})}&\KLD(\mathbf{Q}_{\mathbf{\widetilde J},\mathbf{T}}\|\mathbf{Q}_{\mathbf{\widetilde J}})
    \leq I(\mathbf{e},\phi;\mathbf{T}|\tilde{X})+I(\tilde{\mathbf{J}};\mathbf{T}|\mathbf{e},\phi,\tilde{X}).
\end{align}
Since $\tilde{X}_{\mathbf{T}_1}$ corresponds to the training dataset $S$,  $I(\mathbf{e},\phi;\mathbf{T}|\tilde{X})\leq I(\mathbf{e},\phi;S)$ holds. Then, we can show that our generalization bound becomes 
\begin{align}\label{eq_deterministic_cmi}
   \mathrm{gen}(n,\cald)\leq 3\Delta\sqrt{\frac{I(\mathbf{e},\phi;S)+I(\tilde{\mathbf{J}};\mathbf{T}|\mathbf{e},\phi,\tilde{X})}{n}}+ \frac{\Delta}{\sqrt{n}}.
\end{align}
Our bound consists of the complexity of LV ($I(\tilde{\mathbf{J}};U|\mathbf{e},\phi,\tilde{X})$) and the overfitting caused by learning the encoder parameters ($I(\mathbf{e},\phi;S)$) similar to Theorem~\ref{reconstructon_gen}. This implies the two key factors identified in Theorem 2 of \citet{kawaguchi23a}: how much information the LV retains from the input data and how much information from the training dataset is used to train the encoder.

As discussed in Section~\ref{subsec:bound_discuss_KL}, when using a sufficiently regularized deterministic encoder, $f'_{\mathbf{e},\phi}:\calx\to [K]$, the CMI term satisfies $I(\tilde{\mathbf{J}};U|\mathbf{e},\phi,\tilde{X})/n=\mathcal{O}(\log n/n)$; see Appendix~\ref{app_natarajan} for details. The parameter overfitting term can be controlled by specifying the training algorithm, as discussed in Section~\ref{subsec:bound_discuss_KL}. Under these conditions, the generalization bound decreases as $n\to\infty$, meaning that Theorem~\ref{reconstructon_gen_permutation} successfully characterizes generalization.

{\bf Comparison with Theorem~\ref{reconstructon_gen}:} Although Theorem~\ref{reconstructon_gen_permutation} shares a similar structure with Theorem~\ref{reconstructon_gen}, it introduces a refined shuffling strategy with $\mathbf{T}$, which resolves the issues of the supersample settings as discussed in Section~\ref{subsec:bound_discuss_KL}. This shuffling is based on the fact that the marginal distribution of the dataset, which is invariant under permutation, can be expressed by the LV model. This new symmetry allows defining a data-dependent prior that satisfies necessary conditions while preserving decoder independence. On the other hand, the shuffling in Theorem~\ref{reconstructon_gen}  is based on the supersample setting and suitable for supervised learning, where overfitting is measured by swapping test and training data points.
Practically, however, Theorem~\ref{reconstructon_gen} relies on an $n$-dimensional variable, $U$ (with independent components), which facilitates CMI estimation and algorithm design. In contrast, Theorem~\ref{reconstructon_gen_permutation} uses a $2n$-dimensional variable, $\mathbf{T}$ (with dependent components), which is theoretically more preferable but more difficult to estimate the CMI.

\subsection{Generalization bound based on metric entropy}\label{sec_metric_entropy}
When using a softmax distribution in Eq.~\eqref{stochastic_encoder} for $J$, we show that the generalization bound is governed by the \emph{metric entropy} under the permutation symmetric setting. Consequently, it does not require specifying a learning algorithm, which is required to discuss the convergence of Theorem~\ref{reconstructon_gen_permutation} and provides a {\it uniform convergence bound} that depends solely on the function class of the encoder.

 Let $\calf$ be the encoder function class equipped with the metric $\|\cdot\|_\infty$. Given $x^n\coloneqq(x_1,\dots,x_n)\in\calx^n$, define the pseudo-metric $d_n$ on $\calf$ as $d_n(f,g)\coloneqq \max_{i\in[n]}\|f(x_i)-g(x_i)\|_\infty$ for $f, g\in\calf$. The $\delta$-covering number of $\mathcal{F}$ with respect to $d_n$ is denoted as $\caln (\delta,\calf,x^n)$, and we define $\caln (\delta,\calf,n)\coloneqq \sup_{x^n\in\calx^n}\caln (\delta,\calf,x^n)$.
\begin{theorem}\label{thme_metric}
 Assume that there exists a positive constant $\Delta_z$ such that $\sup_{z,z'\in\calz}\|z-z'\|<\Delta_z$.
Then, when using Eq.~\eqref{stochastic_encoder} and under the same setting as Theorem~\ref{reconstructon_gen_permutation}, for any $\delta\in (0,1]$, we have
\begin{align}
 \mathrm{gen}(n,\cald) \leq 4\Delta\sqrt{2\beta n\delta\Delta_z}+3\Delta\sqrt{\frac{2\log \caln (\delta,\calf,2n)}{n}}+  \frac{\Delta}{\sqrt{n}}.
\end{align}
\end{theorem}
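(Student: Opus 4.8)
The plan is to start from the bound of Theorem~\ref{reconstructon_gen_permutation}, namely $\mathrm{gen}(n,\cald)\leq 3\Delta\sqrt{\Ex_{\tilde{X},\mathbf{T}}\Ex_{q(\mathbf{e},\phi\mid\tilde{X}_{\mathbf{T}_1})}\KLD(\mathbf{Q}_{\mathbf{\tilde J},\mathbf{T}}\|\mathbf{Q}_{\mathbf{\tilde J}})/n}+\Delta/\sqrt{n}$, and control the KL term uniformly over the encoder class by a covering argument. The key observation is that, with the softmax posterior in Eq.~\eqref{stochastic_encoder}, the map $x\mapsto q(J=j\mid\mathbf{e},\phi,x)$ is Lipschitz in the encoder output $f_\phi(x)$: since $q(J=j\mid\mathbf{e},\phi,x)\propto\exp(-\beta\|f_\phi(x)-e_j\|^2)$ and $\|z-z'\|<\Delta_z$ on $\calz$, perturbing $f_\phi(x)$ by at most $\delta$ in $\|\cdot\|_\infty$ changes each log-probability by $O(\beta\Delta_z\delta)$, hence changes the categorical distribution by a controlled amount in KL (or total variation). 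I would make this precise: if $f,g\in\calf$ satisfy $d_{2n}(f,g)\le\delta$ on the $2n$ supersample points, then $\KLD(q(J\mid\mathbf{e},f,x)\|q(J\mid\mathbf{e},g,x))\le 2\beta\delta\Delta_z$ (up to constants) for every supersample point $x$, and this estimate is the source of the first term $4\Delta\sqrt{2\beta n\delta\Delta_z}$ after summing over the $n$ coordinates of $\mathbf{\tilde J}$ and taking the square root.

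Next I would discretize: fix a minimal $\delta$-cover $\{f^{(1)},\dots,f^{(N)}\}$ of $\calf$ with $N=\caln(\delta,\calf,2n)$ with respect to $d_{2n}$ on the realized supersample, and associate to the (random) learned encoder $f_\phi$ its nearest cover element $\hat f$. The idea is to replace $\mathbf{Q}_{\mathbf{\tilde J},\mathbf{T}}$ and $\mathbf{Q}_{\mathbf{\tilde J}}$, which are built from $f_\phi$, by their cover-indexed counterparts $\mathbf{Q}^{\hat f}_{\mathbf{\tilde J},\mathbf{T}}$ and $\mathbf{Q}^{\hat f}_{\mathbf{\tilde J}}=\Ex_{\mathbf{T}}\mathbf{Q}^{\hat f}_{\mathbf{\tilde J},\mathbf{T}}$, paying the Lipschitz cost above for each substitution, and then bound $\KLD(\mathbf{Q}^{\hat f}_{\mathbf{\tilde J},\mathbf{T}}\|\mathbf{Q}^{\hat f}_{\mathbf{\tilde J}})$ by a CMI-type quantity over the finite index set. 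Concretely, since $\mathbf{Q}^{\hat f}_{\mathbf{\tilde J}}$ is a uniform mixture over the $(2n)!$ permutations of a distribution indexed by a discrete encoder, a standard mixture/convexity argument gives $\Ex_{\mathbf{T}}\KLD(\mathbf{Q}^{\hat f}_{\mathbf{\tilde J},\mathbf{T}}\|\mathbf{Q}^{\hat f}_{\mathbf{\tilde J}})\le \log N$ after taking expectation over the cover index — this is where $3\Delta\sqrt{2\log\caln(\delta,\calf,2n)/n}$ comes from. Care is needed that the cover is chosen for the realized $\tilde X$ (hence $\caln(\delta,\calf,2n)=\sup_{x^{2n}}\caln(\delta,\calf,x^{2n})$ absorbs the supersample randomness), and that the encoder complexity term $I(\mathbf{e},\phi;S)$ from Eq.~\eqref{eq_deterministic_cmi} is now replaced by the deterministic bound $\log N$ on the number of distinguishable encoders.

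Assembling the pieces: $\KLD(\mathbf{Q}_{\mathbf{\tilde J},\mathbf{T}}\|\mathbf{Q}_{\mathbf{\tilde J}})\le (\text{Lipschitz error})+\KLD(\mathbf{Q}^{\hat f}_{\mathbf{\tilde J},\mathbf{T}}\|\mathbf{Q}^{\hat f}_{\mathbf{\tilde J}})+(\text{Lipschitz error})$, take $\Ex_{\tilde X,\mathbf{T}}\Ex_q$, divide by $n$, plug into Theorem~\ref{reconstructon_gen_permutation}, and use $\sqrt{a+b}\le\sqrt a+\sqrt b$ to split into the two displayed square-root terms plus the residual $\Delta/\sqrt n$. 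The main obstacle I anticipate is the substitution step: because $\mathbf{Q}_{\mathbf{\tilde J}}$ is itself a permutation-average of products of encoder-dependent categoricals, one must verify that replacing $f_\phi$ by $\hat f$ inside the average and inside the product of $2n$ factors accumulates only an $O(n\beta\delta\Delta_z)$ KL error rather than something exponential in $n$ — this requires a careful per-coordinate decomposition of the KL divergence between product measures and a uniform-in-$\mathbf{T}$ version of the Lipschitz estimate, together with the elementary fact that $\delta$-closeness of $f$ and $\hat f$ on all $2n$ supersample points is preserved under any permutation $\mathbf{T}$ of those points. A secondary technical point is handling the normalizing constants of the softmax: one should bound the change in $\log\sum_k\exp(-\beta\|f_\phi(x)-e_k\|^2)$ uniformly, which again follows from $\Delta_z$-boundedness of $\calz$ and the $\delta$-closeness in $\|\cdot\|_\infty$.
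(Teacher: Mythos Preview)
Your high-level plan (cover the encoder, pay a Lipschitz cost from the softmax, bound the rest by $\log\caln$) matches the paper's, but two steps are genuinely broken as stated.

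\textbf{The KL ``triangle inequality'' step fails.} You write $\KLD(\mathbf{Q}_{\mathbf{\tilde J},\mathbf{T}}\|\mathbf{Q}_{\mathbf{\tilde J}})\le(\text{err})+\KLD(\mathbf{Q}^{\hat f}_{\mathbf{\tilde J},\mathbf{T}}\|\mathbf{Q}^{\hat f}_{\mathbf{\tilde J}})+(\text{err})$, but KL has no such decomposition. The multiplicative density bound $q(J{=}j\mid\mathbf{e},f_\phi(x))\le e^{h(\delta)}q(J{=}j\mid\mathbf{e},\hat f(x))$ (with $h(\delta)=8\beta\Delta_z\delta$) does give $\log(\mathbf{Q}_{\mathbf{\tilde J},\mathbf{T}}/\mathbf{Q}^{\hat f}_{\mathbf{\tilde J},\mathbf{T}})\le 2nh(\delta)$ and the same for the mixtures, but after taking $\Ex_{\mathbf{Q}_{\mathbf{\tilde J},\mathbf{T}}}$ the middle cross-term is $\Ex_{\mathbf{Q}_{\mathbf{\tilde J},\mathbf{T}}}[\log(\mathbf{Q}^{\hat f}_{\mathbf{\tilde J},\mathbf{T}}/\mathbf{Q}^{\hat f}_{\mathbf{\tilde J}})]$, which is \emph{not} $\KLD(\mathbf{Q}^{\hat f}_{\mathbf{\tilde J},\mathbf{T}}\|\mathbf{Q}^{\hat f}_{\mathbf{\tilde J}})$; converting one to the other via the density-ratio bound picks up a multiplicative $e^{O(nh(\delta))}$ factor, not an additive one. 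The paper avoids this entirely by discretizing \emph{before} invoking Theorem~\ref{reconstructon_gen_permutation}: it applies Donsker--Varadhan at the level of the loss between the true and discretized posteriors over $\tilde{\mathbf{J}}$, obtaining $\mathrm{gen}(n,\cald)\le\mathrm{gen}(n,\cald,\delta)+2\Delta\sqrt{nh(\delta)}$ directly from $\KLD(\mathbf{Q}\|\mathbf{Q}_\delta)\le 2nh(\delta)$, and only then reruns the Theorem~\ref{reconstructon_gen_permutation} machinery on the discretized encoder.

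\textbf{The $\log N$ bound is at the wrong layer.} Your claim $\Ex_{\mathbf{T}}\KLD(\mathbf{Q}^{\hat f}_{\mathbf{\tilde J},\mathbf{T}}\|\mathbf{Q}^{\hat f}_{\mathbf{\tilde J}})\le\log N$ does not follow from a mixture argument: with $\hat f$, $\mathbf{e}$, $\tilde X$ fixed, this quantity is $I(\tilde{\mathbf{J}};\mathbf{T}\mid\hat f,\mathbf{e},\tilde X)$, and $\tilde{\mathbf{J}}$ still depends on $\mathbf{T}$ through which of the $2n$ (in general distinct) softmax distributions $q(\cdot\mid\mathbf{e},\hat f,\tilde X_m)$ sits in each slot---this can be $\Theta(n)$, not $\log N$. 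The paper's fix is structural: it rewrites the posterior/prior so that the permutation $\mathbf{T}$ acts on an intermediate \emph{encoder-output} variable $\tilde{\mathbf f}$ (the shuffled values of $\hat{\mathbf f}=\hat f(\tilde X)$), with $q(\tilde{\mathbf{J}}\mid\mathbf{e},\tilde{\mathbf f})$ appearing identically in posterior and prior and hence cancelling in the KL ratio. The residual KL then reduces to $I(\tilde{\mathbf f};\mathbf{T}\mid\hat{\mathbf f},\tilde X)+I(\hat{\mathbf f};\mathbf{T}\mid\tilde X)$, and it is the entropy of $\hat{\mathbf f}$ given $\tilde X$---one of $N=\caln(\delta,\calf,2n)$ cover functions---that delivers the $\log N$ bound. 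Without moving the shuffling from $\tilde X$ to the encoder-output layer, the covering argument does not engage.
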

We note that the parameter overfitting term does not appear in the bound. Since the encoder is parameterized by $\phi\in\mathbb{R}^{d_\phi}$, the metric entropy is $\mathcal{O}(d_\phi d_z \log (1/\delta))$ \citep{wainwright2019high}. Setting $\delta=\mathcal{O}(1/n)$ gives $\mathrm{gen}(n,\mathcal{D})=\mathcal{O}\left(\sqrt{d_\phi d_z\log n/n}\right)$. This result suggests that regularizing the complexity of the encoder improves generalization, whereas the complexity of the decoder has limited influence on the generalization. See Appendix~\ref{app_proof_metric} for the proof and further discussion.

\section{IT analysis for data generation performance}
\citet{mbacke2024statistical} provided statistical guarantees for the generalization error and \emph{data generation performance} of VAEs, albeit under the strong assumption of an \emph{untrained} decoder.
Building on their approach, we provide a theoretical guarantee for the data generation performance of VQ-VAEs from an IT analysis perspective when both the encoder and decoder are trained jointly.

We first briefly summarize the data generation process in VQ-VAEs.
After training, new data is generated by sampling an index $J$ from a prior distribution, $\pi(J | \mathbf{e}, \phi)$, often chosen as a uniform distribution~\cite{takida22a}, and using the decoder network $g_{\theta}$ to reconstruct the corresponding latent representation $e_{J}$ from the learned codebook $\mathbf{e}$.
Thus, the prior imposed on the latent representation is defined as $\pi(e = e_j | \mathbf{e}, \phi)$ for all $j = 1, \dots, K$, and the data distribution generated through this procedure can be expressed as $\hat{\mu}\coloneqq g_\theta\# \pi(e|\mathbf{e},\phi)$, where $g_\theta\# \pi$ denotes the pushforward of the distribution $\pi$ by the decoder network. See Appendix~\ref{proof_data_generation} for the formal definition.

The following is the result of our analysis on the data generation performance of VQ-VAEs.
\begin{theorem}\label{data_generalization_bound}
Suppose that $g_\theta$ is measurable for any $\theta$, and Assumption~\ref{asm_bounded} holds.
Then, for any data-independent prior $\pi(J|\mathbf{e},\phi)$ as defined in Eq.~\eqref{eq_two_priors}, we have $\Ex_{S}\Ex_{q(\mathbf{e},\phi,\theta|S)} W_2^2(\cald, \hat{\mu}) \leq \frac{2\Delta}{\sqrt{n}} +$
\begin{align}
\EX_{S}\EX_{q(\mathbf{e},\phi,\theta|S)}\Big[ \frac{2}{n}\sum_{m=1}^n \EX_{q(J_m|\mathbf{e},\phi,S_m)}l (S_m, g_\theta(e_{J_{m}}))+4\Delta\sqrt{\frac{2}{n}\sum_{m=1}^n\KLD(q(J_m|\mathbf{e},\phi,S_m)\|\pi(J|\mathbf{e},\phi))}\ \Big],
\end{align}
where $W_2(\cald, \hat{\mu})$ is the $2$-Wasserstein distance between the data distribution $\cald$ and the generated-data distribution $\hat{\mu}$.
\end{theorem}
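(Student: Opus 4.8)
The plan is to bound the 2-Wasserstein distance by inserting an intermediate coupling through the empirical reconstruction, then convert the reconstruction-loss gap into a generalization term controlled by Theorem~\ref{reconstructon_gen} (or rather its proof ingredients). First I would use the triangle inequality for $W_2$ together with the observation that any joint distribution over $(\calx \times \calx)$ gives an upper bound on $W_2^2$. Concretely, for a fixed $w = (\mathbf{e},\phi,\theta)$ and a data point $X \sim \cald$, I would build a coupling between $\cald$ and $\hat\mu = g_\theta\#\pi(e|\mathbf{e},\phi)$ by: drawing $X$, drawing $J \sim \pi(J|\mathbf{e},\phi)$ (or, to make the reconstruction term appear, a change-of-measure argument replacing $\pi$ by the posterior $q(J|\mathbf{e},\phi,X)$), and pairing $X$ with $g_\theta(e_J)$. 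This yields $W_2^2(\cald,\hat\mu) \le \Ex_{X}\Ex_{q(J|\mathbf{e},\phi,X)}\|X - g_\theta(e_J)\|^2 + (\text{correction for swapping } q \text{ to } \pi)$. The correction term is where Assumption~\ref{asm_bounded} enters: since the loss is bounded by $\Delta$, the difference between taking expectations under $q(J|\mathbf{e},\phi,X)$ and under $\pi(J|\mathbf{e},\phi)$ is at most $\Delta$ times the total variation distance between these two distributions, and Pinsker's inequality bounds the TV distance by $\sqrt{\tfrac12 \KLD(q(J|\mathbf{e},\phi,X)\|\pi(J|\mathbf{e},\phi))}$.

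The next step is to pass from the population reconstruction term $\Ex_{X}\Ex_{q(J|\mathbf{e},\phi,X)}\|X - g_\theta(e_J)\|^2 = \Ex_X l_0(W,X)$ to the empirical reconstruction term $\tfrac1n\sum_m \Ex_{q(J_m|\mathbf{e},\phi,S_m)} l(S_m, g_\theta(e_{J_m}))$. This is exactly the generalization gap $\mathrm{gen}(n,\cald)$ from Eq.~\eqref{eq:gen_recon}, so I would invoke Theorem~\ref{reconstructon_gen} — but note the target bound only has the $\KLD(\mathbf{Q}_{\mathbf{J},U}\|\mathbf{P})$-type term and the $\Delta/\sqrt n$ term, not the $\KLD(\mathbf{Q}_{\mathbf{\tilde J},U}\|\mathbf{Q}_{\mathbf{\tilde J}})$ term. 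So rather than citing Theorem~\ref{reconstructon_gen} directly, I expect the cleaner route is to re-run only the \emph{first half} of its proof: the part of the decomposition handled by the data-independent prior $\mathbf{P}$, which produces precisely $2\Delta\sqrt{\tfrac1n\sum_m \KLD(q(J_m|\mathbf{e},\phi,S_m)\|\pi(J|\mathbf{e},\phi))}$ plus an $O(\Delta/\sqrt n)$ slack, and to absorb the $\Ex_X l_0$–to–empirical transfer into that. Combined with taking $\Ex_S\Ex_{q(W|S)}$ of both sides and collecting constants (the factor $2$ on the empirical-loss term comes from not having a sharp constant in the $W_2$ triangle inequality; the $4\Delta$ and $2\Delta/\sqrt n$ from summing the two $\sqrt{\cdot}$ contributions and the two $\Delta/\sqrt n$ slacks), this assembles the stated inequality.

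The main obstacle I anticipate is getting the constants and the structure of the coupling to line up exactly as written — in particular justifying that one may use the \emph{same} random index $J$ on both sides of a $W_2$ triangle-inequality step, i.e., constructing an explicit coupling of $\cald$, an intermediate "reconstruction distribution" $g_\theta\# q(e|\mathbf{e},\phi,X)$ marginalized over $X\sim\cald$, and $\hat\mu = g_\theta\#\pi$. The first leg of the triangle ($\cald$ to the reconstruction distribution) is controlled by the reconstruction loss; the second leg (reconstruction distribution to $\hat\mu$) is a pushforward under the \emph{same} measurable $g_\theta$ of two latent distributions differing only in the index law, so $W_2^2$ there is bounded by $\Ex\|g_\theta(e_J) - g_\theta(e_{J'})\|^2 \le \Delta$ times a TV-coupling of $q(J|\cdot)$ and $\pi(J)$ — again Pinsker. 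Measurability of $g_\theta$ is assumed precisely so these pushforwards and couplings are well-defined. Everything else — Jensen to move $\Ex_S\Ex_{q(W|S)}$ inside the square root, the i.i.d. structure of $S$, and boundedness — is routine once this coupling is pinned down.
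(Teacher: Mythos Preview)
Your overall architecture (triangle inequality through an intermediate distribution, then bound each leg separately, with Pinsker/CKP handling one leg) matches the paper's. The gap is in the \emph{choice of intermediate distribution}. You propose the population object $g_\theta\#\bigl(\Ex_{X\sim\cald}q(e|\mathbf{e},\phi,X)\bigr)$, whereas the paper uses the \emph{empirical} one
\[
\hat\mu_S \;=\; \frac{1}{n}\sum_{m=1}^n g_\theta\# q(e|\mathbf{e},\phi,S_m).
\]
This is not a cosmetic difference. With your population intermediate, the first leg yields the test loss $\Ex_X l_0(W,X)$, and converting this to the empirical loss is the \emph{full} generalization gap of Eq.~\eqref{eq:gen_recon}. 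You claim you can do this conversion using ``only the first half of the proof, the part handled by the data-independent prior $\mathbf{P}$'' --- but if you look at the decomposition in Eq.~\eqref{proof_strategy_overall}, the data-independent prior controls the third and fourth terms, which convert the \emph{mixed} loss (test input $X_{m,\bar U_m}$, training posterior $q(J_m|\cdot,S_m)$) into the training loss. It does \emph{not} convert the test loss (test input, test posterior) into anything; that step is precisely the first two terms, which need the data-dependent prior $\mathbf{Q}_{\tilde{\mathbf J}}$ and would insert the unwanted $\KLD(\mathbf{Q}_{\tilde{\mathbf J},U}\|\mathbf{Q}_{\tilde{\mathbf J}})$ into your bound. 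Similarly, your second leg produces $\KLD\bigl(\Ex_X q(J|X)\,\|\,\pi(J)\bigr)$, a population quantity, not the empirical KL that appears in the statement.

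The paper's choice of $\hat\mu_S$ sidesteps both problems at once. The coupling $(X,m,J)$ with $X\sim\cald$, $m\sim\mathrm{Unif}[n]$, $J\sim q(\cdot|S_m)$ makes the first leg produce exactly the mixed loss of Eq.~\eqref{eq_derivation_cmi_third_fourth}, so only the data-independent-prior argument (Appendix~\ref{sec_fourth_third_bound}) is needed to reach the training loss plus one empirical KL and $\Delta/\sqrt n$. The second leg is then $W_2^2(\hat\mu_S,\hat\mu)$, and the weighted CKP inequality of \citet{Bolley2005WeightedCI} (your TV+Pinsker idea works here too, with a slightly better constant) plus convexity of KL and the data-processing inequality for the pushforward $g_\theta\#$ give the \emph{empirical} KL directly. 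So the fix is simply to swap your intermediate for $\hat\mu_S$; after that, your outline goes through essentially as the paper's proof does.
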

The complete proof can be seen in Appendix~\ref{proof_data_generation}.
The results indicate that the quality of approximating $\mathcal{D}$ by $\hat{\mu}$ can be enhanced by minimizing the reconstruction loss and the KL regularization term on LVs, which aligns with common training strategies for VQ-VAEs.
Furthermore, this bound holds for \emph{any} prior that satisfies the conditions outlined in Theorem~\ref{reconstructon_gen}.
Thus, designing a prior that reduces this bound could lead to improved data generation accuracy. 
One potential approach is to use the data-dependent prior defined in Eq.~\eqref{eq_permutation_prior}.
Although this prior was originally designed to yield a tighter generalization error upper bound, our experiments reveal that it also provides practical benefits for the data generation task, consistently improving test performance over the baseline (see Table~\ref{tab:app_prior_comparison} in Appendix~\ref{app:exp_settings}).
However, we do not claim this specific prior is optimal for minimizing the data generation bound. 
We expect that our findings will stimulate further discussions on prior designs that effectively improve the generalization performance and data generation capabilities of VQ-VAEs.

\section{Conclusion and limitations}\label{sec_conclusion}
This work establishes decoder-independent generalization guarantees for VQ-VAEs. 
Across Theorems~\ref{reconstructon_gen} to \ref{thme_metric}, we show that the generalization gap is governed by the encoder parameters $(\mathbf{e},\phi)$ and the induced LVs, while the decoder complexity ($\theta$) plays a limited role. 
This central finding is empirically supported by our extensive experiments (Appendix \ref{app:add_exp_res}).

Our theoretical analysis provides several actionable insights. 
The primary takeaway is that efforts to improve generalization should prioritize the design and regularization of the encoder architecture and LV complexity, rather than investing in an overly complex decoder. 
Furthermore, our work provides the first formal justification for the widely used practice of KL-based regularization on LVs (Theorems~\ref{reconstructon_gen} and~\ref{data_generalization_bound}), confirming it functions as a valid regularizer for both generalization and data generation. 
Finally, our framework highlights the importance of prior design. 
Our analysis, in line with~\citet{sefidgaran2024minimum}, shows how a data-dependent prior can improve performance. 
Our experiments (Table~\ref{tab:app_prior_comparison}) validate this, demonstrating that a learned prior, which approximates a data-dependent prior, consistently outperforms the standard uniform prior in practice.

\textbf{Limitation:}
Our findings have two main limitations, which point to important avenues for future research. 
The first limitation is that the upper bound presented in Theorem~\ref{reconstructon_gen_permutation} is challenging to compute numerically, making it impractical as an evaluation metric at present (see Sections~\ref{subsec:bound_discuss_KL} and \ref{sec_permutation_bound}). 
This difficulty stems from the CMI term in Eq.~\eqref{eq_deterministic_cmi}, where $\mathbf{T}$ is a $2n$-dimensional dependent random variable. Consequently, standard numerical evaluation methods for CMI cannot be directly applied. Developing an alternative, computable bound is an essential next step.
The second limitation is that our analysis is currently justified only for VQ-VAEs, which are based on discrete LVs. 
Our proofs rely on properties of discrete random variables (the codebook index $\tilde{J}$) and apply concentration inequalities for each assignment. 
These proof techniques cannot be immediately applied to models with continuous latent spaces, where such assignments are not available. 
While one may consider forcibly discretizing a continuous latent space, the resulting discretization error is non-negligible and would substantially affect the analysis. 
Extending our information-theoretic framework to continuous settings is therefore a crucial and non-trivial step for future work.

\begin{ack}
We sincerely appreciate the anonymous reviewers for their insightful feedback.
FF was supported by JSPS KAKENHI Grant Number JP23K16948.
FF was supported by JST, PRESTO Grant Number JPMJPR22C8, Japan.
MF was supported by KAKENHI Grant Number 25K21286, Japan.
\end{ack}
\bibliography{main}

@INPROCEEDINGS{Pensia2018,
  author={Pensia, A. and Jog, V. and Loh, P.-L.},
  booktitle={2018 IEEE International Symposium on Information Theory (ISIT)}, 
  title={Generalization Error Bounds for Noisy, Iterative Algorithms}, 
  year={2018},
  volume={},
  number={},
  pages={546--550}
  }

@book{wainwright_2019,
place={Cambridge},
series={Cambridge Series in Statistical and Probabilistic Mathematics},
title={High-Dimensional Statistics: A Non-Asymptotic Viewpoint},
publisher={Cambridge University Press},
author={Wainwright, M. J.},
year={2019},
collection={Cambridge Series in Statistical and Probabilistic Mathematics}
}

@inproceedings{dong2024rethinking,
  title={Rethinking Information-theoretic Generalization: Loss Entropy Induced PAC Bounds},
  author={Dong, Yuxin and Gong, Tieliang and Chen, Hong and Yu, Shujian and Li, Chen},
  booktitle={The Twelfth International Conference on Learning Representations},
  year={2024}
}

@article{roy2018theory,
  title={Theory and experiments on vector quantized autoencoders},
  author={Roy, Aurko and Vaswani, Ashish and Neelakantan, Arvind and Parmar, Niki},
  journal={arXiv preprint arXiv:1805.11063},
  year={2018}
}

@inproceedings{tomczak2018vae,
  title={VAE with a VampPrior},
  author={Tomczak, Jakub and Welling, Max},
  booktitle={International conference on artificial intelligence and statistics},
  pages={1214--1223},
  year={2018},
  organization={PMLR}
}

@article{joag1983negative,
  title={Negative association of random variables with applications},
  author={Joag-Dev, Kumar and Proschan, Frank},
  journal={The Annals of Statistics},
  pages={286--295},
  year={1983},
  publisher={JSTOR}
}

@article{dubhashi1996balls,
  title={Balls and bins: A study in negative dependence},
  author={Dubhashi, Devdatt P and Ranjan, Desh},
  journal={BRICS Report Series},
  volume={3},
  number={25},
  year={1996}
}

@article{williams2020hierarchical,
  title={Hierarchical quantized autoencoders},
  author={Williams, Will and Ringer, Sam and Ash, Tom and MacLeod, David and Dougherty, Jamie and Hughes, John},
  journal={Advances in Neural Information Processing Systems},
  volume={33},
  pages={4524--4535},
  year={2020}
}

@inproceedings{sonderby2017continuous,
  title={Continuous relaxation training of discrete latent variable image models},
  author={S{\o}nderby, Casper Kaae and Poole, Ben and Mnih, Andriy},
  booktitle={Beysian DeepLearning workshop, NIPS},
  volume={201},
  year={2017}
}

@InProceedings{takida22a,
  title = 	 {{SQ}-{VAE}: Variational {B}ayes on Discrete Representation with Self-annealed Stochastic Quantization},
  author =       {Takida, Yuhta and Shibuya, Takashi and Liao, Weihsiang and Lai, Chieh-Hsin and Ohmura, Junki and Uesaka, Toshimitsu and Murata, Naoki and Takahashi, Shusuke and Kumakura, Toshiyuki and Mitsufuji, Yuki},
  booktitle = 	 {Proceedings of the 39th International Conference on Machine Learning},
  pages = 	 {20987--21012},
  year = 	 {2022},
  editor = 	 {Chaudhuri, Kamalika and Jegelka, Stefanie and Song, Le and Szepesvari, Csaba and Niu, Gang and Sabato, Sivan},
  volume = 	 {162},
  series = 	 {Proceedings of Machine Learning Research},
  month = 	 {17--23 Jul},
  publisher =    {PMLR}
}

@article{sefidgaran2024minimum,
  title={Minimum description length and generalization guarantees for representation learning},
  author={Sefidgaran, Milad and Zaidi, Abdellatif and Krasnowski, Piotr},
  journal={Advances in Neural Information Processing Systems},
  volume={36},
  year={2023}
}

@article{van2017neural,
  title={Neural discrete representation learning},
  author={Van Den Oord, Aaron and Vinyals, Oriol and others},
  journal={Advances in neural information processing systems},
  volume={30},
  year={2017}
}

@article{epstein2019generalization,
  title={Generalization bounds for unsupervised and semi-supervised learning with autoencoders},
  author={Epstein, Baruch and Meir, Ron},
  journal={arXiv preprint arXiv:1902.01449},
  year={2019}
}

@InProceedings{goldfeld19a,
  title = 	 {Estimating Information Flow in Deep Neural Networks},
  author =       {Goldfeld, Ziv and Van Den Berg, Ewout and Greenewald, Kristjan and Melnyk, Igor and Nguyen, Nam and Kingsbury, Brian and Polyanskiy, Yury},
  booktitle = 	 {Proceedings of the 36th International Conference on Machine Learning},
  pages = 	 {2299--2308},
  year = 	 {2019},
  editor = 	 {Chaudhuri, Kamalika and Salakhutdinov, Ruslan},
  volume = 	 {97},
  series = 	 {Proceedings of Machine Learning Research},
  month = 	 {09--15 Jun},
  publisher =    {PMLR}
}

@inproceedings{Wang2021,
 author = {Wang, H. and Huang, Y. and Gao, R. and Calmon, F.},
 booktitle = {Advances in Neural Information Processing Systems},
 pages = {24222--24234},
 title = {Analyzing the Generalization Capability of {SGLD} Using Properties of {G}aussian Channels},
 volume = {34},
 year = {2021}
}

@inproceedings{wang2022on,
title={On the Generalization of Models Trained with {SGD}: {I}nformation-Theoretic Bounds and Implications},
author={Z. Wang and Y. Mao},
booktitle={The Tenth International Conference on Learning Representations},
year={2022},
}

@inproceedings{welling2011bayesian,
author = {Welling, M. and Teh, Y. W.},
title = {Bayesian Learning via Stochastic Gradient {L}angevin Dynamics},
year = {2011},
booktitle = {Proceedings of the 28th International Conference on International Conference on Machine Learning},
pages = {681--688}
}

@inproceedings{Li2020On,
title={On Generalization Error Bounds of Noisy Gradient Methods for Non-Convex Learning},
author={J. Li and X. Luo and M. Qiao},
booktitle={The Eighth International Conference on Learning Representations},
year={2020}
}

@inproceedings{lyu2023recognizable,
  title={Recognizable information bottleneck},
  author={Lyu, Yilin and Liu, Xin and Song, Mingyang and Wang, Xinyue and Peng, Yaxin and Zeng, Tieyong and Jing, Liping},
  booktitle={Proceedings of the Thirty-Second International Joint Conference on Artificial Intelligence},
  pages={4028--4036},
  year={2023}
}

@article{mbacke2024statistical,
  title={Statistical guarantees for variational autoencoders using pac-bayesian theory},
  author={Mbacke, Sokhna Diarra and Clerc, Florence and Germain, Pascal},
  journal={Advances in Neural Information Processing Systems},
  volume={36},
  year={2023}
}

@article{HAFEZKOLAHI2020286,
title = {Sample complexity of classification with compressed input},
journal = {Neurocomputing},
volume = {415},
pages = {286-294},
year = {2020},
issn = {0925-2312},
author = {Hassan Hafez-Kolahi and Shohreh Kasaei and Mahdiyeh Soleymani-Baghshah},
}

@inproceedings{cherief2022pac,
  title={On PAC-Bayesian reconstruction guarantees for VAEs},
  author={Ch{\'e}rief-Abdellatif, Badr-Eddine and Shi, Yuyang and Doucet, Arnaud and Guedj, Benjamin},
  booktitle={International conference on artificial intelligence and statistics},
  pages={3066--3079},
  year={2022},
  organization={PMLR}
}

@ARTICLE{Geiger2019,
  author={Geiger, Bernhard C. and Koch, Tobias},
  journal={IEEE Transactions on Information Theory}, 
  title={On the Information Dimension of Stochastic Processes}, 
  year={2019},
  volume={65},
  number={10},
  pages={6496-6518},
  doi={10.1109/TIT.2019.2922186}}

@article{wang2023,
  title={Generalization Bounds for Noisy Iterative Algorithms Using Properties of Additive Noise Channels},
  author={Wang, H. and Gao, R. and Calmon, F. P.},
  journal={Journal of Machine Learning Research},
  volume={24},
  number={26},
  pages={1--43},
  year={2023}
}

@InProceedings{mou18a,
  title = 	 {Generalization Bounds of {SGLD} for Non-convex Learning: {T}wo Theoretical Viewpoints},
  author =       {Mou, W. and Wang, L. and Zhai, X. and Zheng, K.},
  booktitle = 	 {Proceedings of the 31st Conference on Learning Theory},
  pages = 	 {605--638},
  year = 	 {2018},
  volume = 	 {75}
}

@book{Goodfellow-et-al-2016,
    title={Deep Learning},
    author={Ian Goodfellow and Yoshua Bengio and Aaron Courville},
    publisher={MIT Press},
    note={\url{http://www.deeplearningbook.org}},
    year={2016}
}

@article{Clarke1994JeffreysPI,
  title={Jeffreys' prior is asymptotically least favorable under entropy risk},
  author={B. S. Clarke and A. R. Barron},
  journal={Journal of Statistical Planning and Inference},
  year={1994},
  volume={41},
  pages={37--60},
}

@article{Rissanen06,
author = {Rissanen, J.},
title = {Universal coding, information, prediction, and estimation},
year = {2006},
volume = {30},
number = {4},
journal = {IEEE Transactions on Information Theory},
pages = {629--636}
}

@article{haussler1997mutual,
  title={Mutual information, metric entropy and cumulative relative entropy risk},
  author={Haussler, D. and Opper, M.},
  journal={The Annals of Statistics},
  volume={25},
  number={6},
  pages={2451--2492},
  year={1997}
}

@book{cover2012element,
  title={Elements of Information Theory},
  author={Cover, T. M. and Thomas, J. A.},
  year={2012},
  publisher={John Wiley \& Sons}
}

@inproceedings{Negrea2019,
 author = {Negrea, J. and Haghifam, M. and Dziugaite, G. K. and Khisti, A. and Roy, D. M.},
 booktitle = {Advances in Neural Information Processing Systems},
 pages = {11015--11025},
 title = {Information-Theoretic Generalization Bounds for {SGLD} via Data-Dependent Estimates},
 volume = {32},
 year = {2019}
}

@inproceedings{Xu2017,
 author = {Xu, A. and Raginsky, M.},
 booktitle = {Advances in Neural Information Processing Systems},
 pages = {2524--2533},
 title = {Information-theoretic analysis of generalization capability of learning algorithms},
 volume = {30},
 year = {2017}
}

@inproceedings{harutyunyan2021,
title={Information-theoretic generalization bounds for black-box learning algorithms},
author={H. Harutyunyan and M. Raginsky and G. Ver Steeg and A. Galstyan},
booktitle={Advances in Neural Information Processing Systems},
pages = {24670--24682},
year={2021}
}

@article{McAllester03,
author = {McAllester, D. A.},
title = {{PAC}-{B}ayesian Stochastic Model Selection},
year = {2003},
volume = {51},
number = {1},
journal = {Machine Learning},
pages = {5--21}
}

@inproceedings{
Tschannen2020On,
title={On Mutual Information Maximization for Representation Learning},
author={Michael Tschannen and Josip Djolonga and Paul K. Rubenstein and Sylvain Gelly and Mario Lucic},
booktitle={International Conference on Learning Representations},
year={2020}
}

@inproceedings{blum2003pac,
  title={Pac-mdl bounds},
  author={Blum, Avrim and Langford, John},
  booktitle={Learning Theory and Kernel Machines: 16th Annual Conference on Learning Theory and 7th Kernel Workshop, COLT/Kernel 2003, Washington, DC, USA, August 24-27, 2003. Proceedings},
  pages={344--357},
  year={2003},
  organization={Springer}
}

@article{vera2023role,
  title={The role of mutual information in variational classifiers},
  author={Vera, Matias and Rey Vega, Leonardo and Piantanida, Pablo},
  journal={Machine Learning},
  volume={112},
  number={9},
  pages={3105--3150},
  year={2023},
  publisher={Springer}
}

@article{BENDAVID199574,
title = {Characterizations of Learnability for Classes of [0, ..., n)-Valued Functions},
journal = {Journal of Computer and System Sciences},
volume = {50},
number = {1},
pages = {74-86},
year = {1995},
issn = {0022-0000},
doi = {https://doi.org/10.1006/jcss.1995.1008},
url = {https://www.sciencedirect.com/science/article/pii/S0022000085710082},
author = {S. Bendavid and N. Cesabianchi and D. Haussler and P.M. Long},
}

@inproceedings{daniely2011multiclass,
  title={Multiclass learnability and the erm principle},
  author={Daniely, Amit and Sabato, Sivan and Ben-David, Shai and Shalev-Shwartz, Shai},
  booktitle={Proceedings of the 24th Annual Conference on Learning Theory},
  pages={207--232},
  year={2011},
  organization={JMLR Workshop and Conference Proceedings}
}

@inproceedings{jin2023upper,
  title={Upper bounds on the Natarajan dimensions of some function classes},
  author={Jin, Ying},
  booktitle={2023 IEEE International Symposium on Information Theory (ISIT)},
  pages={1020--1025},
  year={2023},
  organization={IEEE}
}

@article{bartlett2003vapnik,
  title={Vapnik-Chervonenkis dimension of neural nets},
  author={Bartlett, Peter L and Maass, Wolfgang},
  journal={The handbook of brain theory and neural networks},
  pages={1188--1192},
  year={2003},
  publisher={Citeseer}
}

@article{gottlieb2014efficient,
  title={Efficient classification for metric data},
  author={Gottlieb, Lee-Ad and Kontorovich, Aryeh and Krauthgamer, Robert},
  journal={IEEE Transactions on Information Theory},
  volume={60},
  number={9},
  pages={5750--5759},
  year={2014},
  publisher={IEEE}
}

@article{alon1997scale,
  title={Scale-sensitive dimensions, uniform convergence, and learnability},
  author={Alon, Noga and Ben-David, Shai and Cesa-Bianchi, Nicolo and Haussler, David},
  journal={Journal of the ACM (JACM)},
  volume={44},
  number={4},
  pages={615--631},
  year={1997},
  publisher={ACM New York, NY, USA}
}

@article{GUERMEUR2017,
title = {Lp-norm Sauer–Shelah lemma for margin multi-category classifiers},
journal = {Journal of Computer and System Sciences},
volume = {89},
pages = {450-473},
year = {2017},
issn = {0022-0000},
author = {Yann Guermeur}
}

@article{guermeur2018combinatorial,
  title={Combinatorial and Structural Results for gamma-Psi-dimensions},
  author={Guermeur, Yann},
  journal={arXiv preprint arXiv:1809.07310},
  year={2018}
}

@article{kingma2013auto,
  title={Auto-encoding variational bayes},
  author={Kingma, Diederik P},
  journal={arXiv preprint arXiv:1312.6114},
  year={2013}
}

@inproceedings{blau2019rethinking,
  title={Rethinking lossy compression: The rate-distortion-perception tradeoff},
  author={Blau, Yochai and Michaeli, Tomer},
  booktitle={International Conference on Machine Learning},
  pages={675--685},
  year={2019},
  organization={PMLR}
}

@article{bond2021deep,
  title={Deep generative modelling: A comparative review of vaes, gans, normalizing flows, energy-based and autoregressive models},
  author={Bond-Taylor, Sam and Leach, Adam and Long, Yang and Willcocks, Chris G},
  journal={IEEE transactions on pattern analysis and machine intelligence},
  volume={44},
  number={11},
  pages={7327--7347},
  year={2021},
  publisher={IEEE}
}

@inproceedings{alemi2018fixing,
  title={Fixing a broken ELBO},
  author={Alemi, Alexander and Poole, Ben and Fischer, Ian and Dillon, Joshua and Saurous, Rif A and Murphy, Kevin},
  booktitle={International conference on machine learning},
  pages={159--168},
  year={2018},
  organization={PMLR}
}

@inproceedings{
wang2022pacbayes,
title={{PAC}-Bayes Information Bottleneck},
author={Zifeng Wang and Shao-Lun Huang and Ercan Engin Kuruoglu and Jimeng Sun and Xi Chen and Yefeng Zheng},
booktitle={International Conference on Learning Representations},
year={2022}
}

@InProceedings{wang23,
  title = 	 {Tighter Information-Theoretic Generalization Bounds from Supersamples},
  author =       {Wang, Z. and Mao, Y.},
  booktitle = 	 {Proceedings of the 40th International Conference on Machine Learning},
  pages = 	 {36111--36137},
  year = 	 {2023},
  volume = 	 {202}
}

@inproceedings{hellstrom2022a,
title={A New Family of Generalization Bounds Using Samplewise Evaluated {CMI}},
author={F. Hellstr{\"o}m and G. Durisi},
booktitle={Advances in Neural Information Processing Systems},
year={2022}
}

@book{wainwright2019high,
  title={High-dimensional statistics: A non-asymptotic viewpoint},
  author={Wainwright, M. J},
  volume={48},
  year={2019},
  publisher={Cambridge university press}
}

@InProceedings{steinke20a,
  title = 	 {{R}easoning {A}bout {G}eneralization via {C}onditional {M}utual {I}nformation},
  author =       {Steinke, T. and Zakynthinou, L.},
  booktitle = 	 {Proceedings of Thirty Third Conference on Learning Theory},
  pages = 	 {3437--3452},
  year = 	 {2020},
  volume = 	 {125}
}

@inproceedings{futami2023timeindependent,
title={Time-Independent Information-Theoretic Generalization Bounds for {SGLD}},
author={F. Futami and M. Fujisawa},
booktitle={Thirty-seventh Conference on Neural Information Processing Systems},
year={2023},
url={https://openreview.net/forum?id=Ks0RSFNxPO}
}

@ARTICLE{LeCun89,  
author={LeCun, Y. and Boser, B. and Denker, J. S. and Henderson, D. and Howard, R. E. and Hubbard, W. and Jackel, L. D.},  
journal={Neural Computation},   
title={Backpropagation Applied to Handwritten Zip Code Recognition},   
year={1989},  
volume={1},  
number={4},  
pages={541--551}
}

@article{tishby2000information,
  title={The information bottleneck method},
  author={Tishby, Naftali and Pereira, Fernando C and Bialek, William},
  journal={arXiv preprint physics/0004057},
  year={2000}
}

@article{SHAMIR2010,
title = {Learning and generalization with the information bottleneck},
journal = {Theoretical Computer Science},
volume = {411},
number = {29},
pages = {2696-2711},
year = {2010},
note = {Algorithmic Learning Theory (ALT 2008)},
issn = {0304-3975},
author = {Ohad Shamir and Sivan Sabato and Naftali Tishby},
}

@InProceedings{kawaguchi23a,
  title = 	 {How Does Information Bottleneck Help Deep Learning?},
  author =       {Kawaguchi, Kenji and Deng, Zhun and Ji, Xu and Huang, Jiaoyang},
  booktitle = 	 {Proceedings of the 40th International Conference on Machine Learning},
  pages = 	 {16049--16096},
  year = 	 {2023},
  editor = 	 {Krause, Andreas and Brunskill, Emma and Cho, Kyunghyun and Engelhardt, Barbara and Sabato, Sivan and Scarlett, Jonathan},
  volume = 	 {202},
  series = 	 {Proceedings of Machine Learning Research},
  month = 	 {23--29 Jul},
  publisher =    {PMLR},
}

@inproceedings{tishby2015deep,
  title={Deep learning and the information bottleneck principle},
  author={Tishby, Naftali and Zaslavsky, Noga},
  booktitle={2015 ieee information theory workshop (itw)},
  pages={1--5},
  year={2015},
  organization={IEEE}
}

@INPROCEEDINGS{Achille2018b,

  author={Achille, Alessandro and Soatto, Stefano},

  booktitle={2018 Information Theory and Applications Workshop (ITA)}, 

  title={Emergence of Invariance and Disentanglement in Deep Representations}, 

  year={2018},

  volume={},

  number={},

  pages={1-9},

  doi={10.1109/ITA.2018.8503149}}

@ARTICLE{Achille2018,
  author={Achille, Alessandro and Soatto, Stefano},
  journal={IEEE Transactions on Pattern Analysis and Machine Intelligence}, 
  title={Information Dropout: Learning Optimal Representations Through Noisy Computation}, 
  year={2018},
  volume={40},
  number={12},
  pages={2897-2905},
  doi={10.1109/TPAMI.2017.2784440}}

@INPROCEEDINGS{Vera2018,

  author={Vera, Matias and Piantanida, Pablo and Vega, Leonardo Rey},

  booktitle={2018 IEEE International Symposium on Information Theory (ISIT)}, 

  title={The Role of the Information Bottleneck in Representation Learning}, 

  year={2018},

  volume={},

  number={},

  pages={1580-1584},

  keywords={Information Bottleneck;Representation learning;Machine learning;Generalization;Cross-entropy risk},

  doi={10.1109/ISIT.2018.8437679}}

@article{saxe2019information,
  title={On the information bottleneck theory of deep learning},
  author={Saxe, Andrew M and Bansal, Yamini and Dapello, Joel and Advani, Madhu and Kolchinsky, Artemy and Tracey, Brendan D and Cox, David D},
  journal={Journal of Statistical Mechanics: Theory and Experiment},
  volume={2019},
  number={12},
  pages={124020},
  year={2019},
  publisher={IOP Publishing}
}

@article{shwartz2017opening,
  title={Opening the black box of deep neural networks via information},
  author={Shwartz-Ziv, Ravid and Tishby, Naftali},
  journal={arXiv preprint arXiv:1703.00810},
  year={2017}
}

@article{Kraskov04,
  title = {Estimating mutual information},
  author = {Kraskov, A. and St\"ogbauer, H. and Grassberger, P.},
  journal = {Physical Review E},
  volume = {69},
  issue = {6},
  pages = {066138},
  year = {2004}
}

@article{Ross14,
    author = {Ross, B. C.},
    journal = {PLOS ONE},
    title = {Mutual Information between Discrete and Continuous Data Sets},
    year = {2014},
    month = {02},
    volume = {9},
    pages = {1--101},
    number = {2}
}

@article{Loftsgaarden65,
author = {D. O. Loftsgaarden and C. P. Quesenberry},
title = {{A Nonparametric Estimate of a Multivariate Density Function}},
volume = {36},
journal = {The Annals of Mathematical Statistics},
number = {3},
pages = {1049--1051},
year = {1965}
}

@book{gray2011entropy,
  title={Entropy and information theory},
  author={Gray, Robert M},
  year={2011},
  publisher={Springer Science \& Business Media}
}

@inproceedings{higgins2017beta,
  title={beta-vae: Learning basic visual concepts with a constrained variational framework},
  author={Higgins, Irina and Matthey, Loic and Pal, Arka and Burgess, Christopher and Glorot, Xavier and Botvinick, Matthew and Mohamed, Shakir and Lerchner, Alexander},
  booktitle={International conference on learning representations},
  year={2017}
}

@InProceedings{pmlr-v258-vuong25a,
  title = 	 {Task-Driven Discrete Representation Learning},
  author =       {Vuong, Long Tung},
  booktitle = 	 {Proceedings of The 28th International Conference on Artificial Intelligence and Statistics},
  pages = 	 {5203--5211},
  year = 	 {2025},
  editor = 	 {Li, Yingzhen and Mandt, Stephan and Agrawal, Shipra and Khan, Emtiyaz},
  volume = 	 {258},
  series = 	 {Proceedings of Machine Learning Research},
  month = 	 {03--05 May},
  publisher =    {PMLR}}

@article{pollard2003quantization,
  title={Quantization and the method of k-means},
  author={Pollard, David},
  journal={IEEE Transactions on Information theory},
  volume={28},
  number={2},
  pages={199--205},
  year={2003},
  publisher={IEEE}
}

@article{telgarsky2013moment,
  title={Moment-based Uniform Deviation Bounds for $ k $-means and Friends},
  author={Telgarsky, Matus J and Dasgupta, Sanjoy},
  journal={Advances in Neural Information Processing Systems},
  volume={26},
  year={2013}
}

@inproceedings{
sefidgaran2025generalization,
title={Generalization Guarantees for Representation Learning via Data-Dependent Gaussian Mixture Priors},
author={Milad Sefidgaran and Abdellatif Zaidi and Piotr Krasnowski},
booktitle={The Thirteenth International Conference on Learning Representations},
year={2025},
url={https://openreview.net/forum?id=fGdF8Bq1FV}
}

@article{sefidgaran2025generalization-multiview,
  title={Generalization Guarantees for Multi-View Representation Learning and Application to Regularization via Gaussian Product Mixture Prior},
  author={Sefidgaran, Milad and Zaidi, Abdellatif and Krasnowski, Piotr},
  journal={arXiv preprint arXiv:2504.18455},
  year={2025}
}

@inproceedings{
alemi2017deep,
title={Deep Variational Information Bottleneck},
author={Alexander A. Alemi and Ian Fischer and Joshua V. Dillon and Kevin Murphy},
booktitle={International Conference on Learning Representations},
year={2017},
}

@article{alquier2016properties,
  title={On the properties of variational approximations of {G}ibbs posteriors},
  author={Alquier, P. and Ridgway, J. and Chopin, N.},
  journal={Journal of Machine Learning Research},
  volume={17},
  number={236},
  pages={1--41},
  year={2016}
}

@inproceedings{haghifam2023limitations,
  title={Limitations of information-theoretic generalization bounds for gradient descent methods in stochastic convex optimization},
  author={Haghifam, Mahdi and Rodr{\'\i}guez-G{\'a}lvez, Borja and Thobaben, Ragnar and Skoglund, Mikael and Roy, Daniel M and Dziugaite, Gintare Karolina},
  booktitle={International Conference on Algorithmic Learning Theory},
  pages={663--706},
  year={2023},
  organization={PMLR}
}

@inproceedings{neu2021information,
  title={Information-theoretic generalization bounds for stochastic gradient descent},
  author={Neu, Gergely and Dziugaite, Gintare Karolina and Haghifam, Mahdi and Roy, Daniel M},
  booktitle={Conference on Learning Theory},
  pages={3526--3545},
  year={2021},
  organization={PMLR}
}

@article{Bolley2005WeightedCI,
  title={Weighted Csisz{\'a}r-Kullback-Pinsker inequalities and applications to transportation inequalities},
  author={François Bolley and C{\'e}dric Villani},
  journal={Annales de la Facult{\'e} des Sciences de Toulouse},
  year={2005},
  volume={14},
  pages={331-352},
  url={https://api.semanticscholar.org/CorpusID:18695658}
}

@inproceedings{
higgins2017betavae,
title={beta-{VAE}: Learning Basic Visual Concepts with a Constrained Variational Framework},
author={Irina Higgins and Loic Matthey and Arka Pal and Christopher Burgess and Xavier Glorot and Matthew Botvinick and Shakir Mohamed and Alexander Lerchner},
booktitle={International Conference on Learning Representations},
year={2017},
}

@inproceedings{jang2017categorical,
title={Categorical Reparameterization with {G}umbel-Softmax},
author={E. Jang and S. Gu and B. Poole},
booktitle={International Conference on Learning Representations},
year={2017},
url={https://openreview.net/forum?id=rkE3y85ee}
}

@inproceedings{maddison2017the,
title={The Concrete Distribution: {A} Continuous Relaxation of Discrete Random Variables},
author={C. J. Maddison and A. Mnih and Y. W. Teh},
booktitle={International Conference on Learning Representations},
year={2017},
url={https://openreview.net/forum?id=S1jE5L5gl}
}
\bibliographystyle{icml2024}

\newpage
\appendix
\onecolumn

\section{Notation used in the main paper}
We summarize the notation we used in the main part of our paper.

\begin{table}[th]
\centering
\scalebox{.7}{
\begin{tabular}{cll}
\hline
\multicolumn{1}{c}{\textbf{Category}}               & \multicolumn{1}{c}{\textbf{Symbol}} & \multicolumn{1}{c}{\textbf{Meaning}} \\ \hline \hline
\multirow{14}{*}{Data and model}             &  $n \in \mathbb{N}$ & The sample size \\
                                            &  $\calx,\calz$ &  A data and latent space \\
                                            &  $\cald$ & An unknown data generating distribution \\        
                                                    &  $\Delta \in \mathbb{R}^+$ & A radius of a data space \\                                            
                                                    &  $X \in \calx\subset \mathbb{R}^d$ &  A data \\
                                                    &  $S=\{S_i\}_{i=1}^n \in \calx^n$ &  A training dataset \\
                                                    &  $\mathbf{e}=\{e_j\}_{j=1}^K \in \calz^K$ & A codebook, where $K$ is the size of a codebook \\
                                                    &  $\phi\in \Phi\subset \mathbb{R}^{d_\phi}$ & An encoder parameter \\
                                                    &  $\theta\in \Theta\subset \mathbb{R}^{d_\theta}$ & A decoder parameter \\                                                  &  $W=\{\mathbf{e},\phi,\theta\}$ & A set of model parameters \\               
                                                    &  $f_{\phi}:\mathcal{X}\to \mathcal{Z}$ & An encoder network \\
                                                    &  $g_{\theta}:\mathcal{Z}\to \mathcal{X}$ & A decoder network \\ 
                                                    &  $q(J|\mathbf{e}, \phi, X)$ & A posterior distribution over $J$ given $\mathbf{e}, \phi, X$ \\                                                     
                                                    &  $\beta\in \mathbb{R}^+$ & A temperature parameter used in a softmax \\
                                                    &  $\caln (\delta,\calf,n)$ &A $\delta$-covering number with $n$ input for the encoder function class $\calf$ \\

 \hline
\multirow{6}{*}{Algorithm and loss functions}                                                               
                                                    &  $\mathcal{A}:\calx^n\to\calw$ & A randomized algorithm \\                                           &  $q(\mathbf{e}, \phi, \theta | S)$ & A randomized algorithm given $S$ \\
                                                    &  $l: \mathcal{X} \times \mathcal{X} \to \mathbb{R}$ & a reconstruction loss function \\
                                                    &  $l_0: \mathcal{W} \times \mathcal{X} \to \mathbb{R}$ & An expected loss function over $J$ \\                                                    
                                                    &  $\mathrm{gen}(\mu, \cald)$ & The expected generalization error based on a reconstruction loss\\
                                                    &  $W_2(\cald, \hat{\mu})$ & The 2-Wasserstein distance between $\cald$ and $\hat{\mu}$\\             \hline     
\multirow{10}{*}{Supersample setting} 
                                                    &  $\tilde{X}\in \calx^{2n}$ & A supersample used in the IT analysis \\ 
                                                    &  $\tilde{X}_m$ &The $m$-th row of $\tilde{X}$  \\ 
                                                    
                                                    &  $U=(U_1,\dots,U_n)\sim \text{Uniform}(\{0,1\}^n)$ & Random index used in the IT analysis \\  
                                                    &  $\tilde{X}_U\coloneqq(\tilde{X}_{m,U_m})_{m=1}^n$ & A training dataset in the supersample setting \\ 
                                                    &  $\tilde{X}_{\bar{U}}\coloneqq(\tilde{X}_{m,\bar{U}_m})_{m=1}^n$ & A test dataset in the supersample setting, where $\bar{U}_m = 1-U_m$ \\                  &  $q(\mathbf{J}|\mathbf{e},\phi,\tilde{X}_{U})\coloneqq\prod_{m=1}^n q(J_m|\mathbf{e},\phi,\tilde{X}_{m,U_m})$ & A joint distribution over index on the training dataset \\         
                                                    &  $q(\bar{\mathbf{J}}|\mathbf{e},\phi,\tilde{X}_{\bar{U}})\coloneqq\prod_{m=1}^n q(\bar{J}_m|\mathbf{e},\phi,\tilde{X}_{m,\bar{U}_m})$ & A joint distribution over index on the test dataset \\   
                                                    &$\mathbf{Q}_{\mathbf{\widetilde J},U}\coloneqq q(\bar{\mathbf{J}}|\mathbf{e},\phi,\tilde{X}_{\bar{U}})q(\mathbf{J}|\mathbf{e},\phi,\tilde{X}_{U})$ & A joint posterior distribution over $J$\\
                                                    &$\mathbf{Q}_{\mathbf{\widetilde J}}\coloneqq  \Ex_Uq(\bar{\mathbf{J}}|\mathbf{e},\phi,\tilde{X}_{\bar{U}})q(\mathbf{J}|\mathbf{e},\phi,\tilde{X}_{U})$  & A data-dependent prior distribution over $J$\\             
                                                    &  $\pi(J|\mathbf{e}, \phi)$ & A data-independent prior distribution over $J$ \\        
                                                    \hline
\multirow{8}{*}{Permutation symmetric setting}                          
                                                    &  $\mathbf{T}=\{T_1,\dots,T_{2n}\}\sim P(\mathbf{T})=1/(2n)!$  & A random permutation following a uniform distirubiton \\  
                                                    &  $\tilde X_{\mathbf{T}}=(\tilde X_{T_1},\dots,\tilde X_{T_{2n}})$ & Randomly permuted supersamples \\  
                                                    &  $\tilde X_{\mathbf{T}_0}=(\tilde X_{T_1},\dots,\tilde X_{T_{n}})$ & The test dataset  \\  
                                                    & $\tilde X_{\mathbf{T}_1}=(\tilde X_{T_{n+1}},\dots,\tilde X_{T_{2n}})$  & The training dataset \\  
                                                    & $q(\bar{\mathbf{J}}|\mathbf{e},\phi,\tilde{X}_{\mathbf{T}_0})=\prod_{m=1}^nq(\bar{J}_m|\mathbf{e},\phi,\tilde{X}_{T_{m}})$  & A joint distribution over index on the test dataset \\  
                                                    &  $q(\mathbf{J}|\mathbf{e},\phi,\tilde{X}_{ \mathbf{T}_1})=\prod_{m=1}^{n}q(J_{m}|\mathbf{e},\phi,\tilde{X}_{T_{n+m}})$ & A joint distribution over index on the training dataset \\  
                                                    &  $\mathbf{Q}_{\mathbf{\widetilde J},\mathbf{T}}=q(\bar{\mathbf{J}}|\mathbf{e},\phi,\tilde{X}_{\mathbf{T}_0})q(\mathbf{J}|\mathbf{e},\phi,\tilde{X}_{\mathbf{T}_1})$ & A joint posterior distribution over $J$ \\  
                                                    &  $\mathbf{Q}_{\mathbf{\widetilde J}}=\EX_{\mathbf{T}}q(\bar{\mathbf{J}}|\mathbf{e},\phi,\tilde{X}_{\mathbf{T}_0})q(\mathbf{J}|\mathbf{e},\phi,\tilde{X}_{\mathbf{T}_1})$ & A data-dependent prior distribution over $J$ \\  
                                                    &   &  \\  
                                                    &   &  \\  
                                              
\end{tabular}
}
\end{table}

\section{Additional discussion and related work}\label{app_sec_related}
Here, we provide additional discussion and a comparison between our study and existing work.
\subsection{Related work}
Here we briefly introduce additional related existing work, especially about the IT analysis. In IT analysis \citep{Xu2017}, the generalization error is evaluated on the basis of the MI between learned parameters and training data. This approach is closely related to the PAC-Bayes theory and has been extended through supersample settings \citep{steinke20a} to exploit the symmetry between test and training data. This setting has been applied to the study of generalization based on outputs of functions \citep{harutyunyan2021}, losses \citep{hellstrom2022a, wang23}, and hypothesis entropy \citep{dong2024rethinking}.  The relationship between IT analysis and the IB hypothesis has been discussed from numerical and algorithmic perspectives \citep{wang2022pacbayes, lyu2023recognizable}. More recently, \citet{sefidgaran2024minimum} theoretically studied latent variable models using IT analysis, demonstrating that generalization can be characterized by the complexity of the encoder and latent variables without relying on decoder information. They also developed a theoretical link among IT analysis, the IB hypothesis, and MDL by using compression bounds \citep{blum2003pac}.

There exist several analyses focusing on VQ-VAE and related architectures.
\citet{pmlr-v258-vuong25a} investigated the supervised setting of vector-quantized models, whereas our analysis is purely unsupervised. Beyond the supervised–unsupervised distinction, our work differs from theirs in several fundamental ways.
First, they study a continuous and differentiable relaxation of the discrete latent-variable model, making it more amenable to optimization analysis.
Second, this relaxation allows their framework to examine not only the generalization gap but also how the magnitude of the training loss itself is influenced by the discrete representation.
Finally, their generalization guarantees rely on uniform convergence bounds, effectively reducing the problem to $K$-class clustering. In contrast, our analysis is based on algorithm-dependent, information-theoretic bounds, following the line of work by \citet{sefidgaran2024minimum}.

Classical quantization theory also provides useful insights.
\citet{pollard2003quantization} and \citet{telgarsky2013moment} established asymptotic consistency results for $k$-means clustering. Although their setting—clustering without deep models—differs significantly from ours, the quantization procedure they analyze is conceptually related to the discrete latent representations used in VQ-VAE. Importantly, \citet{pmlr-v258-vuong25a} build their analysis upon these classical results. The main distinction is that the latter works provide asymptotic guarantees specific to clustering, whereas our analysis provides finite-sample, non-asymptotic guarantees for deep generative models. Nonetheless, the connection highlights how classical quantization theory can inform the study of modern deep architectures, and it suggests that such tools may prove valuable when extending our framework to analyze loss minimization in addition to generalization.

\subsection{Comparison with existing bounds}\label{app_compare_exist}
Here, we compare our bounds with those in existing work. Theorem~\ref{reconstructon_gen} resembles the results of \citet{mbacke2024statistical} since both bounds include the empirical KL term in the upper bounds, and the posterior distribution corresponds to the variational posterior distribution. The key difference is that \citet{mbacke2024statistical} assumed a fixed decoder, whereas our analysis incorporates the learning process under the assumption of a discrete latent space and a squared reconstruction loss.  Another distinction is that their generalization bound does not become $0$ as $n\to\infty$ due to two reasons. One is the presence of the empirical KL term, which we address in Theorem~\ref{reconstructon_gen_permutation} using permutation symmetry. Our technique can be regarded as developing the appropriate prior distribution in PAC-Bayes bound.  The second reason is the presence of the average distance $\frac{1}{n}\sum_{m=1}^n \Ex_X\|X-S_m\|$ in the existing bound, which is inherent to the data distribution and may not vanish as $n\to \infty$. Our use of the squared loss in the analysis mitigates this problematic term, as detailed in Appendix~\ref{proof_vqvae_reconstruction_loss}.

Our proof techniques are based on \citet{sefidgaran2024minimum}. However, we could not directly apply their methods, as the reconstruction loss reuses input data, unlike in classification settings. We resolve this by combining the data regeneration technique used in the proof of \citet{mbacke2024statistical}. Additionally, we introduced a new permutation symmetric setting, leading to a bound that controls mutual information in Theorem~\ref{reconstructon_gen_permutation}. Our setting is closely related to the type-2 symmetry proposed in \citet{sefidgaran2024minimum}, which involves random permutations selecting $n$ indices from $2n$ with a uniform distribution $1/\binom{2n}{n}$, whereas our setting requires the consideration of the order of the permutation index to evaluate the exponential moment (see Appendix~\ref{app_permutation_proof}). Finally, we theoretically studied the behavior of the CMI (Theorem~\ref{thme_metric}) focusing on the complexity of the encoder, whereas \citet{sefidgaran2024minimum} provided the bounds based on the CMI without such discussion.

The existing analyses based on the IB hypothesis \citep{Vera2018, HAFEZKOLAHI2020286, kawaguchi23a, vera2023role} assumed that both the latent variables and data are discrete, and their obtained bounds explicitly depend on the latent space size or show exponential dependence on the MI. In contrast, we assume that only latent variables are discrete and the resulting bound does not explicitly depend on the number of discrete states nor exhibit exponential dependence on MI. Furthermore, our bound shows the dependency on $d_z$ not $K$, which is the significant difference compared with existing bounds.

\subsection{Discussion and comparison of our prior and posterior and existing work}\label{app_intuiton}
\begin{figure*}[ht]
  \centering
  \begin{minipage}{0.48\linewidth}
    \centering
    \includegraphics[scale=1.2]{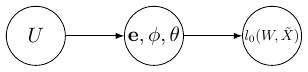}
  \end{minipage}
  \hfill
  \begin{minipage}{0.48\linewidth}
    \centering
    \includegraphics[width=\textwidth]{fig/cmi_prop.pdf}
  \end{minipage}
  \caption{Graphical models illustrating the different dependency structures of the random variables considered in the basic IT analysis and in this study. The left figure represents the dependency structure in the basic IT analysis, which simply evaluates the loss function in supervised learning settings, whereas the right figure corresponds to our analysis in the unsupervised learning setting.}
  \label{fig:ecmis}
\end{figure*}

Here, we explain how the prior distribution is used in our proof and why two prior distributions are introduced in our bound. First, the IT analysis with supersample reformulates generalization analysis as the problem of estimating which samples were used for training when data is randomly shuffled based on $U$. If this estimation is difficult, our model generalizes well. In the basic IT analysis (Theorem~\ref{naive_it_bound}), such difficulty is measured by the CMI between $U$ and the loss function $l_0$.

Of course, such shuffling is not performed in actual algorithms; it is introduced only for theoretical analysis using the Donsker-Varadhan inequality~\citep{gray2011entropy}, where such shuffling is defined by the prior and posterior distributions dependent on $U$.

In basic IT analyses (Theorem~\ref{naive_it_bound}) for supervised learning, by shuffling with $U$, we observe how the loss $l_0(W,\tilde{X})$ changes. Here, the goal is to estimate $U$ from the observed losses. As depicted in Figure~\ref{fig:ecmis}, $U$ and $l_0(W,\tilde{X})$ depend on all parameters, including the decoder, resulting in a bound that depends on all parameters.

Our goal is to eliminate the dependency between the decoder and latent variables (LVs). To achieve this, we introduce a prior and posterior that establish the dependency as depicted in Figure~\ref{fig:ecmis}. The key idea is that by introducing a new dependency between $U$ and LVs, we can directly shuffle $U$, leading to a bound that isolates the role of LVs without involving the decoder. For additional discussions on the necessary conditions for the prior, see Appendix~\ref{discuss_proof_marginal}.

Finally, we show the additional explanation of Figure~\ref{fig:two-pdfs}. The figure illustrates the difference between the existing fCMI and our new CMI. The left figure illustrates the setting of existing fCMI where $\widetilde{J}$ follows the distribution in the setting of Eq.~(\ref{DPI_usual_supersample}), see Appnexdix~\ref{app_existing_fcmi_difference} for the detail. Thus, in the existing fCMI, $\widetilde{J}$ and $U$ are conditionally independent given $\mathbf{e}$ and $\phi$ and $\tilde{X}$. On the other hand, the right figure is our setting and there is an edge between $U$ and $\widetilde{J}$ directly, and thus $\widetilde{J}$ and $U$ are conditionally independent given $\mathbf{e}$ and $\phi$ and $\tilde{X}$, which results in the difference of existing fCMI and our CMI. See Appendix~\ref{app_difference_fcmi} and Appendix~\ref{sec_compare_fcmi} for the additional discussion about the fCMI.

\section{Proofs for Section~\ref{sec:Preliminaries} and additional discussion }\label{app_backgound}
\subsection{Proof of Theorem~\ref{naive_it_bound}}
This is just the consequence of the existing eCMI bound~\citep{hellstrom2022a}. We can confirm this as follows;

    Note that the generalization error can be expressed as the supersample
    \begin{align}
        &\mathrm{gen}(n,\cald)\\
        &=\Bigg|\Ex_{S,X}\Ex_{q(\mathbf{e},\phi,\theta|S)}\Big(\Ex_{q(J|\mathbf{e},\phi,X)}l(X,g_\theta(e_J))-\frac{1}{n}\sum_{m=1}^n \Ex_{q(J_m|\mathbf{e},\phi,S_m)}l(S_m,g_\theta(e_{J_m}))\Big)\Bigg|\\
        &=\Bigg|\EX_{\tilde{X},U}\EX_{q(\mathbf{e},\phi,\theta|X_U)}\Big(\frac{1}{n}\sum_{m=1}^n \Ex_{q(\bar{J}_m|\mathbf{e},\phi,X_{m,\bar{U}_m})}l(X_{m,\bar{U}_m},g_\theta(e_{J_m}))\\
        &\quad\quad\quad\quad\quad\quad\quad\quad\quad\quad\quad\quad-\frac{1}{n}\sum_{m=1}^n\Ex_{q(J_m|\mathbf{e},\phi,X_{m,U_m})}l((X_{m,U_m},g_\theta(e_{J_m}))\Big)\Bigg|.
    \end{align}
    Given that the loss is bounded by $[0,\Delta]$, the integrated is a $\Delta$-sub-Gaussian random variable.  Thus, from \citet{hellstrom2022a}, the generalization error bound that satisfies the $\sigma^2$ sub Gaussianity is bounded as $\sqrt{\frac{2\sigma^2}{n}I(l(\mathcal{A}(\tilde{X}_U),\tilde{X});U|\tilde{X})}$, we obtain the result. Finally $I(l_0(W,\tilde{X});U|\tilde{X})\leq I(W;U|\tilde{X})$ holds by the data processing inequality.
\subsection{Proof for Eq.~(\ref{DPI_usual_supersample}) and additional discussion}\label{app_existing_fcmi_difference}
Here we prove Eq.~\eqref{DPI_usual_supersample}. It is important to note that this upper bound is characterized by the CMI $I(l_0(W,\tilde{X});U|\tilde{X})$. This CMI depends on the decoder and encoder information, distinguishing it from the results presented in our main Theorems~\ref{reconstructon_gen} and \ref{reconstructon_gen_permutation}, which do not require the decoder's information. 

To clarify this distinction, let us introduce the necessary notation. Following the notation in Section~\ref{sec_main_CMI_vqvae}, we define $\tilde{Y}=g_\theta(e_{\widetilde{J}})$, where $g_\theta(e_{\widetilde{J}})$ implies applying $g_\theta(\cdot)$ elementwise to $e_{\widetilde{J}}$. Under these notations, we have the following relations: 
\begin{align}
    I(l_0(W,\tilde{X});U|\tilde{X})\leq I(\tilde{Y};U|\tilde{X})\leq I(\theta;U|\tilde{X})+I(e_{\tilde{\mathbf{J}}};U|\tilde{X},\theta),
\end{align}
where the first inequality is obtained by the data processing inequality (DPI) and the second inequality is obtained by the chain rule of CMI and the DPI. This result demonstrates that the decoder information cannot be eliminated from the basic IT bound, which clarifies the fundamental difference compared to our result (Theorems~\ref{reconstructon_gen} and \ref{reconstructon_gen_permutation}). Moreover, since the decoder and encoder are learned simultaneously using the same training data, they are not independent. This makes it unclear how the latent variables and the encoder's capacity affect generalization, as it is difficult to eliminate the decoder's dependency on them.

\subsection{Additional discussion when \texorpdfstring{$K=1$}{Lg}}\label{app_K_1_naive}
Another limitation of the basic IT-bound arises when considering $K=1$ as a limiting setting. From the definition of the squared loss, the generalization error is given by:
\begin{align}\label{K_1_naive}
\mathrm{gen}(n,\mathcal{D})\leq \sqrt{\mathrm{Var}[X]\frac{\mathbb{E}\|g_\theta(e)\|^2}{n}}\leq \frac{\Delta}{\sqrt{n}}.
\end{align}
The proof of this is described below. This upper bound is intuitive: for $K=1$, the model effectively ignores the input data and embeds all samples into the same latent variable, which can be interpreted as a form of strong regularization. Consequently, the impact of overfitting due to training the decoder network is relatively limited, and the generalization error can be seen, in a sense, as being comparable to the inherent variability of the data itself.

The above observations motivate us to develop a more sophisticated generalization bound that explicitly captures the role of representation.

\begin{proof}[Proof of Eq.~\eqref{K_1_naive}]
Since $K=1$, we express $\mathbf{e}=\{e\}$. By using the definition of the squared loss, we have
\begin{align}
&\mathrm{gen}(n,\mathcal{D})=\Big|\mathbb{E}_{S}\mathbb{E}_{q(e,\phi,\theta|S)}\left(\mathbb{E}[X]-\frac{1}{n}\sum_{m=1}^nS_m\right)\cdot g_\theta(e))\Big|,
\end{align}
where we used the fact that the generated data always use $e$ as a latent variable since $\mathbf{e}=\{e\}$ when $K=1$. Then by using the Cauchy-Schwartz inequality, we have
\begin{align}
&\mathrm{gen}(n,\mathcal{D})\leq \sqrt{\mathrm{Var}[X]\frac{\mathbb{E}\|g_\theta(e)\|^2}{n}}\leq \frac{\Delta}{\sqrt{n}},
\end{align}
where we used the fact that the diameter of the instance space is bounded by $\Delta$.
\end{proof}

\section{Proofs for Section~\ref{sec_main}}\label{sec_cmi_main_proofs}
In the proofs, we repeatedly use the following type of exponential moment inequality, which is often used in the proof of McDiarmid's inequality. A function $f: \calx^n \to \mathbb{R}$ has the bounded differences property if for some nonnegative constants $c_1, \dots, c_n$, the following holds for all $i$:
\begin{align}
    \sup_{x_1,\dots,x_n,x_i'\in\calx}|f(x_1,\dots,x_n)-f(x_1,\dots,x_{i-1},x'_i,x_{i+1},\dots,x_n)|\leq c_i,\quad 1\leq i\leq n.
\end{align}
Assuming $X_1, \dots, X_n$ are independent random variables taking values in $\calx$, we have the following lemma:
\begin{lemma}[Used in the proof of McDiarmid's inequality]\label{lem_mcdiamid}
 Given a function $f$ with the bounded differences property, for any $t \in \mathbb{R}$, we have:
\begin{align}
\Ex\left[e^{t (f(X_1,\dots,X_n)-\Ex[f(X_1,\dots,X_n)])}\right]\leq e^{\frac{t^2}{8}\sum_{i=1}^n c_i^2}.
\end{align}
\end{lemma}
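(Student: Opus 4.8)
The plan is to prove this via the Doob martingale decomposition combined with Hoeffding's lemma; this is precisely the argument underlying McDiarmid's inequality. First I would introduce the Doob martingale associated with $f$: set $Z_i \coloneqq \mathbb{E}[f(X_1,\dots,X_n)\mid X_1,\dots,X_i]$ for $i=0,\dots,n$, so that $Z_0=\mathbb{E}[f(X_1,\dots,X_n)]$ and $Z_n=f(X_1,\dots,X_n)$, and define the martingale differences $D_i\coloneqq Z_i-Z_{i-1}$. Then $f(X_1,\dots,X_n)-\mathbb{E}[f(X_1,\dots,X_n)]=\sum_{i=1}^n D_i$, and $\mathbb{E}[D_i\mid X_1,\dots,X_{i-1}]=0$ by the tower property.

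Next I would show that, conditioned on $X_1,\dots,X_{i-1}$, the difference $D_i$ almost surely lies in an interval of width at most $c_i$. Using independence of the $X_j$, write $Z_i=h_i(X_1,\dots,X_i)$, where $h_i(x_1,\dots,x_i)\coloneqq\mathbb{E}[f(x_1,\dots,x_i,X_{i+1},\dots,X_n)]$ and the remaining coordinates are integrated against their own law. For any two candidate values $x_i,x_i'$ of the $i$-th coordinate, the bounded differences property together with Jensen's inequality gives $|h_i(x_1,\dots,x_{i-1},x_i)-h_i(x_1,\dots,x_{i-1},x_i')|\le\mathbb{E}\big[\,|f(\dots,x_i,\dots)-f(\dots,x_i',\dots)|\,\big]\le c_i$. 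Hence, conditionally on $X_1,\dots,X_{i-1}$, the variable $D_i=h_i(X_1,\dots,X_i)-Z_{i-1}$ takes values in an interval $[A_i,A_i+c_i]$ whose endpoint $A_i$ is $(X_1,\dots,X_{i-1})$-measurable.

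Then I would apply Hoeffding's lemma conditionally: since $\mathbb{E}[D_i\mid X_1,\dots,X_{i-1}]=0$ and $D_i$ lies in a conditional interval of length at most $c_i$, we get $\mathbb{E}[e^{tD_i}\mid X_1,\dots,X_{i-1}]\le e^{t^2 c_i^2/8}$ for every $t\in\mathbb{R}$; Hoeffding's lemma itself follows in one line from convexity of $z\mapsto e^{tz}$ and optimizing the cumulant generating function of a bounded, mean-zero random variable, so I would either cite it or dispatch it in a sentence. Finally I would iterate, peeling off one martingale difference at a time via the tower property:
\[
\mathbb{E}\!\left[e^{t\sum_{i=1}^n D_i}\right]=\mathbb{E}\!\left[e^{t\sum_{i=1}^{n-1}D_i}\,\mathbb{E}\!\left[e^{tD_n}\mid X_1,\dots,X_{n-1}\right]\right]\le e^{t^2 c_n^2/8}\,\mathbb{E}\!\left[e^{t\sum_{i=1}^{n-1}D_i}\right],
\]
and repeating down to $i=1$ yields $\mathbb{E}\!\left[e^{t(f(X_1,\dots,X_n)-\mathbb{E}[f(X_1,\dots,X_n)])}\right]\le e^{\frac{t^2}{8}\sum_{i=1}^n c_i^2}$, which is the claim.

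The main obstacle — really the only nontrivial point — is the conditional range bound on $D_i$: one must be careful that it is the oscillation of the conditional expectation $h_i$ in its \emph{last} argument that is being controlled, not the joint oscillation of $f$ across all coordinates, and that it is precisely independence of the $X_j$ that lets $Z_i$ be realized as the deterministic function $h_i$ of $X_1,\dots,X_i$ with the remaining coordinates integrated out against their own marginal. Once this is in place, Hoeffding's lemma and the martingale tower recursion are routine.
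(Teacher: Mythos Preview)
Your argument is correct and is exactly the classical Doob-martingale-plus-Hoeffding's-lemma proof of the exponential moment bound underlying McDiarmid's inequality. The paper itself does not give a proof of this lemma at all: it is stated as a known fact (``often used in the proof of McDiarmid's inequality'') and then invoked repeatedly, so there is nothing to compare against beyond noting that your proposal fills in precisely the standard proof the authors had in mind.
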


\subsection{Proof of Theorem~\ref{reconstructon_gen}}\label{proof_vqvae_reconstruction_loss}
We express $q(\tilde{\mathbf{J}}|\mathbf{e},\phi,\tilde{X})=q(\bar{\mathbf{J}},\mathbf{J}|\mathbf{e},\phi,\tilde{X}_{\bar{U}},\tilde{X}_{U})=q(\bar{\mathbf{J}}|\mathbf{e},\phi,\tilde{X}_{\bar{U}})q(\mathbf{J}|\mathbf{e},\phi,\tilde{X}_{U})$. Hereinafter, we simplify the notation by expressing $\tilde{X}$ as $X$. For simplification in the proof, we omit the absolute operation for the generalization gap. The reverse bound can be proven in a similar manner. We first express the generalization error of the reconstruction loss using the supersample as follows
\begin{align}
&\sum_{k=1}^{K}\frac{1}{n}\sum_{m=1}^n \Ex_{q(\bar{J}_m|\mathbf{e},\phi,X_{m,\bar{U}_m})q(\mathbf{e},\phi,\theta|X_U)}l(X_{m,\bar{U}_m},g_\theta(e_k))\mathbbm{1}_{k=\bar{J}_m}\\
&\quad\quad\quad\quad\quad\quad-\sum_{k=1}^{K}\frac{1}{n}\sum_{m=1}^n\Ex_{q(J_m|\mathbf{e},\phi,X_{m,U_m})q(\mathbf{e},\phi,\theta|X_U)}l((X_{m,U_m},g_\theta(e_k))\mathbbm{1}_{k=J_m}\notag \\
&=\sum_{k=1}^{K}\frac{1}{n}\sum_{m=1}^n \Ex_{q(\bar{J}_m|\mathbf{e},\phi,X_{m,\bar{U}_m})q(\mathbf{e},\phi,\theta|X_U)}\|X_{m,\bar{U}_m}-g_\theta(e_k)\|^2\mathbbm{1}_{k=\bar{J}_m}\\
&\quad\quad\quad\quad\quad\quad-\sum_{k=1}^{K}\frac{1}{n}\sum_{m=1}^n\Ex_{q(J_m|\mathbf{e},\phi,X_{m,U_m})q(\mathbf{e},\phi,\theta|X_U)}\|X_{m,U_m}-g_\theta(e_k)\|^2\mathbbm{1}_{k=J_m}\label{eq_derivation_1},
\end{align}
where the first term corresponds to the test loss and the second term corresponds to the training loss.

Recall the learning algorithm and posterior distribution:
\begin{align}
&\mathbf{e},\phi,\theta\sim q(\mathbf{e},\phi,\theta|X_U),\\
&J_m\sim q(\mathbf{J}|\mathbf{e},\phi,S_m).
\end{align}
Here $\mathbf{e}=\{e_1,\dots,e_K\}$ is the codebook, and $J$ and $\mathbf{J}=\{J_1,\dots,j_n\}$ represents the index chosen from the codebook. 

Conditioned on $X$ and $U$, we then decompose Eq.~\eqref{eq_derivation_1} as follows
\begin{align}
&\sum_{k=1}^{K}\frac{1}{n}\sum_{m=1}^n \Ex_{q(\mathbf{e},\phi,\theta|X_U)} l(X_{m,\bar{U}_m},g_\theta(e_k))\Ex_{q(\bar{J}_m|\mathbf{e},\phi,X_{m,\bar{U}_m})}\mathbbm{1}_{k=\bar{J}_m} \notag \\   
&-\sum_{k=1}^{K}\frac{1}{n}\sum_{m=1}^n \Ex_{q(\mathbf{e},\phi,\theta|X_U)} l(X_{m,\bar{U}_m},g_\theta(e_k))\Ex_{q(J_m|\mathbf{e},\phi,X_{m,U_m})}\mathbbm{1}_{k=J_m} \notag \\   
&+\sum_{k=1}^{K}\frac{1}{n}\sum_{m=1}^n \Ex_{q(\mathbf{e},\phi,\theta|X_U)} l(X_{m,\bar{U}_m},g_\theta(e_k))\Ex_{q(J_m|\mathbf{e},\phi,X_{m,U_m})}\mathbbm{1}_{k=J_m} \notag \\   
&-\sum_{k=1}^{K}\frac{1}{n}\sum_{m=1}^n \Ex_{q(\mathbf{e},\phi,\theta|X_U)} l(X_{m,U_m},g_\theta(e_k))\Ex_{q(J_m|\mathbf{e},\phi,X_{m,U_m})}\mathbbm{1}_{k=J_m} .\label{proof_strategy_overall}
\end{align}
We will separately upper bound these terms.

\subsubsection{Bounding first and second terms}\label{app_proof_first_second_cmi}
The decomposition of the generalization error, as shown in Eq.~\eqref{proof_strategy_overall}, allows us to bound the first and second terms as follows.

We apply Donsker-Varadhan's inequality between the following two distributions:
\begin{align}
    &\mathbf{Q}\coloneqq P(U)q(\mathbf{e},\phi,\theta|X_U)q(\mathbf{\bar{J}},\mathbf{J}|\mathbf{e},\phi,X_{\bar{U}},X_{U})\\
    &\mathbf{P}_S\coloneqq P(U)q(\mathbf{e},\phi,\theta|X_U)\EX_{P(U')}q(\mathbf{\bar{J}},\mathbf{J}|\mathbf{e},\phi,X_{\bar{U'}},X_{U'})\label{eq_prior_supersample_symmetry}.
\end{align}
These correspond to the posterior and data-dependent prior distributions defined in Section~\ref{sec_main_CMI_vqvae}.

Then, for any $\lambda\in\mathbb{R}^+$, we have
\begin{align}    
&\sum_{k=1}^{K}\frac{1}{n}\sum_{m=1}^n \Ex_{q(\mathbf{e},\phi,\theta|X_U)} l(X_{m,\bar{U}_m},g_\theta(e_k))\left(\Ex_{q(\bar{J}_m|\mathbf{e},\phi,X_{m,\bar{U}_m})}\mathbbm{1}_{k=\bar{J}_m}-\Ex_{q(J_m|\mathbf{e},\phi,X_{m,U_m})}\mathbbm{1}_{k=J_m}\right)\notag \\
&\leq \frac{1}{\lambda}\KLD(\mathbf{Q}|\mathbf{P}_S)+\frac{1}{\lambda}\log\Ex_{\mathbf{P}_S}
\exp\left(\frac{\lambda}{n}\sum_{k=1}^{K}\sum_{m=1}^nl(X_{m,\bar{U}_m},g_\theta(e_k))\left(\mathbbm{1}_{k=\bar{J}_m}-\mathbbm{1}_{k=J_m}\right)\right).
\end{align}
To simplify the notation, we express $\bar{\mathbf{J}} = \mathbf{J}_0$, $\bar{J}_m=J_{m,0}$, $\mathbf{J}=\mathbf{J}_{1}$, and $J_m=J_{m,1}$. Let $U''$ be a random variable taking ${0,1}$ with a uniform distribution. Since $\mathbf{P}_S$ is symmetric with respect to the permutation of $\mathbf{J}_0$ and $\mathbf{J}_1$, we can bound the exponential moment as:
\begin{align}
    &\log\Ex_{P(U)q(\mathbf{e},\phi,\theta|X_U)\EX_{P(U')}q(\mathbf{J}_0,\mathbf{J}_1|\mathbf{e},\phi,X_{\bar{U'}},X_{U'})}\exp\left(\frac{\lambda}{n}\sum_{k=1}^{K}\sum_{m=1}^nl(X_{m,\bar{U}_m},g_\theta(e_k))\left(\mathbbm{1}_{k=J_{m,0}}-\mathbbm{1}_{k=J_{m,1}}\right)\right)\notag \\
    & = \log\Ex_{P(U)q(\mathbf{e},\phi,\theta|X_U)P(U'')^n\EX_{P(U')}q(\mathbf{J}_0,\mathbf{J}_1|\mathbf{e},\phi,X_{\bar{U'}},X_{U'})P(U'')^N}\\
    &\quad\quad\quad\quad\exp\left(\!\frac{\lambda}{n}\sum_{k=1}^{K}\sum_{m=1}^nl(X_{m,\bar{U}_m},g_\theta(e_k))\left(\!\mathbbm{1}_{k=J_{m,\bar{U}''}}\!-\!\mathbbm{1}_{k=J_{m,U''}}\!\right)\!\right)\notag \\
    & = \log\Ex_{P(U)q(\mathbf{e},\phi,\theta|X_U)\EX_{P(U')}q(\mathbf{J}_0,\mathbf{J}_1|\mathbf{e},\phi,X_{\bar{U'}},X_{U'})}\Ex_{P(U'')^n}\\
    &\quad\quad\quad\quad\exp\left(\frac{\lambda}{n}\sum_{k=1}^{K}\sum_{m=1}^nl(X_{m,\bar{U}_m},g_\theta(e_k))\left(\mathbbm{1}_{k=J_{m,\bar{U}''}}-\mathbbm{1}_{k=J_{m,U''}}\right)\right).
\end{align}
In the final line, we apply McDiarmid's inequality since $U''^n$ are $n$ i.i.d. random variables. To use McDiarmid's inequality in Lemma~\ref{lem_mcdiamid}, we use the stability caused by replacing one of the elements of $n$ i.i.d. random variables. To estimate the coefficients of stability in Lemma~\ref{lem_mcdiamid}, let $U''^n=(U''_1,\dots,U''_N)$, then
\begin{align}\label{mcdiarmid_estimate}
&\sup_{\{U''_m\}_{m=1}^{n},U'''_{m'}}\Bigg|\frac{\lambda}{n}\sum_{k=1}^{K}\sum_{m=1}^nl(X_{m,\bar{U}_m},g_\theta(e_k))\left(\mathbbm{1}_{k=J_{m,\bar{U}''_m}}-\mathbbm{1}_{k=J_{m,U''_m}}\right)\\
&\quad\quad\quad\quad\quad\quad-\frac{\lambda}{n}\sum_{k=1}^{K}\sum_{m\neq m'}^nl(X_{m,\bar{U}_m},g_\theta(e_k))\left(\mathbbm{1}_{k=J_{m,\bar{U}''_m}}-\mathbbm{1}_{k=J_{m,U''_m}}\right)\\
&\quad\quad\quad\quad\quad\quad-\frac{\lambda}{n}\sum_{k=1}^{K}l(X_{m',\bar{U}_m'},g_\theta(e_k))\left(\mathbbm{1}_{k=J_{m',\bar{U}'''_{m'}}}-\mathbbm{1}_{k=J_{m',U'''_{m'}}}\right)\Bigg|\\
&=\sup_{\{U''_m\}_{m=1}^{n},U'''_{m'}}\Bigg|\frac{\lambda}{n}\sum_{k=1}^{K}l(X_{m',\bar{U}_m'},g_\theta(e_k))\left(\mathbbm{1}_{k=J_{m',\bar{U}''_{m'}}}-\mathbbm{1}_{k=J_{m',U''_{m'}}}\right)\\
&\quad\quad\quad\quad\quad\quad-\frac{\lambda}{n}\sum_{k=1}^{K}l(X_{m',\bar{U}_m'},g_\theta(e_k))\left(\mathbbm{1}_{k=J_{m',\bar{U}'''_{m'}}}-\mathbbm{1}_{k=J_{m',U'''_{m'}}}\right)\Bigg|
\leq \frac{2\lambda\Delta}{n}.
\end{align}
Here, the maximum change caused by replacing one element of $U''$ is $2\lambda\Delta/n$, thus, its log of the exponential moment is bounded by $(2\lambda \Delta/n)^2/8\times n=\lambda^2\Delta^2/2n$. Thus from Lemma~\ref{lem_mcdiamid}, we have
\begin{align}
    &\log\Ex_{P(U)q(\mathbf{e},\phi,\theta|X_U)\EX_{P(U')}q(\mathbf{J}_0,\mathbf{J}_1|\mathbf{e},\phi,X_{\bar{U'}},X_{U'})}\exp\left(\frac{\lambda}{n}\sum_{k=1}^{K}\sum_{m=1}^nl(X_{m,\bar{U}_m},g_\theta(e_k))\left(\!\mathbbm{1}_{k=J_{m,0}}\!-\!\mathbbm{1}_{k=J_{m,1}}\!\right)\!\right)\\
    &\leq \frac{\lambda^2\Delta^2}{2n}.
\end{align}
The first and second terms in Eq.~\eqref{proof_strategy_overall} are upper bounded by
\begin{align}\label{app_proof_first_final}
    \frac{1}{\lambda}\Ex_{X}\KLD(\mathbf{Q}|\mathbf{P}_S)+\frac{\lambda \Delta^2}{2n}.
\end{align}

\subsubsection{Bounding third and fourth terms}\label{sec_fourth_third_bound}
Next, we upper bound the third and fourth terms in Eq.~\eqref{proof_strategy_overall};
\begin{align}
&\sum_{k=1}^{K}\frac{1}{n}\sum_{m=1}^n \Ex_{q(\mathbf{e},\phi,\theta|X_U)} l(X_{m,\bar{U}_m},g_\theta(e_k))\Ex_{q(J_m|\mathbf{e},\phi,X_{m,U_m})}\mathbbm{1}_{k=J_m}\\
&-\sum_{k=1}^{K}\frac{1}{n}\sum_{m=1}^n \Ex_{q(\mathbf{e},\phi,\theta|X_U)} l(X_{m,U_m},g_\theta(e_k))\Ex_{q(J_m|\mathbf{e},\phi,X_{m,U_m})}\mathbbm{1}_{k=J_m}.\label{eq_derivation_cmi_third_fourth}
\end{align}
We simplify the notation by expressing $\Ex_{q(J_m|\mathbf{e},\phi,X_{m,U_m})}\mathbbm{1}_{k=J_m}$ as $P_{k,m}$ and use the square loss:
\begin{align}
    &\Ex_{X,U}\sum_{k=1}^{K}\frac{1}{n}\sum_{m=1}^n \Ex_{q(\mathbf{e},\phi,\theta|X_U)} l(X_{m,\bar{U}_m},g_\theta(e_k))P_{k,m}\!-\!\sum_{k=1}^{K}\frac{1}{n}\sum_{m=1}^n \Ex_{q(\mathbf{e},\phi,\theta|X_U)} l(X_{m,U_m},g_\theta(e_k))P_{k,m}\notag \\
    &=\Ex_{X,U}\sum_{k=1}^{K}\frac{1}{n}\sum_{m=1}^n \Ex_{q(\mathbf{e},\phi,\theta|X_U)} \left(\|X_{m,\bar{U}_m}\|^2-\|X_{m,U_m}\|^2\right)P_{k,m}\notag \\
    &+\Ex_{X,U}\sum_{k=1}^{K}\frac{2}{n}\sum_{m=1}^n \Ex_{q(\mathbf{e},\phi,\theta|X_U)} \left(X_{m,\bar{U}_m}-X_{m,U_m}\right)\cdot g_\theta(e_k) P_{k,m}\notag \\
    &=\Ex_{X,U}\frac{1}{n}\sum_{m=1}^n \left(\|X_{m,\bar{U}_m}\|^2-\|X_{m,U_m}\|^2\right)\Ex_{q(\mathbf{e},\phi,\theta|X_U)} \sum_{k=1}^{K}P_{k,m}\notag \\
    &+\Ex_{S}\frac{2}{n}\sum_{m=1}^n \left(\Ex_X X-S_{m}\right)\cdot \Ex_{q(\mathbf{e},\phi,\theta|S)}  \sum_{k=1}^{K} g_\theta(e_k) P_{k,m}\notag \\
    &=\Ex_{S}\frac{2}{n}\sum_{m=1}^n \left(\Ex_X X-S_{m}\right)\cdot \Ex_{q(\mathbf{e},\phi,\theta|S)}  \sum_{k=1}^{K} g_\theta(e_k) P_{k,m},\label{eq_bound_third_fourth_expand}
\end{align}
where we express $S=(X_{1,U_1},\dots,X_{n,U_n})=(S_1,\dots,S_n)$ as the training samples. In the last inequality,  we used $\sum_{k=1}^{K}P_{k,m}=1$ and $\Ex_{X,U}\frac{1}{n}\sum_{m=1}^n \left(\|X_{m,\bar{U}_m}\|^2-\|X_{m,U_m}\|^2\right)=0$ since $X$ and $U$ are i.i.d. 

To evaluate the final line, we use the Donsker-Valadhan inequality between
\begin{align}\label{cmi_third_fourth}
    &\mathbf{Q}\coloneqq q(\mathbf{e},\phi,\theta|S)\prod_{m=1}^nq(J_m|\mathbf{e},\phi,S_m),\\
    &\mathbf{P}_S\coloneqq q(\mathbf{e},\phi,\theta|S)\prod_{m=1}^n \pi(J_m|\mathbf{e},\phi),
\end{align}
where $\pi(J_m|\mathbf{e},\phi)$ is the prior distribution, which never depends on the training data. 

Then we have
\begin{align}
  &\Ex_{S}\frac{2}{n}\sum_{m=1}^n \left(\Ex_X X-S_{m}\right)\cdot \Ex_{q(\mathbf{e},\phi,\theta|S)}  \sum_{k=1}^{K} g_\theta(e_k) P_{k,m} \notag \\
  &\leq \Ex_S\frac{1}{\lambda}\KLD(\mathbf{Q}|\mathbf{P}_S)  + \Ex_S\frac{1}{\lambda}\log\Ex_{\mathbf{P}_S}\exp\left(\frac{2\lambda}{n}\sum_{m=1}^n \left(\Ex_X S-X_{m}\right)\cdot \Ex_{q(\mathbf{e},\phi,\theta|S)}  \sum_{k=1}^{K} g_\theta(e_k) \mathbbm{1}_{k=J_m}\right)\notag \\
  &\leq \Ex_S\frac{1}{\lambda}\KLD(\mathbf{Q}|\mathbf{P}_S)  \\
  &\quad + \Ex_S\frac{1}{\lambda}\log\Ex_{\mathbf{P}_S}\exp\left(\frac{2\lambda}{n}\sum_{m=1}^n \left(\Ex_X X-S_{m}\right)\cdot \sum_{k=1}^{K} g_\theta(e_k) (\mathbbm{1}_{k=J_m}-P''_{k,m})\right)\\
  &\quad +\Ex_S\Ex_{\mathbf{P}_S}\frac{2}{n}\sum_{m=1}^n \left(\Ex_X X-S_{m}\right)\cdot \sum_{k=1}^{K} g_\theta(e_k)P''_{k,m},\label{app_eq_proof_supersmmetry}
\end{align}
where $P''_{k,m}=\Ex_{q(J_m|\phi,\mathbf{e})}\mathbbm{1}_{k=J_m}$. Clearly, this does not depend on the index $m$, so we express $P''_{k,m}=P''_k$. Then the last term becomes
\begin{align}
    \Ex_S\Ex_{\mathbf{P}_S}\frac{1}{n}\sum_{m=1}^n \left(\Ex_X X-S_{m}\right)\cdot \sum_{k=1}^{K} g_\theta(e_k)P''_{k}&\leq \Ex_S\Ex_{\mathbf{P}_S} \left\|\Ex_X X-\frac{1}{n}\sum_{m=1}^nS_{m}\right\|\|\sum_{k=1}^{K} g_\theta(e_k)P''_{k}\|\\
    &\leq \Ex_S\left\|\Ex_X X-\frac{1}{n}\sum_{m=1}^nS_{m}\right\|\sqrt{\Delta}\\
    &\leq \sqrt{\Delta\mathrm{Var}\left(\frac{1}{n}\sum_{m=1}^nS_{m}\right)}\\
    &\leq \sqrt{\Delta\frac{\mathrm{Var}\left(X\right)}{n}}\\
    &\leq \sqrt{\frac{\Delta}{4n}}\sqrt{\Delta}=\frac{\Delta}{2\sqrt{n}},\label{efron_stein_estimate}
\end{align}
where we used the fact that the variance of random variables with bounded in $(a,b]$ is upper bounded by $(b-a)^2/4n$ (the extension to the $d$-dimensional random variable is straightforward) and thus, $\mathrm{Var}\left(X\right)\leq \Delta/4$. Then the exponential moment term becomes
\begin{align}
  &\Ex_S\frac{1}{\lambda}\log\Ex_{\mathbf{P}_S}\exp\left(\frac{2\lambda}{n}\sum_{m=1}^n \left(\Ex_X X-S_{m}\right)\cdot \sum_{k=1}^{K} g_\theta(e_k) (\mathbbm{1}_{k=J_m}-P''_{k,m})\right)\\
    &=\Ex_S\frac{1}{\lambda}\log\Ex_{\mathbf{P}_S}\exp\left(\frac{2\lambda}{n}\sum_{m=1}^n \left(\Ex_X X-S_{m}\right)\cdot \sum_{k=1}^{K} g_\theta(e_k) (\mathbbm{1}_{k=J}-P''_{k})\right).
\end{align}
Here we use the McDiarmid's inequality for $n$ random variables $\mathbf{J}$. Then we estimate the stability coefficient similarly to Eq.~\eqref{mcdiarmid_estimate}, which is upper bounded by $\lambda\Delta/n$. Then from Lemma~\ref{lem_mcdiamid}, the exponential moment is bounded by $(2\lambda\Delta/n)^2/8\times n=\lambda\Delta^2/2n$
Thus, the second term is upper bounded by
\begin{align}\label{app_proof_second_final}
    \frac{1}{\lambda}\KLD(\mathbf{Q}|\mathbf{P}_S)  +\frac{\lambda\Delta^2}{2n}+\frac{\Delta}{\sqrt{n}}.
\end{align}
By optimizing the first and second terms of Eqs.~\eqref{app_proof_first_final} and \eqref{app_proof_second_final}, we have
\begin{align}
    2\Delta\sqrt{\frac{(\Ex_{\tilde{X},U}\Ex_{q(\mathbf{e},\phi,\theta|X_U)}\KLD(\mathbf{Q}_1\|\mathbf{Q}_2)+\Ex_{S}\Ex_{q(\mathbf{e},\phi,\theta|S)}\KLD(\mathbf{Q}|\mathbf{P}_S))}{n}}+\frac{\Delta}{\sqrt{n}},
\end{align}
where 
\begin{align}
    &\mathbf{Q}_1\coloneqq q(\mathbf{\bar{J}},\mathbf{J}|\mathbf{e},\phi,X_{\bar{U}},X_{U})\\
    &\mathbf{Q}_2\coloneqq \EX_{P(U')}q(\mathbf{\bar{J}},\mathbf{J}|\mathbf{e},\phi,X_{\bar{U'}},X_{U'}),\\
    &\mathbf{Q}\coloneqq \prod_{m=1}^nq(J_m|\mathbf{e},\phi,S_m),\\
    &\mathbf{P}_S\coloneqq \prod_{m=1}^n \pi(J_m|\mathbf{e},\phi).
\end{align}

\subsection{Necessarily conditions for the prior and  the limitation of the existing supersample setting}\label{discuss_proof_marginal}
Here, we further discuss the necessary conditions for the prior distribution to derive a meaningful generalization bound. The proof strategy in Appendix~\ref{proof_vqvae_reconstruction_loss} clarifies this point: in the proof, we decompose the generalization bound in Eq.~\eqref{proof_strategy_overall} and separately upper bound the first two terms and the latter two terms.

For the first and second terms, the analysis follows standard generalization error techniques. When using a prior and posterior distribution characterized by the shuffling of the supersample $\tilde{X}$, such as the index variable $U$, the shuffling must swap test and training data to enable generalization evaluation. By ensuring this swap, we can properly assess overfitting.

For the third and fourth terms, after applying the Donsker-Valadhan lemma, it is crucial to ensure that the probability $P''_{k,m}$ does not depend on the sample index $m$ to control the exponential moment in Eq.~\eqref{app_eq_proof_supersmmetry}. This requires satisfying $P''_{k,m} = P''_{k}$, meaning that the probability of assigning the $m$-th data point to the $k$-th codebook must be independent of $m$. By definition, this condition holds when the distribution of the latent variables remains invariant after shuffling.

From these observations, we conclude that the prior used for shuffling must:
(A) {\bf Preserve the distribution of the LVs to eliminate interdependencies between LVs and the decoder}, and
(B) {\bf Swap test and training data points to evaluate overfitting}, as discussed in Section~\ref{sec_main_CMI_vqvae}.

Using the supersample ensures condition {\bf (B)}. For condition {\bf (A)}, we employ the prior distribution $\pi(J_m | \mathbf{e}, \phi)$, which removes sample index dependency and guarantees $P''_{k,m} = P''_{k}$. Consequently, the empirical KL divergence in Theorem~\ref{reconstructon_gen} arises from the third and fourth terms in Eq.~\eqref{proof_strategy_overall}, as detailed in Appendix\ref{sec_fourth_third_bound}.

Based on these findings, we propose the following type of prior distribution:
\begin{align}
    &\mathbf{P}_S\coloneqq q(\mathbf{e},\phi,\theta|S)\prod_{m=1}^n\sum_{m'=1}^n \frac{1}{N} q(J_m|\mathbf{e},\phi,S_{m'}),
\end{align}
which provides an empirical approximation of the marginal distribution using available samples. Since this distribution does not explicitly depend on the sample index, we can bound the exponential moment similarly to the approach in Appendix~\ref{sec_fourth_third_bound}.

However, using the prior distribution in Eq.~\eqref{eq_prior_supersample_symmetry} to bound the third and fourth terms of Eq.~\eqref{proof_strategy_overall} is not feasible. The issue is that applying the Donsker-Valadhan lemma with Eq.~\eqref{eq_prior_supersample_symmetry} to these terms does not yield a bound of order $\mathcal{O}(1/\sqrt{n})$, as achieved in Eq.~\eqref{efron_stein_estimate}. This limitation arises because the dependency on the sample index in Eq.~\eqref{eq_prior_supersample_symmetry} prevents us from leveraging the symmetry between the test and training datasets via the supersample index $U$. As a result, the prior distribution’s symmetry cannot be exploited to simplify the bounds for these terms.

\subsection{Comparison with the fCMI}\label{sec_compare_fcmi}
Here, we analyze the relationship between our CMI and existing forms of fCMI in more detail. As highlighted in the main paper, a key distinction is that our CMI is conditioned on all model parameters, whereas existing fCMI methods marginalize over these parameters.

To further explore this difference, we consider marginalizing over the encoder parameter, $\phi$. In the proof of Theorem~\ref{reconstructon_gen}, we perform this marginalization over $\phi$ in Eq.~\eqref{eq_derivation_1} and obtain
\begin{align}
&\sum_{k=1}^{K}\frac{1}{n}\sum_{m=1}^n \Ex_{q(\bar{J}_m|\mathbf{e},\phi,X_{m,\bar{U}_m})q(\mathbf{e},\phi,\theta|X_U)}l(X_{m,\bar{U}_m},g_\theta(e_k))\mathbbm{1}_{k=\bar{J}_m}\\
&\quad\quad\quad\quad\quad\quad-\sum_{k=1}^{K}\frac{1}{n}\sum_{m=1}^n\Ex_{q(J_m|\mathbf{e},\phi,X_{m,U_m})q(\mathbf{e},\phi,\theta|X_U)}l((X_{m,U_m},g_\theta(e_k))\mathbbm{1}_{k=J_m}\notag \\
&=\sum_{k=1}^{K}\frac{1}{n}\sum_{m=1}^n \Ex_{q(\bar{J}_m|\theta,\mathbf{e},X_{m,\bar{U}_m})q(\mathbf{e},\theta|X_U)}\|X_{m,\bar{U}_m}-g_\theta(e_k)\|^2\mathbbm{1}_{k=\bar{J}_m}\\
&\quad\quad\quad\quad\quad\quad-\sum_{k=1}^{K}\frac{1}{n}\sum_{m=1}^n\Ex_{q(J_m|\theta,\mathbf{e},X_{m,U_m})q(\mathbf{e},\theta|X_U)}\|X_{m,U_m}-g_\theta(e_k)\|^2\mathbbm{1}_{k=J_m},
\end{align}
and proceed with the proof in the same way. We apply the Donsker-Varadhan inequality between the following distributions, instead of Eq.~\eqref{eq_prior_supersample_symmetry}: 
\begin{align}
    &\mathbf{Q}\coloneqq P(U)P(U')q(\mathbf{e},\theta|X_U)q(\mathbf{\bar{J}},\mathbf{J}|, \mathbf{e},\theta,X_{\bar{U}},X_{U})\\
    &\mathbf{P}\coloneqq P(U)q(\mathbf{e},\theta|X_U)\Ex_{P(U')}q(\mathbf{\bar{J}},\mathbf{J}|\mathbf{e},\theta,X_{\bar{U'}},X_{U'}).
\end{align}
 This incorporates marginalization over $\phi$ in Eq.~\eqref{eq_prior_supersample_symmetry}, resulting in the following KL divergence in the upper bound: 
\begin{align}
    \Ex_{X}\KLD(\mathbf{Q}|\mathbf{P})&=\Ex_{X}\EX_{P(U)q(\mathbf{e},\phi|X_U)}\KLD(q(\mathbf{\bar{J}},\mathbf{J}|\mathbf{e},\theta,X_{\bar{U}},X_{U})|\Ex_{P(U')}q(\mathbf{\bar{J}},\mathbf{J}|\mathbf{e},\theta,X_{\bar{U'}},X_{U'}))\\
    &=I(\mathbf{\bar{J}},\mathbf{J};U|\mathbf{e},\theta,X).
\end{align}
Unlike Theorem~\ref{reconstructon_gen}, this CMI explicitly involves the decoder parameter $\theta$. By marginalizing over $\phi$, decoder information is integrated into the upper bound, making Theorem~\ref{reconstructon_gen} distinct from existing fCMI bounds. In Appendix~\ref{app_difference_fcmi}, further discussion from the viewpoint of the difference of the graphical model between our CMI and existing fCMI is given.

\subsection{Proof of Lemma~\ref{lemm_optimal_prior}}
We remark that the following relationship holds for $m=1\dots, n$ by definition;
\begin{align}\label{eq_mi_prior_proof}
    I(J_m;S_m|\mathbf{e},\phi)&=\Ex_{q(\mathbf{e},\phi)}\Ex_{q(S_m|\mathbf{e},\phi)}\Ex_{q(J_m|\mathbf{e},\phi,S_m)}\log\frac{ q(J_m|\mathbf{e},\phi,S_m)}{\Ex_{q(S_m|\mathbf{e},\phi)}q(J_m|\mathbf{e},\phi,S_m)}\\
    &=\Ex_{S}\Ex_{q(\mathbf{e},\phi|S)}\Ex_{q(J_m|\mathbf{e},\phi,S_m)}\log\frac{ q(J_m|\mathbf{e},\phi,S_m)}{\Ex_{q(S_m|\mathbf{e},\phi)}q(J_m|\mathbf{e},\phi,S_m)}.
\end{align}
Next, we show $\Ex_{q(S_1|\mathbf{e},\phi)}q(J_1|\mathbf{e},\phi,S_1)=\dots=\Ex_{q(S_n|\mathbf{e},\phi)}q(J_n|\mathbf{e},\phi,S_n)$ holds under the given assumption. To prove this, it is suffice to show that $q(S_1|\mathbf{e},\phi)=\dots=q(S_n|\mathbf{e},\phi)$ holds. Under the given assumption 
\begin{align}
    q(\mathbf{e},\phi|S_1)=\dots=q(\mathbf{e},\phi|S_n)
\end{align}
holds, see \citet{Li2020On} for the proof. Then for $i\in[n]$, we have
\begin{align}
    q(\mathbf{e},\phi|S_i)p(S_i)=q(S_i|\mathbf{e},\phi)p(\mathbf{e},\phi)
\end{align}
and since all training data points are drawn i.i.d form $\cald$, we have
\begin{align}
    q(\mathbf{e},\phi|S_i)\cald=q(S_i|\mathbf{e},\phi)p(\mathbf{e},\phi).
\end{align}
Then, for any $j\neq i\in[n]$, we also have
\begin{align}
    q(\mathbf{e},\phi|S_j)\cald=q(S_j|\mathbf{e},\phi)p(\mathbf{e},\phi)
\end{align}
since $q(\mathbf{e},\phi|S_j)=q(\mathbf{e},\phi|S_i)$, we conclude that $q(S_i|\mathbf{e},\phi)=q(S_j|\mathbf{e},\phi)$. This implies $\Ex_{q(S_1|\mathbf{e},\phi)}q(J_1|\mathbf{e},\phi,S_1)=\dots=\Ex_{q(S_n|\mathbf{e},\phi)}q(J_n|\mathbf{e},\phi,S_n)$ holds under the given assumption. So we use the joint distribution these as $\mathbf{P}=\prod_{m=1}^n \Ex_{q(S_m|\mathbf{e},\phi)} q(J_m|\mathbf{e},\phi,S_m)$. From Eq.~\eqref{eq_mi_prior_proof}, we have
\begin{align}
    \Ex_{S}\Ex_{q(\mathbf{e},\phi|S)}\KLD(\mathbf{Q}_{\mathbf{ J}, U}\|\mathbf{P})=I(J_m;S_m|\mathbf{e},\phi).
\end{align}
Finally, we show that above $\mathbf{P}$ minimizes the $ \Ex_{S}\Ex_{q(\mathbf{e},\phi|S)}\KLD(\mathbf{Q}_{\mathbf{ J}, U}\|\mathbf{P})$. We consider the prior $\mathbf{P}'$ that satisfies the assumption of the Theorem~\ref{eq_reconstructon_bound1}, that is, prepare some distributions that satisfies $q(J_1|\mathbf{e},\phi)=\dots=q(J_n|\mathbf{e},\phi)$ and define $\mathbf{P}'\coloneqq \prod_{m=1}^n \pi(J_m|\mathbf{e},\phi)$

By the definition, we have that
\begin{align}
    \Ex_{S}\Ex_{q(\mathbf{e},\phi|S)}\KLD(\mathbf{Q}_{\mathbf{ J}, U}\|\mathbf{P}')=I(J_m;S_m|\mathbf{e},\phi)+ \Ex_{S}\Ex_{q(\mathbf{e},\phi|S)}\KLD(\mathbf{P}\|\mathbf{P}').
\end{align}
Thus, when using $\mathbf{P}'=\mathbf{P}$ minimizes the empirical KL divergence.

\subsection{Proof of Eq.~(\ref{eq_cmi_super})}\label{app_difference_fcmi}

Here we discuss how we can upper bound of the complexity term of the obtained bound. From the definition, we have the following relation;
\begin{align}
&\Ex_{X,U}\Ex_{q(\mathbf{e},\phi,\theta|X_U)}\KLD(\mathbf{Q}_{\mathbf{\widetilde J},U}\|\mathbf{Q}_{\mathbf{\widetilde J}})\\
&=\Ex_{P(X)P(U)q(\mathbf{e},\phi,\theta|X_U)q(\mathbf{\bar{J}},\mathbf{J}|\mathbf{e},\phi,X_{\bar{U}},X_{U})}\log\frac{q(\mathbf{\bar{J}},\mathbf{J}|\mathbf{e},\phi,X_{\bar{U}},X_{U})}{\EX_{P(U')}q(\mathbf{\bar{J}},\mathbf{J}|\mathbf{e},\phi,X_{\bar{U'}},X_{U'})}\\
&=\Ex_{P(X)P(U)q(\mathbf{e},\phi|X_U)q(\mathbf{\bar{J}},\mathbf{J}|\mathbf{e},\phi,X_{\bar{U}},X_{U})}\log\frac{q(\mathbf{\bar{J}},\mathbf{J}|\mathbf{e},\phi,X_{\bar{U}},X_{U})}{\EX_{P(U')}q(\mathbf{\bar{J}},\mathbf{J}|\mathbf{e},\phi,X_{\bar{U'}},X_{U'})}\\
&=\Ex_{P(X)P(\mathbf{e},\phi|X)}\Ex_{P(U|\mathbf{e},\phi,X)q(\mathbf{\bar{J}},\mathbf{J}|\mathbf{e},\phi,X,U)}\log\frac{q(\mathbf{\bar{J}},\mathbf{J}|\mathbf{e},\phi,X_{\bar{U}},X_{U})}{\EX_{P(U')}q(\mathbf{\bar{J}},\mathbf{J}|\mathbf{e},\phi,X_{\bar{U'}},X_{U'})}\\
&=\Ex_{P(X)P(\mathbf{e},\phi|X)}\Ex_{P(U|\mathbf{e},\phi,X)q(\mathbf{\bar{J}},\mathbf{J}|\mathbf{e},\phi,X,U)}\log\frac{q(\mathbf{\bar{J}},\mathbf{J}|\mathbf{e},\phi,X_{\bar{U}},X_{U})}{\EX_{P(U'|\mathbf{e},\phi,X)}q(\mathbf{\bar{J}},\mathbf{J}|\mathbf{e},\phi,X_{\bar{U'}},X_{U'})}\\
&\quad+\Ex_{P(X)P(\mathbf{e},\phi,|X)}\Ex_{P(U|\mathbf{e},\phi,X)q(\mathbf{\bar{J}},\mathbf{J}|\mathbf{e},\phi,X,U)}\log\frac{\EX_{P(U'|\mathbf{e},\phi,,X)}q(\mathbf{\bar{J}},\mathbf{J}|\mathbf{e},\phi,X_{\bar{U'}},X_{U'})}{\EX_{P(U')}q(\mathbf{\bar{J}},\mathbf{J}|\mathbf{e},\phi,X_{\bar{U'}},X_{U'})}\\
&=I(\mathbf{\bar{J}},\mathbf{J};U|\mathbf{e},\phi,X)+\Ex_{P(X)P(\mathbf{e},\phi|X)}\Ex_{P(U|\mathbf{e},\phi,X)q(\mathbf{\bar{J}},\mathbf{J}|\mathbf{e},\phi,X,U)}\log\frac{\EX_{P(U'|\mathbf{e},\phi,X)}q(\mathbf{\bar{J}},\mathbf{J}|\mathbf{e},\phi,X_{\bar{U'}},X_{U'})}{\EX_{P(U')}q(\mathbf{\bar{J}},\mathbf{J}|\mathbf{e},\phi,X_{\bar{U'}},X_{U'})}\\
&\leq I(\mathbf{\bar{J}},\mathbf{J};U|\mathbf{e},\phi,X)+\Ex_{P(X)P(\mathbf{e},\phi|X)}\Ex_{P(U|\mathbf{e},\phi,X)}\log\frac{P(U'|\mathbf{e},\phi,X)}{P(U')}\\
&= I(\mathbf{\bar{J}},\mathbf{J};U|\mathbf{e},\phi,X)+\Ex_{P(X)P(\mathbf{e},\phi|X)}\Ex_{P(U|\mathbf{e},\phi,X)}\log\frac{P(\mathbf{e},\phi|X,U')P(U'|X)}{\Ex_{P(U''|X)}P(\mathbf{e},\phi|X,U'')P(U')}\\
&= I(\mathbf{\bar{J}},\mathbf{J};U|\mathbf{e},\phi,X)+\Ex_{P(X)P(\mathbf{e},\phi|X)}\Ex_{P(U|\mathbf{e},\phi,X)}\log\frac{P(\mathbf{e},\phi|X,U')P(U')}{\Ex_{P(U'')}P(\mathbf{e},\phi|X,U'')P(U')}\\
&= I(\mathbf{\bar{J}},\mathbf{J};U|\mathbf{e},\phi,X)+I(\mathbf{e},\phi;U|X),
\end{align}
where we used the data processing inequality of the KL divergence.

\subsection{The role of \texorpdfstring{$I(\tilde{\mathbf{J}};U|\mathbf{e},\phi,\tilde{X})$}{Lg}}\label{app_proof_IT_IB}

The role of $I(\tilde{\mathbf{J}};U|\mathbf{e},\phi,\tilde{X})$ is clarified through the following upper bound:
\begin{align}
        I(\tilde{\mathbf{J}};U|\mathbf{e},\phi,\tilde{X})
        \leq 
        &\sum_{m=1}^n\!I(e_J;\tilde{X}_{m,\bar{U}_m}|\mathbf{e},\phi)\\
        &+\Ex_{S}\Ex_{q(\mathbf{e},\phi|S)}\KLD(\mathbf{Q}_{\mathbf{ J}, U}\|\mathbf{P}). \label{lem_cmi_bound}
\end{align}
The first term represents the information retained by the LVs from the training data in the IB hypothesis, while the second term corresponds to the regularization based on the empirical KL divergence discussed earlier.

Here we prove Eq.~\eqref{lem_cmi_bound}. We define
$\pi(\bar{\mathbf{J}}|\mathbf{e},\phi)=\prod_{m=1}^n\pi(\bar{J}_m|\mathbf{e},\phi)$, $\pi(\mathbf{J}|\mathbf{e},\phi)=\prod_{m=1}^{n}\pi(J_{m}|\mathbf{e},\phi)$, and $\pi(\tilde{\mathbf{J}}|\mathbf{e},\phi)=\pi(\bar{\mathbf{J}},\mathbf{J}|\mathbf{e},\phi)=\pi(\bar{\mathbf{J}}|\mathbf{e},\phi)\pi(\mathbf{J}|\mathbf{e},\phi)$ where each $\pi(\bar{J}_m|\mathbf{e},\phi)$ is the marginal distribution of  $\pi({J}_m|\mathbf{e},\phi,X_m)$.

Then by the definition of the CMI, we have
    \begin{align}
        &I(\tilde{\mathbf{J}};U|\mathbf{e},\phi,\tilde{X})\\
        &=\Ex_{\tilde X, U}\Ex_{q(\mathbf{e},\phi|\tilde X_U)}\KLD(q(\tilde{\mathbf{J}}|\mathbf{e},\phi,\tilde X)\|\Ex_{U'}q(\bar{\mathbf{J}},\mathbf{J}|\mathbf{e},\phi,\tilde{X}_{\bar{U}'},\tilde{X}_{U'}))\\
        &\leq \Ex_{\tilde X, U}\Ex_{q(\mathbf{e},\phi|\tilde X_U)}\KLD(q(\tilde{\mathbf{J}}|\mathbf{e},\phi,\tilde X)\|\pi(\bar{\mathbf{J}},\mathbf{J}|\mathbf{e},\phi))\\
        &= \Ex_{\tilde X, U}\Ex_{q(\mathbf{e},\phi|\tilde X_U)}\KLD(q(\mathbf{\bar{J}}|\mathbf{e},\phi,\tilde X_{\bar{U}})\|\pi(\bar{\mathbf{J}}|\mathbf{e},\phi))+\Ex_{\tilde X, U}\Ex_{q(\mathbf{e},\phi|\tilde X_U)}\KLD(q(\mathbf{J}|\mathbf{e},\phi,\tilde X_{{U}})\|\pi(\mathbf{J}|\mathbf{e},\phi))\\
        &= \Ex_{\tilde X, U}\Ex_{q(\mathbf{e},\phi|\tilde X_U)}\sum_{m=1}^n\KLD(q(\bar{J}_m|\mathbf{e},\phi,\tilde X_{m,\bar{U}_m})\|\pi(\bar{J}_m|\mathbf{e},\phi))\\
        &\quad\quad\quad+\Ex_{\tilde X, U}\Ex_{q(\mathbf{e},\phi|\tilde X_U)}\sum_{m=1}^n\KLD(q({J}_m|\mathbf{e},\phi,\tilde X_{m,{U}_m})\|\pi(J_m|\mathbf{e},\phi))\\
        &= n I(J;X|\mathbf{e},\phi)+\Ex_{S}\Ex_{q(\mathbf{e},\phi|S)}\frac{1}{n}\sum_{m=1}^n\KLD(q(J_m|\mathbf{e},\phi,S_m)\|\pi(J_m|\mathbf{e},\phi))\\
        &\leq n I(e_J;X|\mathbf{e},\phi)+\Ex_{S}\Ex_{q(\mathbf{e},\phi|S)}\frac{1}{n}\sum_{m=1}^n\KLD(q(J_m|\mathbf{e},\phi,S_m)\|\pi(J_m|\mathbf{e},\phi)).
    \end{align}

\subsection{Proof of Lemma~\ref{eq_natarajan_cmi} and ~\ref{lem_emp_order} and additional discussion}\label{app_natarajan}
\begin{proof}[Proof of Lemma~\ref{eq_natarajan_cmi}]
From the definition of the CMI, we have
\begin{align}
    I(\mathbf{\bar{J}},\mathbf{J};U|\mathbf{e},\phi,X)=H[\tilde{\mathbf{J}}|\mathbf{e},\phi,X]-H[\tilde{\mathbf{J}}|U,\mathbf{e},\phi,X]\leq H[\tilde{\mathbf{J}}|\mathbf{e},\phi,X]\leq H[\tilde{\mathbf{J}}|X].
\end{align}
Here, we consider the case where $f_\phi:\calx \to [K]$ represents a deterministic encoder that maps input data to one of the $K$ indices. This scenario can be viewed as a $K$-class classification problem, allowing us to directly apply the results from \citet{harutyunyan2021}. They demonstrated that the CMI for multi-class classification problems can be upper-bounded using the Natarajan dimension, a combinatorial measure that generalizes the VC dimension to the multiclass setting.

Using this concept, we obtain the following characterization:

When employing a deterministic encoder network $f'_\phi:\calx \to [K]$ that belongs to a class with finite Natarajan dimension $d_K$ and assuming $2n > d_K + 1$, we derive the following bound:
\begin{align}\label{eq_cmi_natarajan_supersample}
    I(\tilde{\mathbf{J}};U|\mathbf{e},\phi,\tilde X)\leq d_K\log\left(\binom{K}{2}\frac{2en}{d_K}\right).
\end{align}
The proof follows exactly as in Theorem 8 of \citet{harutyunyan2021}.
\end{proof}

Thus, by regularizing the capacity of the encoder model (via the Natarajan dimension), the CMI term scales as $\mathcal{O}(\log n)$, ensuring controlled generalization behavior.
Examples of models that satisfy the finite Natarajan dimension are shown in \citet{jin2023upper} and \citet{daniely2011multiclass}. Also, see \citet{BENDAVID199574}, which shows that the VC dimension of the multiclass loss function characterizes the graph dimension, and the graph dimension upper bounds the Natarajan dimension.

\begin{proof}[Proof of Lemma~\ref{lem_emp_order}]
Since we consider the setting of Lemma~\ref{lemm_optimal_prior}, we consider the case of $\KLD(\mathbf{Q}_{\mathbf{ J}, U}\|\mathbf{P})=\sum_m I(J_m;S_m|\mathbf{e},\phi)$. Following the above setting of $I(\tilde{\mathbf{J}};U|\mathbf{e},\phi,\tilde X)$, that is, $f'_{\mathbf{e},\phi}:\calx\to[K]$ satisfies the Natarajan dimension $d_K>1$. Then for each $m$, we have
\begin{align}\label{eq_cmi_bound_for_empKL}
    I(J_m;S_m|\mathbf{e},\phi)=H[J_m|\mathbf{e},\phi]-H[J_m|S_m\mathbf{e},\phi]=H[J_m|\mathbf{e},\phi]\leq \log K\leq (d_K+1)\log K.
\end{align}
Thus $\KLD(\mathbf{Q}_{\mathbf{ J}, U}\|\mathbf{P})/n=\sum_m I(J_m;S_m|\mathbf{e},\phi)/n\leq \log K=\mathcal{O}(1)$.
\end{proof}

The difference between $I(\tilde{\mathbf{J}};U|\mathbf{e},\phi,\tilde{X})$ and $I(J_m;S_m|\mathbf{e},\phi)$ lies in their conditioning. Since $I(\tilde{\mathbf{J}};U|\mathbf{e},\phi,\tilde{X})$ is conditioned on all $2n$ data points, it only depends on the combinatorial number of distinct index values. In contrast, $I(J_m;S_m|\mathbf{e},\phi)$ does not condition on the input data, making regularization based solely on the Natarajan dimension insufficient to control complexity.

For the discussion of the stochastic encoder, see Appendix~\ref{sec_natarajan_dim_margin}, where we consider the metric entropy of $f_\phi(\cdot)$, which leads to a similar discussion.

\subsubsection{Additional discussion for the Natarajan dimension}\label{app_natarajan_additional}
Here, we briefly discuss the Natarajan dimension. First, it can be both upper and lower bounded by the graph dimension, another common combinatorial measure for multi-class classification problems (see Lemma 4 and Proposition 1 in \citet{guermeur2018combinatorial}).

The Natarajan dimension can also be upper bounded by the $\gamma$ fat-shattering dimension of each class. Specifically, given $f'{\mathbf{e},\phi}:\calx\to[K]$, let the $k$-th element of its output be denoted as $f{\mathbf{e},\phi}^{'(k)}$ for $k=1,\dots,K$. If each $f_{\mathbf{e},\phi}^{'(k)}$ has a finite $\gamma$-shattering dimension, then the Natarajan dimension of $f'_{\mathbf{e},\phi}$ can be bounded by the sum of the $\gamma$-shattering dimensions of its components, multiplied by a constant coefficient (see Lemma 10 in \citet{guermeur2018combinatorial}).

Examples of fat-shattering dimension evaluations can be found in \citet{bartlett2003vapnik}, which analyzes neural network models, and \citet{gottlieb2014efficient}, which examines the fat-shattering dimension of Lipschitz function classes. If our encoder network satisfies these properties, its covering number can be appropriately bounded.

\subsection{Discussion about the overfitting term}\label{app_eCMI}
Here, we discuss how the overfitting terms relate to different algorithms. First, from the data processing inequality \citep{cover2012element}, we obtain
$$
I(\mathbf{e}, \phi; U | \tilde{X})\leq I(\mathbf{e}, \phi;S),
$$ 
where we express $\tilde{X}_U$ as the training dataset $S$. Since this expression does not include conditioning, we refer to it as the parameter MI. Several existing studies have analyzed parameter MI under commonly used algorithms.

\citet{Pensia2018} first established the relationship between noisy iterative algorithms and parameter MI. Subsequently, \citet{Wang2021} and \citet{wang2023} investigated the parameter MI of the SGLD algorithm from the perspective of noisy iterative algorithms, while \citet{futami2023timeindependent} analyzed it in the continuous-time limit. \citet{neu2021information} was the first to examine parameter MI in SGD, with \citet{wang2022on} later improving its dependency on the step size. Furthermore, \citet{haghifam2023limitations} provided formal limitations in the context of stochastic convex optimization.

In addition to these, in the Bayesian setting, where we assume that the training dataset is conditionally i.i.d (see \citet{Clarke1994JeffreysPI} for the formal settings), \citet{Clarke1994JeffreysPI} (see also \citet{Rissanen06, haussler1997mutual}) clarified that the mutual information between learned parameter and training dataset is described as follows: if $w$ takes a value in a $d$-dimensional compact subset of $\mathbb{R}^d$ and $p(y|x;w)$ is smooth in $w$, then as $n\to\infty$, we have 
$$
I(W;S)=\frac{d}{2}\log\frac{n}{2\pi e}+h(W)+\Ex \log\mathrm{det}J+o(1),
$$
where $h(W)$ is the differential entropy of $W$, and $J$ is the Fisher information matrix of $p(Y|X; W)$.

\citet{steinke20a} clarified that the CMI is upper bounded by the the stability. For example, if the training algorithm satisfies $\sqrt{2\epsilon}$-differentially private (DP) algorithm, then CMI is upper-bounded by $\epsilon n$. So this $\epsilon$ is controlled by the DP algorithm. The Gibbs algorithm equipped with $[0,1]$ bounded loss function, satisfies $\calo(1/n)$-DP, thus its CMI is controlled adequately. \citet{steinke20a} also clarified that if the algorithm is $\delta$ stable in total variation distance, then CMI is upper bounded by $\delta n$. \citet{Li2020On} studied the total variation stability for the SGD, and \citet{mou18a} studied such stability of the SGLD algorithm and its relation to the PAC-Bayesian bound. \citep{Negrea2019} investigated the CMI of SGLD as the noisy iterative algorithm.

\section{Proofs for Section~\ref{sec_permutation_bound}}\label{app_permutation_main}
\subsection{Proof of Theorem~\ref{reconstructon_gen_permutation}}\label{app_permutation_proof}
We define $\mathbf{T}=\{\mathbf{T}_0,\mathbf{T}_1\}$, where $\tilde X_{\mathbf{T}_0}=(\tilde X_{T_1},\dots,\tilde X_{T_{n}})$ serves as the test dataset and $\tilde X_{\mathbf{T}_1}=(\tilde X_{T_{n+1}},\dots,\tilde X_{T_{2n}})$ serves as the training dataset. We further express $\tilde X_{\mathbf{T}_0}=(\tilde X_{T_1},\dots,\tilde X_{T_{n}})=(\tilde X_{\mathbf{T}_{0,1}},\dots,\tilde X_{\mathbf{T}_{0,n}})$ and $\tilde X_{\mathbf{T}_1}=(\tilde X_{\mathbf{T}_{1,1}},\dots,\tilde X_{\mathbf{T}_{1,n}})$. To emphasize the dependence of the dataset on $\mathbf{T}$, we write the posterior distribution as $q(\tilde{\mathbf{J}}|\mathbf{e},\phi,\tilde{X}_\mathbf{T})=q(\bar{\mathbf{J}},\mathbf{J}|\mathbf{e},\phi,\tilde{X}_{\mathbf{T}})=q(\bar{\mathbf{J}},\mathbf{J}|\mathbf{e},\phi,\tilde{X}_{\mathbf{T}_0},\tilde{X}_{\mathbf{T}_1})=q(\bar{\mathbf{J}}|\mathbf{e},\phi,\tilde{X}_{\mathbf{T}_0})q(\mathbf{J}|\mathbf{e},\phi,\tilde{X}_{\mathbf{T}_1})$.

Hereinafter, we express $\tilde X$ as $X$ to simplify the notation. Under the permutation symmetric settings, the generalization error can be expressed as 
\begin{align}
&\EX_{S,X}\Ex_{q(\mathbf{e},\phi,\theta|S)}\left(\Ex_{q(J|\mathbf{e},\phi,X)}l(X,g_\theta(e_J))-\frac{1}{n}\sum_{m=1}^n \Ex_{q(J_m|\mathbf{e},\phi,S_m)}l(S_m,g_\theta(e_{J_m}))\right)\\
&=\Ex_{X,\mathbf{T}}\sum_{k=1}^{K}\frac{1}{n}\sum_{m=1}^n\Ex_{q(\bar J_m|\mathbf{e},\phi,X_{\mathbf{T}_{0,m}})q(\mathbf{e},\phi,\theta|X_{\mathbf{T}_1})}l((X_{\mathbf{T}_{0,m}},g_\theta(e_k))\mathbbm{1}_{k=\bar J_m}\\
&-\Ex_{X,\mathbf{T}}\sum_{k=1}^{K}\frac{1}{n}\sum_{m=1}^n \Ex_{q(J_m|\mathbf{e},\phi,X_{\mathbf{T}_{1,m}})q(\mathbf{e},\phi,\theta|X_{\mathbf{T}_1})}l(X_{\mathbf{T}_{1,m}},g_\theta(e_k))\mathbbm{1}_{k=J_m}\\
&=\Ex_{X,\mathbf{T}}\sum_{k=1}^{K}\frac{1}{n}\sum_{m=1}^n\Ex_{q(\bar J_m|\mathbf{e},\phi,X_{\mathbf{T}_{0,m}})q(\mathbf{e},\phi,\theta|X_{\mathbf{T}_1})}\|X_{\mathbf{T}_{0,m}}-g_\theta(e_k)\|^2\mathbbm{1}_{k=\bar J_m}\\
&-\Ex_{X,\mathbf{T}}\sum_{k=1}^{K}\frac{1}{n}\sum_{m=1}^n \Ex_{q(J_m|\mathbf{e},\phi,X_{\mathbf{T}_{1,m}})q(\mathbf{e},\phi,\theta|X_{\mathbf{T}_1})}\|X_{\mathbf{T}_{1,m}}-g_\theta(e_k)\|^2\mathbbm{1}_{k=J_m}.\label{permutation_generalization}
\end{align}
We then decompose the loss as follows
\begin{align}\label{derivation_type_4_symmetry}
&\mathrm{gen}(n,\cald)  \\  
&=\Ex_{X,\mathbf{T}}\sum_{k=1}^{K}\frac{1}{n}\sum_{m=1}^n\Ex_{q(\bar J_m|\mathbf{e},\phi,X_{\mathbf{T}_{0,m}})q(\mathbf{e},\phi,\theta|X_{\mathbf{T}_1})}\|X_{\mathbf{T}_{0,m}}-g_\theta(e_k)\|^2\mathbbm{1}_{k=\bar J_m}\\
&-\Ex_{X,\mathbf{T}}\sum_{k=1}^{K}\frac{1}{n}\sum_{m=1}^n \Ex_{q(J_m|\mathbf{e},\phi,X_{\mathbf{T}_{1,m}})q(\mathbf{e},\phi,\theta|X_{\mathbf{T}_1})}\|X_{\mathbf{T}_{0,m}}-g_\theta(e_k)\|^2\mathbbm{1}_{k={J}_m}  \\   
&+\Ex_{X,\mathbf{T}}\sum_{k=1}^{K}\frac{1}{n}\sum_{m=1}^n \Ex_{q( J_m|\mathbf{e},\phi,X_{\mathbf{T}_{1,m}})q(\mathbf{e},\phi,\theta|X_{\mathbf{T}_1})}\|X_{\mathbf{T}_{0,m}}-g_\theta(e_k)\|^2\mathbbm{1}_{k={J}_m}  \\   
&-\Ex_{X,\mathbf{T}}\sum_{k=1}^{K}\frac{1}{n}\sum_{m=1}^n \Ex_{q(J_m|\mathbf{e},\phi,X_{\mathbf{T}_{1,m}})q(\mathbf{e},\phi,\theta|X_{\mathbf{T}_1})}\|X_{\mathbf{T}_{1,m}}-g_\theta(e_k)\|^2\mathbbm{1}_{k={J}_m}.
\end{align}
First, we upper bound the first two terms by applying the Donsker-Varadhan inequality. Consider the joint distribution and the prior distribution, defined as follows:
\begin{align}\label{posterior_prior_permute}
    &\mathbf{Q}\coloneqq P(\mathbf{T})q(\mathbf{e},\theta,\phi|X_{\mathbf{T}_1})q(\mathbf{\bar{J}},\mathbf{J}|\mathbf{e},\phi,X_{\mathbf{T}}),\\
    &\mathbf{P}\coloneqq P(\mathbf{T})q(\mathbf{e},\theta,\phi|X_{\mathbf{T}_1})\EX_{P(\mathbf{T}')}q(\mathbf{\bar{J}},\mathbf{J}|\mathbf{e},\phi,X_{\mathbf{T}'}).
\end{align}
This corresponds to the posterior and data-dependent prior distributions defined in Section~\ref{sec_permutations}.

Then we then obtain
\begin{align}\label{eq_proof_permute_second_term3}
& \Ex_{X,\mathbf{T}}\sum_{k=1}^{K}\frac{1}{n}\sum_{m=1}^n\Ex_{q(\mathbf{e},\phi,\theta|X_{\mathbf{T}_1})} \|X_{\mathbf{T}_{0,m}}\!-\!g_\theta(e_k)\|^2\left(\Ex_{q(\bar{J}_m|\mathbf{e},\phi,X_{\mathbf{T}_{1,m}})}\mathbbm{1}_{k=\bar{J}_m}\!-\!\Ex_{q({J}_m|\mathbf{e},\phi,X_{\mathbf{T}_{0,m}})}\mathbbm{1}_{k={J}_m}\right)\notag \\
& \leq \Ex_{X}\frac{1}{\lambda}\KLD(\mathbf{Q}|\mathbf{P})\!+\!\Ex_{X}\frac{1}{\lambda}\log\Ex_{\mathbf{P}}\exp\left(\frac{\lambda}{n}\sum_{k=1}^{K}\sum_{m=1}^n\|X_{\mathbf{T}_{0,m}}-g_\theta(e_k)\|^2\left(\mathbbm{1}_{k=\bar J_m}\!-\!\mathbbm{1}_{k={J}_m}\right)\right).
\end{align}
 Note that $\EX_{P(\mathbf{T}')}q(\mathbf{\bar{J}},\mathbf{J}|\mathbf{e},\phi,X_{\mathbf{T}'})$ is symmetric with respect to the permutation of $\mathbf{T}$. Thus, we have
\begin{align}
    &\log\Ex_{P(\mathbf{T})q(\mathbf{e},\theta,\phi|X_{\mathbf{T}_1})\EX_{P(\mathbf{T}')}q(\mathbf{\bar{J}},\mathbf{J}|\mathbf{e},\phi,X_{\mathbf{T}'})}\exp\left(\frac{\lambda}{n}\sum_{k=1}^{K}\sum_{m=1}^nl(X_{\mathbf{T}_{0,m}},g_\theta(e_k))\left(\mathbbm{1}_{k=\bar J_m}-\mathbbm{1}_{k={J}_m}\right)\right) \\
    & = \log\Ex_{P(\mathbf{T})q(\mathbf{e},\theta,\phi|X_{\mathbf{T}_1})\EX_{P(\mathbf{T}')}q(\mathbf{\bar{J}},\mathbf{J}|\mathbf{e},\phi,X_{\mathbf{T}'})P(\mathbf{T}'')}\\
    &\quad\quad\quad\quad\exp\left(\frac{\lambda}{n}\sum_{k=1}^{K}\sum_{m=1}^nl(X_{\mathbf{T}_{0,m}},g_\theta(e_k))\left(\mathbbm{1}_{k=J_{\mathbf{T}''_{0,m}}}-\mathbbm{1}_{k=J_{\mathbf{T}''_{1,m}}}\right)\right)\notag \\
    & = \displaystyle\log\Ex_{P(\mathbf{T})q(\mathbf{e},\theta,\phi|X_{\mathbf{T}_1})\EX_{P(\mathbf{T}')}q(\mathbf{\bar{J}},\mathbf{J}|\mathbf{e},\phi,X_{\mathbf{T}'})}\\
    &\quad\quad\quad\quad\Ex_{P(\mathbf{T}'')}\exp\left(\frac{\lambda}{n}\sum_{k=1}^{K}\sum_{m=1}^nl(X_{\mathbf{T}_{0,m}},g_\theta(e_k))\left(\mathbbm{1}_{k=J_{\mathbf{T}''_{0,m}}}-\mathbbm{1}_{k=J_{\mathbf{T}''_{1,m}}}\right)\right).
\end{align}
To simplify the notation, we define $\mathbf{T}''=\{\mathbf{T}''_0,\mathbf{T}''_1\}=\{\mathbf{T}''_{0,1},\dots,\mathbf{T}''_{0,n},\mathbf{T}''_{1,1},\dots,\mathbf{T}''_{1,n}\}$. Note that $\mathbf{T}''_{j,m}$ for $m=1,\dots,n$ and $j=0,1$ are not independent of each other due to the permutation that generates them. Therefore, we cannot directly apply standard concentration inequalities, as is possible in the existing supersample setting.

To address this, we use the results from \citet{joag1983negative}, which concern the negative association of permutation variables. From Theorem 2.11 in \citet{joag1983negative}, the distribution $P(\mathbf{T})$ satisfies negative association. Additionally, as discussed in Section 3.3 of \citet{joag1983negative} and further in Proposition 4 and 5 of \citet{dubhashi1996balls}, we have that
\begin{align}
    &\log\Ex_{P(\mathbf{T})q(\mathbf{e},\theta,\phi|X_{\mathbf{T}_1})\EX_{P(\mathbf{T}')}q(\mathbf{\bar{J}},\mathbf{J}|\mathbf{e},\phi,X_{\mathbf{T}'})}\\
    &\quad\quad\quad\quad\Ex_{P(\mathbf{T}'')}\exp\left(\frac{\lambda}{n}\sum_{k=1}^{K}\sum_{m=1}^nl(X_{\mathbf{T}_{0,m}},g_\theta(e_k))\left(\mathbbm{1}_{k=J_{\mathbf{T}''_{0,m}}}-\mathbbm{1}_{k=J_{\mathbf{T}''_{1,m}}}\right)\right) \\
    &\leq \log\Ex_{P(\mathbf{T})q(\mathbf{e},\theta,\phi|X_{\mathbf{T}_1})\!\EX_{P(\mathbf{T}')}\!q(\mathbf{\bar{J}},\mathbf{J}|\mathbf{e},\phi,X_{\mathbf{T}'})}\\
    &\quad\quad\quad\quad\Ex_{\prod_{m=1}^n\prod_{j=0,1}\!P(\mathbf{T}''_{j,m})}\exp\!\left(\frac{\lambda}{n}\sum_{k=1}^{K}\sum_{m=1}^nl(X_{\mathbf{T}_{0,m}},g_\theta(e_k))\left(\!\mathbbm{1}_{k=J_{\mathbf{T}''_{0,m}}}\!-\!\mathbbm{1}_{k=J_{\mathbf{T}''_{1,m}}}\!\right)\!\right),
\end{align}
where $P(\mathbf{T}''_{j,m})$ is the marginal distribution, implying that $\mathbf{T}''_{j,m}$ are now $2n$ independent random variables. Intuitively, the results in \citet{joag1983negative} indicate that the elements of the permutation index, which follow the permutation distribution, are negatively correlated. As a result, the expectation of the marginal distribution is larger than that of the joint distribution.

Since $\{\mathbf{T}''_{j,m}\}$ are independent, we can apply McDiarmid's inequality, which leads to the results in
\begin{align}
    &\log\Ex_{P(\mathbf{T})q(\mathbf{e},\theta,\phi|X_{\mathbf{T}_1})\EX_{P(\mathbf{T}')}q(\mathbf{\bar{J}},\mathbf{J}|\mathbf{e},\phi,X_{\mathbf{T}'})}\\
    &\quad\quad\quad\quad\exp\left(\frac{\lambda}{n}\sum_{k=1}^{K}\sum_{m=1}^nl(X_{\mathbf{T}_{0,m}},g_\theta(e_k))\left(\mathbbm{1}_{k=\bar J_m}-\mathbbm{1}_{k={J}_m}\right)\right) \\
        &\leq \log\Ex_{P(\mathbf{T})q(\mathbf{e},\theta,\phi|X_{\mathbf{T}_1})\!\EX_{P(\mathbf{T}')}q(\mathbf{\bar{J}},\mathbf{J}|\mathbf{e},\phi,X_{\mathbf{T}'})}\!\\
        &\quad\quad\quad\quad\Ex_{\prod_{m=1}^n\prod_{j=0,1}\!P(\mathbf{T}''_{j,m})}\exp\!\left(\frac{\lambda}{n}\sum_{k=1}^{K}\sum_{m=1}^nl(X_{\mathbf{T}_{0,m}},g_\theta(e_k))\!\left(\mathbbm{1}_{k=J_{\mathbf{T}''_{0,m}}}\!-\!\mathbbm{1}_{k=J_{\mathbf{T}''_{1,m}}}\!\right)\!\right)\!\notag \\
    &\leq \frac{\lambda^2\Delta^2}{n}.\label{eq_independent_negative}
\end{align}
This is derived similarly to Eq.~\eqref{mcdiarmid_estimate}. Note that there are $2n$ variables so the calculation of the upper bound is $(\Delta \lambda/n)^2/8\times 2n=\lambda^2\Delta^2/4n$.

Next, we focus on the third and fourth terms in Eq.~\eqref{derivation_type_4_symmetry}. Similarly to Eq.~\eqref{eq_bound_third_fourth_expand}, we have
\begin{align}
&\Ex_{X,\mathbf{T}}\sum_{k=1}^{K}\frac{1}{n}\sum_{m=1}^n \Ex_{q( J_m|\mathbf{e},\phi,X_{\mathbf{T}_{1,m}})q(\mathbf{e},\phi,\theta|X_{\mathbf{T}_1})}\|X_{\mathbf{T}_{0,m}}-g_\theta(e_k)\|^2\mathbbm{1}_{k={J}_m}  \\   
&-\Ex_{X,\mathbf{T}}\sum_{k=1}^{K}\frac{1}{n}\sum_{m=1}^n \Ex_{q(J_m|\mathbf{e},\phi,X_{\mathbf{T}_{1,m}})q(\mathbf{e},\phi,\theta|X_{\mathbf{T}_1})}\|X_{\mathbf{T}_{1,m}}-g_\theta(e_k)\|^2\mathbbm{1}_{k={J}_m}\\
&=\Ex_{X,\mathbf{T}}\frac{2}{n}\sum_{m=1}^n \left(X_{\mathbf{T}_{1,m}}-X_{\mathbf{T}_{0,m}}\right)\cdot \Ex_{q(J_m|\mathbf{e},\phi,X_{\mathbf{T}_{1,m}})q(\mathbf{e},\phi,\theta|X_{\mathbf{T}_1})}\sum_{k=1}^{K}g_\theta(e_k) \mathbbm{1}_{k= J_m}\\
&\leq \Ex_{X}\frac{1}{\lambda}\KLD(\mathbf{Q}|\mathbf{P})+\Ex_{X}\frac{1}{\lambda}\log\Ex_{\mathbf{P}}\exp\left(\frac{2\lambda}{n}\sum_{m=1}^n \left(X_{\mathbf{T}_{1,m}}-X_{\mathbf{T}_{0,m}}\right)\cdot \sum_{k=1}^{K}g_\theta(e_k) \mathbbm{1}_{k= J_m}\right)\\
&\leq \Ex_{X}\frac{1}{\lambda}\KLD(\mathbf{Q}|\mathbf{P})\\
&+\Ex_{X}\frac{1}{\lambda}\log\Ex_{P(\mathbf{T})q(\mathbf{e},\theta,\phi|X_{\mathbf{T}_1})\!\EX_{P(\mathbf{T}')}q(\mathbf{\bar{J}},\mathbf{J}|\mathbf{e},\phi,X_{\mathbf{T}'})}\Ex_{\prod_{m=1}^n\prod_{j=0,1}\!P(\mathbf{T}''_{j,m})}\\
&\exp\left(\frac{2\lambda}{n}\sum_{m=1}^n \left(X_{\mathbf{T}_{1,m}}-X_{\mathbf{T}_{0,m}}\right)\cdot \sum_{k=1}^{K}g_\theta(e_k) \mathbbm{1}_{k= J_m}\right).\label{eq_proof_permute_second_term}
\end{align}
We first evaluate the expectation of the exponential moment;
\begin{align}
    &\Omega\coloneqq 
    \Ex_{P(\mathbf{T})q(\mathbf{e},\theta,\phi|X_{\mathbf{T}_1})}\frac{2}{n}\sum_{m=1}^n \left(X_{\mathbf{T}_{1,m}}-X_{\mathbf{T}_{0,m}}\right)\cdot \Ex_{\EX_{P(\mathbf{T}')}q(\mathbf{\bar{J}},\mathbf{J}|\mathbf{e},\phi,X_{\mathbf{T}'})}\sum_{k=1}^{K}g_\theta(e_k) \mathbbm{1}_{k= J_m}\label{eq_second_derivation_permute}.
\end{align}
Let us now focus on the expectation $\EX_{P(\mathbf{T}')}q(\mathbf{\bar{J}},\mathbf{J}|\mathbf{e},\phi,X_{\mathbf{T}'})$. Due to the permutation symmetry, $\Ex_{\EX_{P(\mathbf{T}')}q(\mathbf{\bar{J}},\mathbf{J}|\mathbf{e},\phi,X_{\mathbf{T}'})}\sum_{k=1}^{K} \mathbbm{1}_{k=J_m}$ is the same for all $m$.

For instance, when $n=2$, the possible permutations of $\mathbf{T}$ are $\mathbf{T}=(1,2,3,4),(1,2,4,3),(1,3,2,4),\dots,$ resulting in $24$ distinct patterns and thus
\begin{align}
&P_{k,1}=\Ex_{ \EX_{P(\mathbf{T}')}q(\mathbf{\bar{J}},\mathbf{J}|\mathbf{e},\phi,X_{\mathbf{T}'})}\mathbbm{1}_{k=\bar{J}_1} = \Ex_{\frac{1}{4}q(J_1|\mathbf{e},\phi,X_{1})  + \frac{1}{4}q(J_1|\mathbf{e},\phi,X_{2}) + \frac{1}{4}q(J_1|\mathbf{e},\phi,X_{3}) +\frac{1}{4}q(J_1|\mathbf{e},\phi,X_{4})} \mathbbm{1}_{k={J}_1}\notag \\
  &P_{k,2}=\Ex_{ \EX_{P(\mathbf{T}')}q(\mathbf{\bar{J}},\mathbf{J}|\mathbf{e},\phi,X_{\mathbf{T}'})}\mathbbm{1}_{k=\bar{J}_2} = \Ex_{\frac{1}{4}q(J_2|\mathbf{e},\phi,X_{1})  + \frac{1}{4}q(J_2|\mathbf{e},\phi,X_{2}) + \frac{1}{4}q(J_2|\mathbf{e},\phi,X_{3}) +\frac{1}{4}q(J_2|\mathbf{e},\phi,X_{4})} \mathbbm{1}_{k={J}_2}\notag \\
  \vdots.
\end{align}
Thus, all $P_{k,m}$ does not depend on the index $m$. So we express $\Ex_{\EX_{P(\mathbf{T}')}q(\mathbf{\bar{J}},\mathbf{J}|\mathbf{e},\phi,X_{\mathbf{T}'})}\sum_{k=1}^{K} \mathbbm{1}_{k= J_m}$ as $P_k$. Then Eq.~\eqref{eq_second_derivation_permute} can be written as
\begin{align}
        &\Ex_X\Ex_{P(\mathbf{T})q(\mathbf{e},\theta,\phi|X_{\mathbf{T}_1})} \left(\frac{1}{n}\sum_{m=1}^nX_{\mathbf{T}_{1,m}}-\frac{1}{n}\sum_{m=1}^nX_{\mathbf{T}_{0,m}}\right)\cdot \sum_{k=1}^{K}g_\theta(e_k) P_k\notag \\
        &=\Ex_{P(\mathbf{T})}\Ex_X\left(\frac{1}{n}\sum_{m=1}^nX_{\mathbf{T}_{1,m}}-\frac{1}{n}\sum_{m=1}^nX_{\mathbf{T}_{0,m}}\right)\cdot q(\mathbf{e},\theta,\phi|X_{\mathbf{T}_1}) \sum_{k=1}^{K}g_\theta(e_k) P_k\notag \\
        &=\Ex_{P(\mathbf{T})}\Ex_{X_\mathbf{T_1}}\Ex_{X_\mathbf{T_0}}\left(\frac{1}{n}\sum_{m=1}^nX_{\mathbf{T}_{1,m}}-\frac{1}{n}\sum_{m=1}^nX_{\mathbf{T}_{0,m}}\right)\cdot q(\mathbf{e},\theta,\phi|X_{\mathbf{T}_1}) \sum_{k=1}^{K}g_\theta(e_k) P_k\notag \\
        &=\Ex_{P(\mathbf{T})}\Ex_{X_\mathbf{T_1}}\left(\frac{1}{n}\sum_{m=1}^nX_{\mathbf{T}_{1,m}}-\Ex_{X_\mathbf{T_0}}\frac{1}{n}\sum_{m=1}^nX_{\mathbf{T}_{0,m}}\right)\cdot q(\mathbf{e},\theta,\phi|X_{\mathbf{T}_1}) \sum_{k=1}^{K}g_\theta(e_k) P_k\notag \\
        &=\Ex_{P(\mathbf{T})}\Ex_{X_\mathbf{T_1}}\left(\frac{1}{n}\sum_{m=1}^nX_{\mathbf{T}_{1,m}}-\Ex_{X}X\right)\cdot q(\mathbf{e},\theta,\phi|X_{\mathbf{T}_1}) \sum_{k=1}^{K}g_\theta(e_k) P_k\notag \\
        &\leq \Ex_{P(\mathbf{T})\Ex_{X_\mathbf{T_1}}q(\mathbf{e},\theta,\phi|X_{\mathbf{T}_1})}\left\|\frac{1}{n}\sum_{m=1}^nX_{\mathbf{T}_{1,m}}-\Ex_{X}X\right\|\Ex_{P(\mathbf{T})\Ex_{X_\mathbf{T_1}}q(\mathbf{e},\theta,\phi|X_{\mathbf{T}_1})}\left\|\sum_{k=1}^{K}g_\theta(e_k) P_k\right\|_{\infty}\notag \\
        &\leq \Ex_{P(\mathbf{T})\Ex_{X_\mathbf{T_1}}q(\mathbf{e},\theta,\phi|X_{\mathbf{T}_1})}\left\|\frac{1}{n}\sum_{m=1}^nX_{\mathbf{T}_{1,m}}-\Ex_{X}X\right\|\Ex_{P(\mathbf{T})\Ex_{X_\mathbf{T_1}}q(\mathbf{e},\theta,\phi|X_{\mathbf{T}_1})}\left\|\sum_{k=1}^{K}g_\theta(e_k) P_k\right\|_{\infty}\notag \\
        &\leq \Ex_{P(\mathbf{T})\Ex_{X_\mathbf{T_1}}}\left\|\frac{1}{n}\sum_{m=1}^nX_{\mathbf{T}_{1,m}}-\Ex_{X}X\right\|\sqrt{\Delta}.
\end{align}
We bound the above exactly same ways as Eq.~\eqref{efron_stein_estimate}, that is, we can upper bound the above by the variance of bounded random variable and thus, we have
\begin{align}
    \Ex_{P(\mathbf{T})\Ex_{X_\mathbf{T_1}}}\left\|\frac{1}{n}\sum_{m=1}^nX_{\mathbf{T}_{1,m}}-\Ex_{X}X\right\| \leq \sqrt{\frac{\Delta}{4n}}.
\end{align}
Thus, we have
\begin{align}
     &\Omega= \Ex_X\Ex_{P(\mathbf{T})q(\mathbf{e},\theta,\phi|X_{\mathbf{T}_1})} \left(\frac{2}{n}\sum_{m=1}^nX_{\mathbf{T}_{1,m}}-\frac{2}{n}\sum_{m=1}^nX_{\mathbf{T}_{0,m}}\right)\cdot \sum_{k=1}^{K}g_\theta(e_k) P_k \leq \frac{\Delta}{\sqrt{n}},
\end{align}

Let us back to the evaluation of the exponential moment in Eq.~\eqref{eq_proof_permute_second_term}, we will evaluate the following
\begin{align}\label{eq_proof_permute_second_term2}
\Ex_{X}\frac{1}{\lambda}\KLD(\mathbf{Q}|\mathbf{P})+\Ex_{X}\frac{1}{\lambda}\log\Ex_{\mathbf{P}}\exp\left(\frac{2\lambda}{n}\sum_{m=1}^n \left(X_{\mathbf{T}_{1,m}}-X_{\mathbf{T}_{0,m}}\right)\cdot \sum_{k=1}^{K}g_\theta(e_k) \mathbbm{1}_{k= J_m}-\lambda\Omega\right)+\Omega.
\end{align}
We then evaluate this similarly to Eq.~\eqref{eq_independent_negative}, which uses the negative association of the permutation distribution and McDiarmid's inequality. The the exponential moment is upper bounded by $(2\Delta\lambda/n)^2/8\times 2n=\lambda^2\Delta^2/n$
We then obtain
\begin{align}
&\Ex_{X,\mathbf{T}}\sum_{k=1}^{K}\frac{1}{n}\sum_{m=1}^n \Ex_{q(J_m|\mathbf{e},\phi,X_{\mathbf{T}_{1,m}})q(\mathbf{e},\phi,\theta|X_{\mathbf{T}_1})}\|X_{\mathbf{T}_{1,m}}-g_\theta(e_k)\|^2\mathbbm{1}_{k={J}_m} \notag \\   
&-\Ex_{X,\mathbf{T}}\sum_{k=1}^{K}\frac{1}{n}\sum_{m=1}^n\Ex_{q(J_m|\mathbf{e},\phi,X_{\mathbf{T}_{0,m}})q(\mathbf{e},\phi,\theta|X_{\mathbf{T}_1})}\|X_{\mathbf{T}_{0,m}}-g_\theta(e_k)\|^2\mathbbm{1}_{k=J_m}\notag \\
&\leq  \Ex_{X}\frac{1}{\lambda}\KLD(\mathbf{Q}|\mathbf{P})+\Ex_{X}\frac{1}{\lambda}\log\Ex_{\mathbf{P}}\exp\left(\frac{2\lambda}{n}\sum_{m=1}^n \left(X_{\mathbf{T}_{1,m}}\!-\!X_{\mathbf{T}_{0,m}}\right)\!\cdot\! \sum_{k=1}^{K}g_\theta(e_k) \mathbbm{1}_{k= J_m}\!-\!\lambda\Omega\right)+\Omega \notag \\
&\leq  \Ex_{X}\frac{1}{\lambda}\KLD(\mathbf{Q}|\mathbf{P})+\frac{\lambda\Delta^2}{n}+\frac{\Delta}{\sqrt{n}}\label{app_permute_second}.
\end{align}

In conclusion, from Eqs.~\eqref{eq_independent_negative} and \eqref{app_permute_second} we have
\begin{align}
    \mathrm{gen}(n,\cald) \leq  \Ex_{X}\frac{2}{\lambda}\KLD(\mathbf{Q}|\mathbf{P})+\frac{5\lambda\Delta^2}{4n}+ \frac{\Delta}{\sqrt{n}},
\end{align}
and optimizing the $\lambda$, we have
\begin{align}
    \mathrm{gen}(n,\cald) \leq  2\Delta\sqrt{\frac{5\Ex_{X}\KLD(\mathbf{Q}|\mathbf{P})}{2n}}+ \frac{\Delta}{\sqrt{n}}.
\end{align}

We can slightly improve the coefficient of the first term in the above bound as follows. The above proof follows the approach in Appendix~\ref{proof_vqvae_reconstruction_loss}. We separately apply the Donsker-Valadhan lemma for the first two terms and latter two terms in Eq.~\eqref{derivation_type_4_symmetry}. However, since the posterior and prior distributions used for the Donsker-Valadhan lemma are the same as shown in Eq.~\eqref{posterior_prior_permute}, we only need to use the Donsker-Valadhan lemma once. This leads to an improved coefficient.

Specifically, the proof goes as follows; combining Eqs.~\eqref{eq_proof_permute_second_term3} and \eqref{eq_proof_permute_second_term2}, we have simultaneously treat all terms in Eq.~\eqref{derivation_type_4_symmetry}. By Donsker-Valadhan lemma, we have
\begin{align}  
&\mathrm{gen}(n,\cald)  \\
& \leq \Ex_{X}\frac{1}{\lambda}\KLD(\mathbf{Q}|\mathbf{P})+\Ex_{X}\frac{1}{\lambda}\log\Ex_{\mathbf{P}}\\
&\exp\left(\frac{\lambda}{n}\sum_{k=1}^{K}\sum_{m=1}^nl(X_{\mathbf{T}_{0,m}},g_\theta(e_k))\left(\mathbbm{1}_{k=\bar J_m}\!-\!\mathbbm{1}_{k={J}_m}\right)+\frac{2\lambda}{n}\sum_{m=1}^n \left(X_{\mathbf{T}_{1,m}}-X_{\mathbf{T}_{0,m}}\right)\cdot \sum_{k=1}^{K}g_\theta(e_k) \mathbbm{1}_{k= J_m}-\lambda\Omega\right)+\Omega.
\end{align}
From the negative association property, the exponential moment term can be upper-bounded as 
\begin{align}
&\log\Ex_{P(\mathbf{T})q(\mathbf{e},\theta,\phi|X_{\mathbf{T}_1})\!\EX_{P(\mathbf{T}')}\!q(\mathbf{\bar{J}},\mathbf{J}|\mathbf{e},\phi,X_{\mathbf{T}'})}\Ex_{\prod_{m=1}^n\prod_{j=0,1}\!P(\mathbf{T}''_{j,m})}\\
    &\exp\!\left(\frac{\lambda}{n}\sum_{k=1}^{K}\sum_{m=1}^nl(X_{\mathbf{T}_{0,m}},g_\theta(e_k))\left(\!\mathbbm{1}_{k=J_{\mathbf{T}''_{0,m}}}\!-\!\mathbbm{1}_{k=J_{\mathbf{T}''_{1,m}}}\!\right)+\frac{2\lambda}{n}\sum_{m=1}^n \left(X_{\mathbf{T}_{1,m}}-X_{\mathbf{T}_{0,m}}\right)\cdot \sum_{k=1}^{K}g_\theta(e_k) \mathbbm{1}_{k= J_{\mathbf{T}''_{1,m}}}-\lambda\Omega\!\right),
\end{align}
Since $\{\mathbf{T}''_{j,m}\}$ are independent, we can apply McDiarmid's inequality. The the exponential moment is upper bounded by $((1+2)\Delta\lambda/n)^2/8\times 2n=9\lambda^2\Delta^2/4n$. Thus, we have
\begin{align}  
&\mathrm{gen}(n,\cald) \leq \Ex_{X}\frac{1}{\lambda}\KLD(\mathbf{Q}|\mathbf{P})+9\lambda^2\Delta^2/4n+\frac{\Delta}{\sqrt{n}}.
\end{align}
By optimizing $\lambda$, we have
\begin{align}  
&\mathrm{gen}(n,\cald) \leq3\Delta\sqrt{\frac{\Ex_X\KLD(\mathbf{Q}|\mathbf{P})}{n}}+ \frac{\Delta}{\sqrt{n}}.
\end{align}

\subsection{Proof of Eq.~(\ref{eq_deterministic_cmi}) and discussion about the deterministic encoder}
First, we can show 
\begin{align}
    \Ex_{\tilde{X},\mathbf{T}}\Ex_{q(\mathbf{e},\phi|\tilde{X}_{\mathbf{T}_1})}\!\KLD(\mathbf{Q}_{\mathbf{\widetilde J},\mathbf{T}}\|\mathbf{Q}_{\mathbf{\widetilde J}})\leq I(\mathbf{e},\phi;\mathbf{T}|\tilde{X})+I(\tilde{\mathbf{J}};\mathbf{T}|\mathbf{e},\phi,\tilde{X}).
\end{align}
exactly same way as Appendix~\ref{app_proof_IT_IB}.

By the definition of the CMI, the CMI is expressed as the difference of entropy and conditional entropy. Since $\widetilde{J}$ is discrete, the entropy is always larger than $0$. Thus, we have
\begin{align}
    I(\tilde{\mathbf{J}};\mathbf{T}|\mathbf{e},\phi,\tilde{X})\leq H[\tilde{\mathbf{J}}|\mathbf{e},\phi,\tilde{X}]\leq  H[\tilde{\mathbf{J}}|\tilde{X}].
\end{align}
where $H$ is the Shannon entropy. Note that the entropy is bounded by the growth function, i.e., the maximum number of different ways in which a dataset of size $2n$ can be classified in $K$. And such quantity is bounded in the proof of Theorem 8 of \citet{harutyunyan2021}, thus 
\begin{align}\label{eq_cmi_natarajan_permutation}
    I(\tilde{\mathbf{J}};\mathbf{T}|\mathbf{e},\phi,\tilde{X})\leq d_K\log\left(\binom{K}{2}\frac{2en}{d_K}\right).
\end{align}
 holds similarly to Eq.~\eqref{eq_cmi_natarajan_supersample}.

Thus, by regularizing the capacity of the encoder model (via the Natarajan dimension), the CMI term $I(\tilde{\mathbf{J}};\mathbf{T}|\mathbf{e},\phi,\tilde{X})/n$ scales as $\mathcal{O}(\log n)$. See Appendix~\ref{app_natarajan} for the additional discussion.

\subsection{Proof of Theorem~\ref{thme_metric}}\label{app_proof_metric}
To prove the theorem, we prove a more general result than Theorem~\ref{thme_metric}, and then we apply that result to the specific setting of Theorem~\ref{thme_metric}. Therefore, we first derive such a general result.
\subsubsection{Discretization in encoder function}
Here, we present the results for a general stochastic encoder. For fixed $\phi$ and $\mathbf{e}$, assume that for all $\mathbf{x} \in \tilde{X}$, for any $j \in [K]$, and for a fixed $\delta \in \mathbb{R}^+$, the following holds: $q(J=j|\mathbf{e},f_\phi(x))\leq e^{h(\delta)} q(J=j|\mathbf{e},\hat{f}(x)))$ with $h: \mathbb{R}^+ \to \mathbb{R}^+$.

\begin{theorem}\label{thme_metric2}
 Assume that there exists a positive constant $\Delta_z$ such that $\sup_{z,z'\in\calz}\|z-z'\|<\Delta_z$. 
Then, when using Eq.~\eqref{stochastic_encoder} and under the same setting as Theorem~\ref{reconstructon_gen_permutation}, for any $\delta\in (0,1]$, we have
\begin{align}
 &\quad\mathrm{gen}(n,\cald) \leq 2\Delta\sqrt{nh(\delta)}+3\Delta\sqrt{\frac{2\log \caln (\delta,\calf,2n)}{n}}+  \frac{\Delta}{\sqrt{n}}.
\end{align}
\end{theorem}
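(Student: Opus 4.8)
The plan is to re-run the argument behind Theorem~\ref{reconstructon_gen_permutation} — the permutation-symmetric decomposition of the generalization gap, the Donsker--Varadhan step, negative association of the uniform permutation distribution, and McDiarmid's inequality (Lemma~\ref{lem_mcdiamid}) — while replacing the single step that forces an algorithm-dependent quantity ($I(\mathbf e,\phi;\mathbf T\mid\tilde X)$, arising from the data-dependent posterior over $(\mathbf e,\phi,\theta)$) by a discretization of the encoder class. Having fixed a realized supersample $\tilde X=(\tilde X_1,\dots,\tilde X_{2n})$, I would fix a minimal $\delta$-net $\{\hat f_1,\dots,\hat f_N\}$ of $\calf$ for the pseudo-metric $d_{2n}(f,g)=\max_{i\in[2n]}\|f(\tilde X_i)-g(\tilde X_i)\|_\infty$, so that $N\le\caln(\delta,\calf,2n)$; this is why the covering number is evaluated at $2n$ and not $n$ — both the ghost-test and the train halves of $\tilde X$ must be resolved at once.

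The encoder-discretization error is absorbed via change of measure. For an encoder $f_\phi$, let $\hat f$ be its nearest net element; the hypothesis $q(J=j\mid\mathbf e,f_\phi(\tilde X_i))\le e^{h(\delta)}\,q(J=j\mid\mathbf e,\hat f(\tilde X_i))$ at each of the $2n$ points and each $j$ gives $\mathbf Q_{\tilde{\mathbf J},\mathbf T}\le e^{2nh(\delta)}\mathbf Q^{\hat f}_{\tilde{\mathbf J},\mathbf T}$ and, since the mixture prior $\mathbf Q_{\tilde{\mathbf J}}$ is an average over permutations of such products, $\mathbf Q_{\tilde{\mathbf J}}\ge e^{-2nh(\delta)}\mathbf Q^{\hat f}_{\tilde{\mathbf J}}$. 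Feeding these two-sided bounds into the exponential-moment estimate and balancing the resulting $e^{2nh(\delta)}$ factor against the $\lambda$-linear term yields the first summand $2\Delta\sqrt{nh(\delta)}$; because $l\in[0,\Delta]$, nothing worse than this uniform multiplicative distortion can appear.

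With the encoder replaced by a net element, what remains is uniform convergence over a finite class. For each fixed $\hat f$ — and for each $\mathbf e,\theta$, which stay frozen throughout, since in the decomposition of the gap used for Theorem~\ref{reconstructon_gen_permutation} (cf.\ Eq.~\eqref{derivation_type_4_symmetry}) the decoder and codebook only ever multiply bounded-loss indicator differences, so no covering of the decoder is needed and $d_\theta$ never enters — the permutation symmetry of $\mathbf Q_{\tilde{\mathbf J}}$, negative association of the uniform permutation law~\citep{joag1983negative}, and Lemma~\ref{lem_mcdiamid} bound the exponential moment by $\exp(c\lambda^2\Delta^2/n)$ exactly as in that proof; union-bounding over the $N\le\caln(\delta,\calf,2n)$ net elements then contributes $\log\caln(\delta,\calf,2n)$ in place of the parameter mutual information. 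The linear-in-$\tilde X$ overfitting term is handled verbatim: the averaged index-selection probabilities $P_k$ are independent of the sample index $m$ (this survives discretization because $\mathbf Q_{\tilde{\mathbf J}}$ averages over all $(2n)!$ permutations of the net element's outputs), so the term collapses to the variance of a bounded random variable and yields $\Delta/\sqrt n$. Treating all pieces of the decomposition under a single Donsker--Varadhan application (the refinement at the end of Appendix~\ref{app_permutation_proof}) and optimizing $\lambda$ gives the constant $3$ in front of $\Delta\sqrt{2\log\caln(\delta,\calf,2n)/n}$, which is Theorem~\ref{thme_metric2}. Theorem~\ref{thme_metric} then follows by checking the hypothesis for the softmax posterior Eq.~\eqref{stochastic_encoder}: if $d_{2n}(f_\phi,\hat f)\le\delta$, then using $\sup_{z,z'\in\calz}\|z-z'\|<\Delta_z$ and $|\,\|z-e_j\|^2-\|z'-e_j\|^2|\le\|z-z'\|\cdot\|z+z'-2e_j\|\le 2\delta\Delta_z$, both the numerator and the normalizing sum of the softmax change by at most a factor $e^{O(\beta\delta\Delta_z)}$, so one may take $h(\delta)=O(\beta\delta\Delta_z)$.

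The main obstacle I expect is making the change of measure and the permutation argument coexist: the net is chosen conditionally on $\tilde X$, yet the negative-association/McDiarmid bound must still go through after the index laws have been replaced by those of the net element, and one has to verify that the dependence among the $2n$ permuted coordinates does not amplify the $e^{h(\delta)}$ distortion — equivalently, that the domination $\mathbf Q_{\tilde{\mathbf J},\mathbf T}\le e^{2nh(\delta)}\mathbf Q^{\hat f}_{\tilde{\mathbf J},\mathbf T}$ and the matching lower bound on the mixture prior slot into the Donsker--Varadhan inequality without re-introducing any data-dependent term. Tracking the accumulated $e^{h(\delta)}$ factors through the $\lambda$-optimization so as to land exactly on $2\Delta\sqrt{nh(\delta)}$ (rather than something coarser) is the other delicate point.
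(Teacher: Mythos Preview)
Your proposal is essentially correct and follows the same overall strategy as the paper: discretize the encoder via a $\delta$-net on the $2n$ supersample points, pay the distortion cost $h(\delta)$ per coordinate through the density-ratio hypothesis, and then run the permutation-symmetric argument of Theorem~\ref{reconstructon_gen_permutation} on the discretized problem so that the KL term collapses to (twice) the log-covering number.

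The one organizational difference worth noting is that the paper resolves precisely the obstacle you flag in your last paragraph by \emph{separating} the two Donsker--Varadhan applications rather than interleaving them. Concretely, the paper first applies Donsker--Varadhan between the original posterior $\mathbf{Q}$ over $\tilde{\mathbf J}$ and its discretized counterpart $\mathbf{Q}_\delta$ (both under the \emph{same} permutation $\mathbf T$), bounding $\KLD(\mathbf{Q}\|\mathbf{Q}_\delta)\le 2nh(\delta)$ directly from the pointwise hypothesis and bounding the exponential moment crudely by $\lambda\Delta^2/2$ via boundedness of the loss; optimizing this isolated $\lambda$ produces the $2\Delta\sqrt{nh(\delta)}$ term cleanly. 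Only \emph{after} this reduction does the paper invoke the full permutation machinery (negative association, McDiarmid, the $P_k$ symmetry) on $\mathrm{gen}(n,\cald,\delta)$, now with the encoder already replaced by its net element. Because the discretization is complete before any permutation shuffling occurs, there is no interaction between the $e^{h(\delta)}$ factors and the dependent permutation coordinates, and no need for the two-sided domination of the mixture prior $\mathbf{Q}_{\tilde{\mathbf J}}$ that you propose. Your interleaved version should also go through, but the paper's decoupling makes the constant-tracking mechanical and sidesteps the delicacy you anticipate.
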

We can show that  Eq.~\eqref{stochastic_encoder} satisfies $h(\delta)=8\beta \Delta_z\delta$, see Appendix~\ref{app_proof_softmax} for this proof. Thus by substituting this into the above Theorem, we obtain Theorem~\ref{thme_metric}.

\begin{proof}
When analyzing the contribution of the encoder model to generalization, it is often necessary to discretize the function or parameters of the encoder to control the CMI using the metric entropy of the model. To achieve this, we consider a $\delta$-cover $\hat{f}$ of the function . In this derivation, we examine both the supersample and permutation-invariant settings, highlighting that the supersample setting fails to establish a uniform convergence bound.

First, we begin with the supersample setting.
Given a supersample $\tilde X$, we recall the definition of the indices.
In this theorem, we focus on the distribution of the index defined by the codebook $\mathbf{e}$ and $z \in \mathcal{Z}$, where $z$ represents the output of the encoder $f_\phi(\cdot)$. Thus, we express it as $q(J|\mathbf{e},z)$. Moreover, in this section, we use the notation $q(\mathbf{e},\phi,\theta|\tilde{X},U) = q(\mathbf{e},\phi,\theta|\tilde{X}_U)$. The joint distribution is then given by:
\begin{align}
    \mathbf{Q}'\coloneqq& P(\tilde{X})P(U)q(\mathbf{e},\phi,\theta|\tilde X,U)q(\tilde{\mathbf{J}}|\mathbf{e},\tilde{\mathbf{f}})p(\tilde{\mathbf{f}}|\mathbf{{f}},U)p(\mathbf{f}|\phi,\tilde{X}),\\
    \mathbf{Q}'_\delta \coloneqq& P(\tilde{X})P(U)q(\mathbf{e},\phi,\theta|\tilde X,U)q(\tilde{\mathbf{J}}|\mathbf{e},\tilde{\mathbf{f}})p(\tilde{\mathbf{f}}|\mathbf{\hat{f}},U)p(\mathbf{\hat{f}}|\mathbf{f})p(\mathbf{f}|\phi,\tilde{X}),
\end{align}
where $q(\tilde{\mathbf{J}}|\mathbf{e},\tilde{\mathbf{f}})$ represents the elementwise application of $q(J|\mathbf{e},\cdot)$ to $\tilde{\mathbf{f}}\in\mathcal{Z}^{2n}$. And $p(\mathbf{f}|\phi,\tilde{X})$ is the elementwise application of $p(\mathbf{f}|\phi,\cdot)$ to $\tilde{X}$, which simply computes the encoder output for each sample in $\tilde{X}$.

Then, in $p(\mathbf{\hat{f}}|\mathbf{f})$, the discretization process is performed using the $\delta$-cover (thus, it is represented by the Dirac mass). We express this as $p(\mathbf{\hat{f}}|\mathbf{f})=\delta(\mathbf{\hat{f}},\mathbf{\hat{f}}^\phi)$, where $\mathbf{\hat{f}}^\phi$ is the selected point from the $\delta$-cover. Then, for $p(\tilde{\mathbf{f}}|\mathbf{\hat{f}},U)$, we randomly shuffle $\mathbf{\hat{f}}\in\mathbb{R}^{2n}$ with $U$, formally defining $\mathbf{\hat f}_{\tilde{U}}\coloneqq (\mathbf{f}_U,\mathbf{f}_{\bar{U}})$. Thus, we write $p(\tilde{\mathbf{f}}|\mathbf{\hat{f}},U)=\delta(\tilde{\mathbf{f}},\mathbf{\hat f}_{\tilde{U}})$. Similarly, we define $p(\tilde{\mathbf{f}}|\mathbf{f},U)=\delta(\tilde{\mathbf{f}},\mathbf{ f}_{\tilde{U}})$.

This definition differs slightly from the posterior distribution in Eq.~\eqref{eq_prior_supersample_symmetry}, where we first shuffle $\tilde{X}$ with $U$ before passing it through the encoder. This simple modification allows us to derive the bound based on metric entropy. When evaluating the generalization error bound, we are only concerned with $\widetilde{J}$. By integrating out $\tilde{\mathbf{f}}$, $\phi$, and $\mathbf{\hat{f}}$, we focus on the following posterior distributions:
\begin{align}
    \mathbf{Q}\coloneqq& P(\tilde{X})P(U)q(\mathbf{e},\mathbf{f},\theta|\tilde X,U)p(\tilde{\mathbf{J}}|\mathbf{e},\mathbf{f}_{\tilde{U}}),\\
    \mathbf{Q}_\delta \coloneqq& P(\tilde{X})P(U)q(\mathbf{e},\mathbf{f},\theta|\tilde X,U)p(\tilde{\mathbf{J}}|\mathbf{e},\hat{\mathbf{f}}^\phi_{\tilde{U}}).
\end{align}

To prove this lemma, we first replace the output of the encoder with that obtained using the $\delta$-cover of the encoder network. First note that the generalization error can be written as 
\begin{align}
 \mathrm{gen}(n,\cald) &=\Ex_{p(\tilde X)P(U)}\sum_{k=1}^{K}\frac{1}{n}\sum_{m=1}^n \Ex_{q(\bar{J}_m|\mathbf{e},\mathbf{{f}}_\phi(X_{m,\bar{U}_m}))q(\mathbf{e},\phi,\theta|X_U)}l(X_{m,\bar{U}_m},g_\theta(e_k))\mathbbm{1}_{k=\bar{J}_m}\\
&\quad\quad\quad\quad\quad\quad-\sum_{k=1}^{K}\frac{1}{n}\sum_{m=1}^n\Ex_{q(J_m|\mathbf{e},\mathbf{{f}}_\phi(X_{m,U_m})))q(\mathbf{e},\phi,\theta|X_U)}l((X_{m,U_m},g_\theta(e_k))\mathbbm{1}_{k=J_m}\\
&=\Ex_{p(\tilde X)p(U)q(\mathbf{e},\phi,\theta| X,U)p(\tilde{\mathbf{J}}|\mathbf{e},\mathbf{f}_{\tilde{U}})}\left[\sum_{k=1}^{K}\frac{1}{n}\sum_{m=1}^n l(X_{m,\bar{U}_m},g_\theta(e_k))\mathbbm{1}_{k=\bar{J}_m}-l((X_{m,U_m},g_\theta(e_k))\mathbbm{1}_{k=J_m}\right].
\end{align}
We also define the generalization under the delta cover of original function, conditioned o
\begin{align}
    \mathrm{gen}(n,\cald,\delta)&\coloneqq \Ex_{p(\tilde X)p(U)q(\mathbf{e},\phi,\theta| X,U)}\Big[\sum_{k=1}^{K}\frac{1}{n}\sum_{m=1}^n \Ex_{q(\bar{J}_m|\mathbf{e},\mathbf{\hat{f}}(X_{m,\bar{U}_m}))}l(X_{m,\bar{U}_m},g_\theta(e_k))\mathbbm{1}_{k=\bar{J}_m}\\
    &\quad\quad\quad-\sum_{k=1}^{K}\frac{1}{n}\sum_{m=1}^n\Ex_{q(J_m|\mathbf{e},\mathbf{\hat{f}}(X_{m,U_m})))}l((X_{m,U_m},g_\theta(e_k))\mathbbm{1}_{k=J_m}\Big]\notag \\
&=\Ex_{p(\tilde X)p(U)q(\mathbf{e},\phi,\theta| X,U)p(\tilde{\mathbf{J}}|\mathbf{e},\hat{\mathbf{f}}^\phi_{\tilde{U}})}\left[\sum_{k=1}^{K}\frac{1}{n}\sum_{m=1}^n l(X_{m,\bar{U}_m},g_\theta(e_k))\mathbbm{1}_{k=\bar{J}_m}-l((X_{m,U_m},g_\theta(e_k))\mathbbm{1}_{k=J_m}\right].
\end{align}
For the latter purpose, we define
\begin{align}
    \Delta_L\coloneqq \sum_{k=1}^{K}\frac{1}{n}\sum_{m=1}^n l(X_{m,\bar{U}_m},g_\theta(e_k))\mathbbm{1}_{k=\bar{J}_m}-\sum_{k=1}^{K}\frac{1}{n}\sum_{m=1}^nl((X_{m,U_m},g_\theta(e_k))\mathbbm{1}_{k=J_m}.
\end{align}

To evaluate these gap, we apply the Donsker-Valadhan lemma between the two distributions $\mathbf{Q}_J$ and $\mathbf{Q}_{\delta,J}$.
\begin{align}\label{eq_gen_discretized}
&\mathrm{gen}(n,\cald) \\
&\leq \mathrm{gen}(n,\cald,\delta)+\EX_{U,X}\Ex_{q(\mathbf{e},\phi,\theta|X_U)}\frac{1}{\lambda}\KLD(\mathbf{Q}\|\mathbf{Q}_{\delta})+\EX_{U,X}\Ex_{q(\mathbf{e},\phi,\theta|X_U)}\frac{1}{\lambda}\log \mathbb{E}_{p(\tilde{\mathbf{J}}|\mathbf{e},\hat{\mathbf{f}}^\phi_{\tilde{U}})}\exp\left(\lambda \Delta_L- \mathbb{E}_{p(\tilde{\mathbf{J}}|\mathbf{e},\hat{\mathbf{f}}^\phi_{\tilde{U}})}\lambda \Delta_L\right)\\
&\leq \mathrm{gen}(n,\cald,\delta)+\frac{2nh(\delta)}{\lambda}+\frac{\lambda \Delta^2}{2},
\end{align}
where we evaluated the KL divergence as 
\begin{align}
    \KLD(\mathbf{Q}\|\mathbf{Q}_{\delta})=\Ex_{\mathbf{Q}}\log\frac{\mathbf{Q}}{\mathbf{Q}_{\delta}}\leq 2nK\log e^{h(\delta)}=2nh(\delta).
\end{align}
The inequality is owing to the proper that for all $\mathbf{x} \in \tilde{X}$, for any $j \in [K]$, and for a fixed $\delta \in \mathbb{R}^+$,  $q(J=j|\mathbf{e},f_\phi(x))\leq e^{h(\delta)} q(J=j|\mathbf{e},\hat{f}(x)))$ holds by assumption. We also evaluated the exponential moment term by using the fact that $-\lambda\Delta\leq \lambda l(X,g_\theta(e_J))-\frac{\lambda}{n}\sum_{m=1}^n l(S_m,g_\theta(e_{J_m}))\leq \lambda\Delta$ to upper bound the exponential moment.

This implies that the first term corresponds to the generalization bound when using the $\delta$-cover of the encoder network. We can bound this term similarly to Theorem~\ref{reconstructon_gen},
\begin{align}
    \mathrm{gen}(n,\cald,\delta)\leq 2\Delta\sqrt{\frac{\KLD(\mathbf{Q}'_\delta \|\mathbf{P}'_{\delta})+\KLD(\mathbf{Q}'_\delta \|\mathbf{P} )}{n}}+\frac{\Delta}{\sqrt{n}},
\end{align}
 where we consider the following posterior and data-dependent, and data-independent prior distributions:
\begin{align}
    \mathbf{Q}'_\delta \coloneqq& P(\tilde{X})P(U)q(\mathbf{e},\phi,\theta|\tilde X,U)q(\tilde{\mathbf{J}}|\mathbf{e},\tilde{\mathbf{f}})p(\tilde{\mathbf{f}}|\mathbf{\hat{f}},U)p(\mathbf{\hat{f}}|\mathbf{f})p(\mathbf{f}|\phi,\tilde{X}),\\
    \mathbf{P}'_\delta \coloneqq& P(\tilde{X})P(U)q(\mathbf{e},\phi,\theta|\tilde X,U)p(\tilde{\mathbf{J}}|\mathbf{e},\tilde{\mathbf{f}})\Ex_{U'}p(\tilde{\mathbf{f}}|\mathbf{\hat{f}},U')p(\mathbf{\hat{f}}|\mathbf{f})p(\mathbf{f}|\phi,\tilde{X}),\\
    \mathbf{P} \coloneqq& P(\tilde{X})P(U)q(\mathbf{e},\phi,\theta|\tilde X,U)q(\tilde{\mathbf{J}}|\mathbf{e},\tilde{\mathbf{f}})p(\tilde{\mathbf{f}}|\mathbf{\hat{f}},U)\pi(\mathbf{\hat{f}})p(\mathbf{f}|\phi,\tilde{X}),
\end{align}
where $\pi(\mathbf{\hat{f}})$ is the data independent prior distribution over the $\delta$-covering, such as the uniform distribution.

Combining these, we have
\begin{align}
    \mathrm{gen}(n,\cald) \leq 2\Delta \sqrt{nh(\delta)}+2\Delta\sqrt{\frac{\KLD(\mathbf{Q}'_\delta \|\mathbf{P}'_{\delta})+\KLD(\mathbf{Q}'_\delta \|\mathbf{P} )}{n}}+\frac{\Delta}{\sqrt{n}}.
\end{align}
As for the CMI term, we have
\begin{align}
    \KLD(\mathbf{Q}'_{\delta}\|\mathbf{P}'_{\delta})\leq 2\log \caln (\delta,\calf,2n).\label{eq_proof_cmi_metric_meta}
\end{align}
The proof of Eq.~\eqref{eq_proof_cmi_metric_meta} is shown in below and this term can be bounded $\mathcal{O}(\log n)$ under moderate assumptions.

However, the second term $\KLD(\mathbf{Q}'_\delta \|\mathbf{P} )$, which corresponds to the empirical KL term, cannot be small as discussed in Theorem~\ref{reconstructon_gen}. That is, under the settings of Lemma~\ref{lemm_optimal_prior}, the empirical KL behaves $\mathcal{O}(1)$, which is undesirable behavior.

So we consider using the permutation symmetric setting. We can proceed the discretization almost the same in the above super sample setting. 
Under this distribution, the generalization gap can again upper bounded similar to Eq.~\eqref{eq_gen_discretized}. Then from Theorem~\ref{reconstructon_gen_permutation}, we have
\begin{align}
 &\quad\mathrm{gen}(n,\cald) \leq \Ex_{\tilde{X},\mathbf{T}}\sum_{k=1}^{K}\frac{1}{n}\sum_{m=1}^n\Ex_{q(\bar J_m|\mathbf{e},\hat{f}(X_{\mathbf{T}_{0,m}}))q(\mathbf{e},\phi,\theta|X_{\mathbf{T}_1})}\|X_{\mathbf{T}_{0,m}}-g_\theta(e_k)\|^2\mathbbm{1}_{k=\bar J_m}\\
&-\Ex_{\tilde{X},\mathbf{T}}\sum_{k=1}^{K}\frac{1}{n}\sum_{m=1}^n \Ex_{q(J_m|\mathbf{e},\hat{f}(X_{\mathbf{T}_{1,m}}))q(\mathbf{e},\phi,\theta|X_{\mathbf{T}_1})}\|X_{\mathbf{T}_{1,m}}-g_\theta(e_k)\|^2\mathbbm{1}_{k=J_m}+2\Delta \sqrt{nh(\delta)}\\
&\leq 3\Delta\sqrt{\frac{\KLD(\mathbf{Q}'_{\delta}\|\mathbf{P}'_{\delta})}{n}}+ \frac{\Delta}{\sqrt{n}}+2\Delta \sqrt{nh(\delta)},
\end{align}
where
\begin{align}
    \mathbf{Q}'_\delta \coloneqq& P(\tilde{X})P(\mathbf{T})q(\mathbf{e},\phi,\theta|\tilde X,\mathbf{T})q(\tilde{\mathbf{J}}|\mathbf{e},\tilde{\mathbf{f}})p(\tilde{\mathbf{f}}|\mathbf{\hat{f}},\mathbf{T})p(\mathbf{\hat{f}}|\mathbf{f})p(\mathbf{f}|\phi,\tilde{X}),\\
    \mathbf{P}'_\delta \coloneqq& P(\tilde{X})P(\mathbf{T})q(\mathbf{e},\phi,\theta|\tilde X,\mathbf{T})p(\tilde{\mathbf{J}}|\mathbf{e},\tilde{\mathbf{f}})\Ex_{\mathbf{T}'}p(\tilde{\mathbf{f}}|\mathbf{\hat{f}},\mathbf{T}')p(\mathbf{\hat{f}}|\mathbf{f})p(\mathbf{f}|\phi,\tilde{X}),
\end{align}
We can show that 
\begin{align}
    \KLD(\mathbf{Q}'_{\delta}\|\mathbf{P}'_{\delta})\leq 2\log \caln (\delta,\calf,2n)\label{eq_proof_cmi_metric22}.
\end{align}
see Appendix~\ref{sec_cmi_metric} for the proof. We can analyze the behavior of the upper bound of Eq.~\eqref{eq_proof_cmi_metric22} in Appendix~\ref{sec_natarajan_dim_margin}.

Thus, we have
\begin{align}
 &\quad\mathrm{gen}(n,\cald) \leq 3\Delta\sqrt{\frac{2\log \caln (\delta,\calf,2n)}{n}}+  \frac{\Delta}{\sqrt{n}}+2\Delta\sqrt{nh(\delta)}.
\end{align}
\end{proof}

\subsubsection{Proof of Eq.~(\ref{eq_proof_cmi_metric_meta})}\label{sec_cmi_metric}
We consider the following posterior and data-dependent prior distributions
\begin{align}
    &\mathbf{Q}\coloneqq P(\tilde{X})P(U)q(\mathbf{e},\phi,\theta|\tilde X,U)p(\tilde{\mathbf{J}}|\mathbf{e},\tilde{\mathbf{f}})p(\tilde{\mathbf{f}}|\mathbf{f},U)p(\mathbf{f}|\phi,\tilde{X})\\
    &\mathbf{P}_S\coloneqq P(\tilde{X})P(U)q(\mathbf{e},\phi,\theta|\tilde X,U)p(\tilde{\mathbf{J}}|\mathbf{e},\mathbf{f})\Ex_{p(U')}p(\tilde{\mathbf{f}}|\mathbf{f},U')p(\mathbf{f}|\phi,\tilde{X})
\end{align}
When using the Donsker-Valadhan inequality, all calculation remains the same except for the KL divergence term as described below
\begin{align}
\Ex_{\mathbf{Q}}\log\frac{\mathbf{Q}}{\mathbf{P}_S}&=\Ex_{p(\tilde X)P(U)q(\mathbf{e},\phi,\theta|\tilde X,U)p(\tilde{\mathbf{J}}|\mathbf{e},\tilde{\mathbf{f}})p(\tilde{\mathbf{f}}|\mathbf{f},U)p(\mathbf{f}|\phi,\tilde{X})}\log\frac{p(\tilde{\mathbf{f}}|\mathbf{f},U)}{\Ex_{p(U')}p(\tilde{\mathbf{f}}|\mathbf{f},U')}\\
&=\Ex_{p(\tilde X)P(U)q(\mathbf{f}|X,U)p(\tilde{\mathbf{f}}|\mathbf{f},U)}\log\frac{p(\tilde{\mathbf{f}}|\mathbf{f},U)}{\Ex_{p(U')}p(\tilde{\mathbf{f}}|\mathbf{f},U')}\\
&=\Ex_{p(\tilde X)P(\mathbf{f}|X)}\Ex_{P(U|\mathbf{f},X)p(\tilde{\mathbf{f}}|\mathbf{f},U)}\log\frac{p(\tilde{\mathbf{f}}|\mathbf{f},U)}{\Ex_{p(U')}p(\tilde{\mathbf{f}}|\mathbf{f},U')}\\
&=\Ex_{p(\tilde X)P(\mathbf{f}|X)}\Ex_{P(U|\mathbf{f},X)p(\tilde{\mathbf{f}}|\mathbf{f},U)}\log\frac{p(\tilde{\mathbf{f}}|\mathbf{f},U)}{\Ex_{p(U'|\mathbf{f},X)}p(\tilde{\mathbf{f}}|\mathbf{f},U')}\\
&\quad+\Ex_{p(\tilde X)P(\mathbf{f}|X)}\Ex_{P(U|\mathbf{f},X)p(\tilde{\mathbf{f}}|\mathbf{f},U)}\log\frac{\Ex_{p(U'|\mathbf{f},X)}p(\tilde{\mathbf{f}}|\mathbf{f},U')}{\Ex_{p(U')}p(\tilde{\mathbf{f}}|\mathbf{f},U')}\\
&=I(\tilde{\mathbf{f}};U|\mathbf{f},X)+\Ex_{p(\tilde X)P(\mathbf{f}|X)}\Ex_{P(U|\mathbf{f},X)p(\tilde{\mathbf{f}}|\mathbf{f},U)}\log\frac{\Ex_{p(U'|\mathbf{f},X)}p(\tilde{\mathbf{f}}|\mathbf{f},U')}{\Ex_{p(U')}p(\tilde{\mathbf{f}}|\mathbf{f},U')}\\
&\leq I(\tilde{\mathbf{f}};U|\mathbf{f},X)+\Ex_{p(\tilde X)P(\mathbf{f}|X)}\Ex_{P(U|\mathbf{f},X)p(\tilde{\mathbf{f}}|\mathbf{f},U)}\log\frac{p(U'|\mathbf{f},X)}{p(U')}\\
&=I(\tilde{\mathbf{f}};U|\mathbf{f},X)+\Ex_{p(\tilde X)P(\mathbf{f}|X)}\Ex_{P(U|\mathbf{f},X)p(\tilde{\mathbf{f}}|\mathbf{f},U)}\log\frac{p(U'|X)p(\mathbf{f}|U',X)}{\Ex_{p(U'|X)}p(\mathbf{f}|U',X)p(U')}\\
&= I(\tilde{\mathbf{f}};U|\mathbf{f},X)+I(\mathbf{f};U|X)
\end{align}

We can derive the similar arguments for $\mathbf{Q}'_\delta $ and $\mathbf{P}'_\delta $, and we have
\begin{align}
\KLD(\mathbf{Q}'_\delta\|\mathbf{P}'_\delta)\leq I(\tilde{\mathbf{f}};U|\hat{\mathbf{f}},X)+I(\hat{\mathbf{f}};U|X)
\end{align}

Note that we consider the CMI for the discrete variable, it is upper bounded by the entropy~\citep{cover2012element}, and we have
\begin{align}
I(\tilde{\mathbf{f}};U|\hat{\mathbf{f}},X)\leq H[\tilde{\mathbf{f}}|\hat{\mathbf{f}},X]-H[\tilde{\mathbf{f}}|U,\mathbf{f},X]\leq  H[\tilde{\mathbf{f}}|X]\leq \log \caln (\delta,\calf,2n).
\end{align}
and
\begin{align}
I(\hat{\mathbf{f}};U|X)\leq H[\hat{\mathbf{f}}|X]-H[\hat{\mathbf{f}}|U,X]\leq  H[\hat{\mathbf{f}}|X]\leq \log \caln (\delta,\calf,2n).
\end{align}

The first inequality follows from the fact that MI is defined as the difference between the entropy and the conditional entropy, and the entropy of discrete variables is always non-negative. The second inequality arises because $\mathbf{\bar{J}},\mathbf{J}$ are outputs of a function evaluated at $2n$ points. Thus, we considered the covering number at $2n$ points, defined as $\caln (\delta,\calf,n)\coloneqq \sup_{x^{2n}\in\calx^{2n}}\caln (\delta,\calf,x^{2n})$. Since the entropy is bounded above by the logarithm of the maximum cardinality, we obtain the second inequality.

\subsubsection{Behavior of Eq.~(\ref{stochastic_encoder})}\label{app_proof_softmax}
Finally, we show that  Eq.~\eqref{stochastic_encoder} satisfies $h(\delta)=8\beta \Delta_z\delta$ because
\begin{align}
    &\frac{q(J=j|\mathbf{e},f_\phi(x))}{q(J=j|\mathbf{e},\hat{f}(x))}\\
    &=\frac{e^{-\beta\|f_\phi(x)-e_j\|^2}}{e^{-\beta\|\hat f(x)-e_j\|^2}}\times \frac{\sum_{k=1}^K e^{-\beta\|\hat f(x)-e_k\|^2}}{\sum_{k=1}^K e^{-\beta\|f_\phi(x)-e_k\|^2}}\\
    &= e^{-\beta\|f_\phi(x)-e_j\|^2+\beta\|\hat f(x)-e_j\|^2}\times \frac{\sum_{k=1}^K e^{\beta\|f_\phi(x)-e_k\|^2}}{\sum_{k=1}^K e^{\beta\|\hat f(x)-e_k\|^2}}\\
    &\leq e^{\beta (\hat f(x)-f_\phi(x))\cdot (\hat f(x)+f_\phi(x))-2\beta e_j\cdot (\hat f(x)-f_\phi(x))}\times \sup_{k\in[K]}e^{-\beta\|\hat f(x)-e_k\|^2+\beta\|f_\phi(x)-e_k\|^2}\\
    &\leq e^{4\beta \Delta_{z}\delta}\times e^{4\beta \Delta_{z}\delta}.
\end{align}

\subsection{Discussion about the metric entropy for regularized model}\label{sec_natarajan_dim_margin}
Here we discuss the upper bound of metric entropy in our setting. Since the latent variable lies in $\mathbb{R}^{d_z}$, the encoder network operates as $f_\phi:\mathbb{R}^{d}\to \mathbb{R}^{d_z}$, making it a multivariate function.

Let us define a function class $\calf_i:\calx\to\mathbb{R}$ for $i=1\dots, d_z$and define $\calf_0=\prod_{i=1}^{d_z}\calf$. Then by definition, $\calf\subset \calf_0$ holds. We define the covering number for each $\calf_i$; Given $x^n\coloneqq(x_1,\dots,x_n)\in\calx^n$, define the pseudo-metric $d'_n$ on $\calf_i$ as $d'_n(f,g)\coloneqq \max_{i\in[n]}|f(x_i)-g(x_i)|$ for $f, g\in\calf_i$. The $\delta$-covering number of $\mathcal{F}_i$ with respect to $d'_n$ is denoted as $\caln (\delta,\calf_i,x^n)$, and we define $\caln (\delta,\calf_i,n)\coloneqq \sup_{x^n\in\calx^n}\caln (\delta,\calf_i,x^n)$. Then by definition, the cardinality of $\calf$ is smaller than $\calf_0$, so we have
\begin{align}
    \caln (\delta,\calf,n)\leq \prod_{i=1}^{d_z}\caln (\delta,\calf_i,n).
\end{align}
We can see a similar argument in Lemma 1 in \citet{GUERMEUR2017}, which considers more general settings. 

For simplicity, we assume that $\calf'=\calf_1=\dots=\calf_{d_z}$ holds. Then, we can rewrite Theorem~\ref{thme_metric} as follows
\begin{align}
 &\mathrm{gen}(n,\cald) \leq 4\Delta\sqrt{2n\beta\Delta_z\delta}+3\Delta\sqrt{\frac{2d_z\log \caln (\delta,\calf',2n)}{n}}+  \frac{\Delta}{\sqrt{n}}.
\end{align}
For example, assume that the encoder function, which has $d_\phi$ dimensional parameters, shows  $L_0$-Lipschitz continuity $(L_0 > 0)$ with respect to parameter, then we can obtain $\log\mathcal{N}(\calf,\|\cdot\|_\infty,\delta) \asymp d_\phi\log\frac{L_0}{\delta}$~\cite{wainwright_2019}. Thus, by setting $\delta=\mathcal{O}(1/(n))$, we have that
\begin{align}
 &\mathrm{gen}(n,\cald) = \mathcal{O}\left(\sqrt{\frac{d_\phi d_z\log (n)}{n}}\right)
\end{align}

Instead of using the assumption of parametric function class, the metric entropy can be bounded by the fat-shattering dimension of each function, as discussed in Lemma 3.5 of \citet{alon1997scale}. Examples of fat-shattering dimension evaluations can be found, for instance, in \citet{bartlett2003vapnik}, which discusses neural network models, and \citet{gottlieb2014efficient}, which addresses the fat-shattering dimension of Lipschitz function classes. If our encoder network adheres to these properties, we can bound its covering number accordingly.

As discussed in Appendix~\ref{app_natarajan_additional}, when we use the deterministic decoder, we can use the Natarajan dimension to quantify the complexity of the LVs and such Natarajan dimension can be bounded by the fat-shattering dimension. Thus, it is essential to bound the fat-shattering dimension in both deterministic and stochastic settings.

\section{Proof of Theorem~\ref{data_generalization_bound}}\label{proof_data_generation}
Before the proof, we define the Wasserstein distance. Given a metric $d(\cdot, \cdot)$ and probability distributions $p$ and $q$ on $\calx$, let $\Pi(p,q)$ denote the set of all couplings of $p$ and $q$. The 2-Wasserstein distance is defined as:
\begin{align}
    W_2(p,q)=\sqrt{\inf_{\rho\in\Pi}\int_{\calx\times \calx}d(x,x')^2d\rho(x,x')}.
\end{align}
In this work, we use the Euclidean metric $|\cdot|$ as $d(\cdot,\cdot)$.

Next, we define the pushforward. Let $\pi$ represent a distribution on $\calz$, and let us assume that for any $\theta\in\Theta$, the decoder $g_\theta(\cdot):\calz\to\calx$ is measurable. The pushforward of the distribution $\pi$ by the decoder, denoted as $g_\theta \# \pi$, defines a distribution on $\calx$ as $g_\theta \# \pi(A) = \pi(g_\theta^{-1}(A))$ for any measurable set $A \subseteq \calx$.

\begin{proof}
Conditioned on the encoder parameter, codebook, and input $X$, selecting the index $J$ corresponds to selecting the latent representation $e_J$. Since the posterior over the index is $q(J | \mathbf{e}, \phi, X)$, we express the posterior imposed on the latent representation as $q(e = e_j | \mathbf{e}, \phi,X)$ for all $j = 1, \dots, K$. Using this notation, we first define the distribution obtained by the training dataset as follows; conditioned on $\mathbf{e},\phi,S$, we have
\begin{align}
    \hat{\mu}_S=\frac{1}{n}\sum_{m=1}^ng_\theta\# q(e|\mathbf{e},\phi,S_m).
\end{align}
From the triangle inequality, we have
\begin{align}\label{eq_triangle_proof}
    W_2(\cald,\hat{\mu})\leq W_2(\cald, \hat{\mu}_S)+W_2( \hat{\mu}_S,\hat{\mu}).
\end{align}
We then have
\begin{align}
    W_2^2(\cald,\hat{\mu})\leq 2W_2^2(\cald, \hat{\mu}_S)+2W_2^2( \hat{\mu}_S,\hat{\mu}).
\end{align}

The first term of Eq.~\eqref{eq_triangle_proof} is bounded as follows;
\begin{align}
    \Ex_{S}\Ex_{q(\mathbf{e},\phi,\theta|S)} W_2^2(\cald, \hat{\mu}_S)
    &\leq  \Ex_{S}\Ex_{q(\mathbf{e},\phi,\theta|S)}\Ex_X\frac{1}{n}\sum_{m=1}^n\Ex_{q(e|\mathbf{e},\phi,S_m)}\|X-g_\theta(e)\|^2\\
    &= \Ex_{S}\Ex_{q(\mathbf{e},\phi,\theta|S)}\Ex_X\sum_{k=1}^K\|X-g_\theta(e_k)\|^2\frac{1}{n}\sum_{m=1}^n\Ex_{q(J_m|\mathbf{e},\phi,S_m)}\mathbbm{1}_{k=J_m}\label{eq_posterior_mid_term}.
\end{align}
The first inequality is obtained by the definition of the Wasserstein distance.

This term corresponds to the first term of Eq.~\eqref{eq_derivation_cmi_third_fourth}, where $X$ corresponds to the test data $X_{m,\bar{U}_m}$. Therefore, Eq.~\eqref{eq_posterior_mid_term} can be upper-bounded by applying Eq.~\eqref{app_proof_second_final}, which serves as the upper bound for Eq.~\eqref{eq_derivation_cmi_third_fourth}.
\begin{align}\label{eq_wasserstein_first}
&\Ex_{S}\Ex_{q(\mathbf{e},\phi,\theta|S)} W_2^2(\cald, \hat{\mu}_S)\\
&\leq \Ex_{S}\Ex_{q(\mathbf{e},\phi,\theta|S)}\frac{1}{n}\sum_{m=1}^n \Ex_{q(J_m|\mathbf{e},\phi,S_m)}\|S_m-g_\theta(e_{J_m})\|^2+\frac{1}{\lambda}\KLD(\mathbf{Q}|\mathbf{P})  +\frac{\lambda\Delta^2}{2n}+\frac{\Delta}{\sqrt{n}},
\end{align}
where
\begin{align}
    &\mathbf{Q}\coloneqq q(\mathbf{e},\phi,\theta|S)\prod_{m=1}^nq(J_m|\mathbf{e},\phi,S_m),
    &\mathbf{P}\coloneqq q(\mathbf{e},\phi,\theta|S)\prod_{m=1}^n \pi(J_m|\mathbf{e},\phi).
\end{align}

Next, the second term of Eq.~\eqref{eq_triangle_proof} is bounded as follows; we use the weighted CKP inequality \citep{Bolley2005WeightedCI}.
 From the particular case 2.5. in \citet{Bolley2005WeightedCI}, we directly have
\begin{align}\label{eq_wasserstein_second}
    \Ex_{S}\Ex_{q(\mathbf{e},\phi,\theta|S)} W_2^2( \hat{\mu}_S,\hat{\mu})\leq \Delta\sqrt{2\KLD(\hat{\mu}_S\|\hat{\mu})}&\leq \Delta\sqrt{2\frac{1}{n}\sum_{m=1}^n\KLD(g_\theta\# q(e|\mathbf{e},\phi,S_m)\|g_\theta\# \pi(e|\mathbf{e},\phi))}\\
    &\leq  \Delta\sqrt{2\frac{1}{n}\sum_{m=1}^n\KLD(q(J_m|\mathbf{e},\phi,S_m)\|\pi(J_m|\mathbf{e},\phi))}
\end{align}

Combining Eqs.~\eqref{eq_wasserstein_first} and \eqref{eq_wasserstein_second}, we have
\begin{align}
   \Ex_{S}\Ex_{q(\mathbf{e},\phi,\theta|S)} W_2^2(\cald, \hat{\mu}) 
   &\leq 2\Ex_{S}\Ex_{q(\mathbf{e},\phi,\theta|S)}\frac{1}{n}\sum_{m=1}^n \Ex_{q(e_{(m)}|\mathbf{e},\phi,S_m)}\|S_m-g_\theta(e_{(m)})\|^2\\
   &+\frac{2}{\lambda}\KLD(\mathbf{Q}|\mathbf{P})  +\frac{\lambda\Delta^2}{n}+\frac{2\Delta}{\sqrt{n}}+2\Delta\sqrt{2\frac{1}{n}\sum_{m=1}^n\KLD(q(J_m|\mathbf{e},\phi,S_m)\|\pi(J_m|\mathbf{e},\phi))}.
\end{align}
Then by optimizing $\lambda$, we have
\begin{align}
   &\Ex_{S}\Ex_{q(\mathbf{e},\phi,\theta|S)} W_2^2(\cald, \hat{\mu})\\
   &\leq \Ex_{S}\Ex_{q(\mathbf{e},\phi,\theta|S)}\frac{2}{n}\sum_{m=1}^n \Ex_{q(e_{(m)}|\mathbf{e},\phi,S_m)}\|S_m-g_\theta(e_{(m)})\|^2+4\Delta\sqrt{2\frac{1}{n}\sum_{m=1}^n\KLD(q(J_m|\mathbf{e},\phi,S_m)\|\pi(J_m|\mathbf{e},\phi))}+\frac{2\Delta}{\sqrt{n}}.
\end{align}
\end{proof}

\section{Experimental settings and additional experimental results}\label{app:exp_settings}
Our experiments were based on the Gaussian stochastically quantized VAE (SQ-VAE) model proposed by~\citet{takida22a}, and were conducted by adapting the code from their GitHub~\footnote{\url{https://github.com/sony/sqvae/tree/main/vision}}to suit our experimental configurations.
Therefore, we first introduce the basics of (Gaussian) SQ-VAE in Sections~\ref{subsec:overview_sqvae} and~\ref{subsec:overview_gaussian_sqvae} and finally explain our experimental settings in Section~\ref{subsec:detail_exp}.

\subsection{Overview of SQ-VAE}
\label{subsec:overview_sqvae}
The SQ-VAE is a generative model that, similar to VQ-VAE, employs a learnable codebook $\mathbf{e} = \{ e_k \}_{k=1}^K \in \mathcal{Z}^{K}$. 
The objective of SQ-VAE is to learn the \emph{stochastic decoder} $x \sim p_{\theta}(x|Z_q)$ using latent variables $Z_q$ to generate samples belonging to the data distribution $p_{\text{data}}(x)$, where $p_{\theta}(x|Z_q) = \mathcal{N}(g_{\theta}(Z_q), \sigma^{2}\mathbf{I})$, $\mathcal{N}(m, \sigma\mathbf{I})$ is the Gaussian distribution with mean and equal variance parameter \{$m$, $\sigma^{2}\mathbf{I}$\}, $\sigma^{2} \in \mathbb{R}_{+}$, and $\mathbf{I}$ is the identity matrix.
Here, $Z_q$ is sampled from a prior distribution $P(Z_q)$ over the discrete latent space $\mathbf{e}^{d_z}$.

\textbf{In the main training process} of SQ-VAE, we assume $P(Z_q)$ to be an i.i.d.~uniform distribution, identical to VQ-VAE, meaning each codebook element is selected with equal probability ($P(z_{q,i} = b_k) = 1/K$ for $k \in [K]$). 
Subsequently, a \textbf{second training stage} is conducted to learn $P(Z_q)$. 
Since computing the posterior $p_{\theta}(Z_q|x)$ exactly is intractable, we utilize an approximate posterior distribution $q_{\phi}(Z_q|x)$ instead.

At the encoding process, directly mapping from $x$ to the discrete $Z_q$ is challenging due to the discrete nature of $Z_q$.
To overcome this issue, \citet{takida22a} proposed to construct a stochastic encoder by introducing the following two processes:
\begin{itemize}
    \item \textbf{Stochastic Dequantization Process}: The transformation function from $Z_q$ to the auxiliary \emph{continuous} variable, $Z$, denoted as $p_{\psi}(Z|Z_q)$, where $\psi$ is its parameters.
    \item \textbf{Stochastic Quantization Process}: The transformation from $Z$ to $Z_q$ is given by $\hat{P}_{\phi}(Z_q|Z) \propto p_{\phi}(Z|Z_q)P(Z_q)$ obtained via Bayes' theorem, which is represented as the categorical distribution $q(J|\mathbf{e}, \phi, x)$ through the softmax function as in Eq.~\eqref{stochastic_encoder}.
\end{itemize}

We can obtain $\hat{Z}_{q}$ from a deterministic encoder $f_{\phi}(x)$, where we expect that $\hat{Z}_{q}$ is close to $Z_q$.
Therefore, we can similarly define the dequantization process of $\hat{Z}_{q}$ as $Z|\hat{Z}_{q} \sim p_{\psi}(Z|\hat{Z}_q)$.
By combining this process with the stochastic quantization process, we can establish the following \emph{stochastic encoding} process from $x$ to $Z_q$: $\mathbb{E}_{q_{\omega}(Z|x)} [\hat{P}_{\phi}(Z_q|Z)]$,
where $\omega \coloneqq \{\phi, \psi\}$ and $q_{\omega}(Z|x) \coloneqq p_{\psi}(Z|f_{\phi}(x))$.

According to these facts, we can derive the following evidence lower bound (ELBO) for SQ-VAE:
\begin{align}\label{app_eq_elbo}
    - &\mathcal{L}_{\mathrm{SQ}}(x; \theta, \omega, \mathbf{e}) \\
    &\coloneqq 
    \underbrace{\mathbb{E}_{q_{\omega}(Z|x), \hat{P}_{\phi}(Z_q|Z)} \bigg[\log \frac{p_{\theta}(x|Z_q)p_{\phi}(Z|Z_q)}{q_{\omega}(Z|x)}\bigg]}_{= \mathrm{KL}(\mathbf{Q}\parallel \mathbf{P})}
    + \mathbb{E}_{q_{\omega}(Z|x)} H(\hat{P}_{\phi}(Z_q|Z)) + (\mathrm{Const.}),
\end{align}
where $H(\hat{P}_{\phi}(Z_q|Z))$ is the entropy of $\hat{P}_{\phi}(Z_q|Z)$.

From the above, the optimization problem of SQ-VAE is minimizing $\mathbb{E}_{p_{\mathrm{data}}(x)}[\mathcal{L}_{\mathrm{SQ}}(x; \theta, \omega, \mathbf{e})]$ w.r.t.~$\{\theta, \omega, \mathbf{e} \}$.
This approach eliminates the need for heuristic techniques traditionally required, such as stop-gradient, exponential moving average (EMA), and codebook reset~\citep{williams2020hierarchical}.

Moreover, the categorical posterior distribution $\hat{P}_{\phi}(Z_q|Z) = q(J|\mathbf{e}, \phi, x)$ can be approximated using the Gumbel–Softmax relaxation~\citep{jang2017categorical, maddison2017the}, where the Gumbel–Softmax function is defined as, for all $k$ ($1 \leq k \leq K$),
\begin{align*}
    \frac{\exp(-\beta\|f_{\phi}(x) - e_k \|^{2} + G_{k})/\tau)}{\sum_{j=1}^{K} \exp(-\beta\|f_{\phi}(x) - e_j \|^{2} + G_{j})/\tau)},
\end{align*}
where $G_{k}$ is an i.i.d.~sample from the Gumbel distribution and $\tau$ is the temperature parameter that is deferent from $\beta$ in Eq.~\eqref{stochastic_encoder}.
This allows the application of the reparameterization trick from VAEs during backpropagation, enabling efficient gradient computation and model training.

\subsection{Gaussian SQ-VAE}
\label{subsec:overview_gaussian_sqvae}
Gaussian SQ-VAE assumes that the dequantization process $p_{\psi}(Z|Z_q)$ follows a Gaussian distribution.
In this paper, we set the following Gaussian distribution: $p_{\psi}(Z_i|Z_q) = \mathcal{N}(Z_{q,i}, \sigma_{\psi}^{2}\mathbf{I})$, where $\sigma_{\psi}^{2} \in \mathbb{R}_{+}$.
Then, the stochastic decoder and the stochastic dequantization process in SQ-VAE can be written as $p_{\theta}(x|Z_q) = \mathcal{N}(g_{\theta}(Z_q), \sigma^{2}\mathbf{I})$ and $p_{\psi}(Z_i|\hat{Z}_q) = \mathcal{N}(\hat{Z}_{q,i}, \sigma_{\psi}^{2}\mathbf{I})$.

\subsection{Details of experimental settings}
\label{subsec:detail_exp}

\paragraph{Dataset:}
We used the MNIST dataset~\citep{LeCun89}, which is $28 \times 28$ gray scale images with $10$ classes.
We prepared the subset dataset with $\{1000, 2000, 4000, 8000 \}$ samples from the default training dataset ($60000$ samples).
Then, we split it as the training and the validation datasets following the supersample setting as in Section~\ref{subsec:supersample}.

\paragraph{Model architecture and training procedure:}
We adopted the ConvResNets with the architecture provided by Google DeepMind~\footnote{\url{https://github.com/deepmind/sonnet/blob/v2/examples/vqvae_example.ipynb}}.
We summarize the details of this model in Table~\ref{table:exp_setup_mnist}.

Regarding the training procedure, we adopted the settings in \citet{takida22a} as follows.
We used the Adam optimizer with $0.001$ initial learning rate.
The learning rate was halved every $3$ epochs if the validation loss is not improving.
We trained the model $200$ epochs with $32$ mini-batch size.
As for the annealing schedule for the temperature parameter of the Gumbel-softmax sampling,
we set $\tau = \exp(10^{-5} \cdot t)$ as in \citet{jang2017categorical}, where $t$ is the global training step size.

\paragraph{GPU environment:}
We used NVIDIA GPUs with $32$GB memory (NVIDIA DGX-1 with Tesla V100 and DGX-2) in our experiments.

\paragraph{Mutual information estimation:}
To estimate the mutual information $I(\tilde{\mathbf{J}};U|\mathbf{e},\phi,\tilde X)$ in Eq.~\eqref{eq_reconstructon_bound1}, we developed a plug-in estimator for it, which is computed using estimators for the probability density of $\tilde{\mathbf{J}}$ and $\tilde X$, as well as their joint probability density, employing $k$-nearest-neighbor-based density estimation~\citep{Loftsgaarden65}.
The estimation strategy is incorporated into the \texttt{sklearn.feature\_selection.mutual\_info\_classif} function~\footnote{\url{https://scikit-learn.org/stable/modules/generated/sklearn.feature_selection.mutual_info_classif.html}}.
We set $k=3$ following the default setting of this function and \citet{Kraskov04, Ross14}.

\begin{table}[t]
\centering
\caption{Experimental settings on MNIST.}
\label{table:exp_setup_mnist}
\scalebox{.55}{
\begin{tabular}{llllll}
\hline
\multicolumn{6}{c}{\textbf{Experimental setup for MNIST experiments}}                                                                                                  \\ \hline \hline
\multicolumn{1}{l|}{Model}                               & \multicolumn{5}{l}{Gaussian stochastically quantized VAE (SQ-VAE)~\citep{takida22a}}                                                                    \\ \hline
\multicolumn{1}{l|}{Network archtecture}                              & \multicolumn{5}{l}{ConvResNets with three convolutional layers, two transpose convolutional layers, and one ResBlocks.}                                                                     \\ \hline
\multicolumn{1}{l|}{The size of a codebook ($K$) and the dimension of the latent space $d_z$ }                              & \multicolumn{5}{l}{$K=\{16, 32, 64, 128\}$; $d_z=64$}                                                                     \\ \hline
\multicolumn{1}{l|}{\multirow{1}{*}{Optimizer}}         & \multicolumn{5}{l}{Adam with $0.001$ initial learning rate}            \\ \hline
\multicolumn{1}{l|}{Batch size}                         & \multicolumn{5}{l}{$32$}                                                     \\ \hline
\multicolumn{1}{l|}{Num. of training/validation samples}           & \multicolumn{5}{l}{$[250, 1000, 2000, 4000]$}                                                                  \\ \hline
\multicolumn{1}{l|}{Num. of epochs}                     & \multicolumn{5}{l}{$200$}                                                                                    \\ \hline
\multicolumn{1}{l|}{Num. of samples for CMI estimation} & \multicolumn{5}{l}{$3$}                                                                                      \\ \hline
\multicolumn{1}{l|}{Num. of samplings for $U$}          & \multicolumn{5}{l}{$5$}                                              
\end{tabular}
}
\end{table}

\begin{table}[t]
\centering
\caption{Experimental settings on CIFAR10.}
\label{table:exp_setup_cifar}
\scalebox{.55}{
\begin{tabular}{llllll}
\hline
\multicolumn{6}{c}{\textbf{Experimental setup for CIFAR10 experiments}}                                                                                                  \\ \hline \hline
\multicolumn{1}{l|}{Model}                               & \multicolumn{5}{l}{Gaussian stochastically quantized VAE (SQ-VAE)~\citep{takida22a}}                                                                    \\ \hline
\multicolumn{1}{l|}{Network architecture}                              & \multicolumn{5}{l}{ConvResNets with three convolutional layers, two transpose convolutional layers, and one ResBlocks.}                                                                     \\ \hline
\multicolumn{1}{l|}{The size of a codebook ($K$) and the dimension of the latent space $d_z$ }                              & \multicolumn{5}{l}{$K=\{16, 32, 64, 128\}$; $d_z=64$}                                                                     \\ \hline
\multicolumn{1}{l|}{\multirow{1}{*}{Optimizer}}         & \multicolumn{5}{l}{Adam with $0.001$ initial learning rate}            \\ \hline
\multicolumn{1}{l|}{Batch size}                         & \multicolumn{5}{l}{$32$}                                                     \\ \hline
\multicolumn{1}{l|}{Num. of training/validation samples}           & \multicolumn{5}{l}{$[1000, 5000, 10000, 20000]$}                                                                  \\ \hline
\multicolumn{1}{l|}{Num. of epochs}                     & \multicolumn{5}{l}{$200$}                                                                                    \\ \hline
\multicolumn{1}{l|}{Num. of samples for CMI estimation} & \multicolumn{5}{l}{$3$}                                                                                      \\ \hline
\multicolumn{1}{l|}{Num. of samplings for $U$}          & \multicolumn{5}{l}{$5$}                                              
\end{tabular}
}
\end{table}

\subsection{Additional Experimental Results}
\label{app:add_exp_res}
Here, we summarize our additional experimental results. These experiments are organized to empirically support the three central claims of our paper, which we present in sequence: (1) the decoder-independent nature of the generalization gap, (2) a detailed analysis of the two KL terms in Theorem \ref{reconstructon_gen}, and (3) the practical utility of our theoretical framework.

\subsubsection{Validation of Decoder-Independence (Theorems \ref{reconstructon_gen}, \ref{reconstructon_gen_permutation}, \& \ref{thme_metric})}

A central claim of our paper is that the generalization gap is independent of the decoder $g_\theta$. We validate this claim across various settings.

First, Figure \ref{fig:app_cifar_decoder} shows the results on the CIFAR-10 dataset, which is more complex than MNIST. These results support our implication: increasing the complexity of $g_\theta$ by adding a ResBlock (introducing approximately 74,000 parameters) has a negligible effect on the generalization gap.

Second, to provide a complete picture for Figure \ref{fig:res_ecmi_kl} in the main text, we provide its corresponding training losses in Table \ref{tab:app_fig2_train_loss}. The table confirms that for larger datasets ($n \ge 1000$), a more expressive decoder (i.e., with more residual blocks) tends to achieve a lower training loss. This observation, when viewed alongside the stable generalization gap in Figure \ref{fig:res_ecmi_kl}, strongly reinforces our central claim: the decoder's capacity to fit the training data is not the primary driver of generalization performance.

Third, we compare the behavior of stochastic (SQ-VAE, Theorem \ref{thme_metric}) and deterministic (VQ-VAE, Theorems \ref{reconstructon_gen} \& \ref{reconstructon_gen_permutation}) encoders. The results in Figure \ref{fig:app_sq_vs_vq} show two key findings:
\begin{itemize}
    \item \textbf{SQ-VAE (Stochastic):} As shown in the two left panels, the generalization gap is independent of the decoder complexity (leftmost) and instead depends on the latent dimension $d_z$ (second from left). This is perfectly consistent with Theorem \ref{thme_metric}, which is independent of the learning algorithm $q(w|S)$ and fully eliminates the decoder's influence.
    \item \textbf{VQ-VAE (Deterministic):} As shown in the two right panels, the gap is also largely independent of the decoder (second from right) and dependent on $d_z$ (rightmost), supporting Theorems \ref{reconstructon_gen} \& \ref{reconstructon_gen_permutation}. However, we observe a slight tendency for the gap to increase in the moderate complexity range (e.g., 2 to 6 ResBlocks). This does not contradict our theory. Our bounds (which depend on $q(w|S)$) state that the \emph{upper bound} is decoder-independent, implying that while increasing complexity substantially does not worsen the gap, a poorly learned $q(w|S)$ in the moderate range can still affect generalization under that bound.
\end{itemize}
Overall, these findings suggest that the influence of decoder complexity depends on whether the latent variable mechanism is stochastic or deterministic, which is an important direction for future work.

\begin{figure*}[t]
\centering
\includegraphics[width=1.\textwidth]{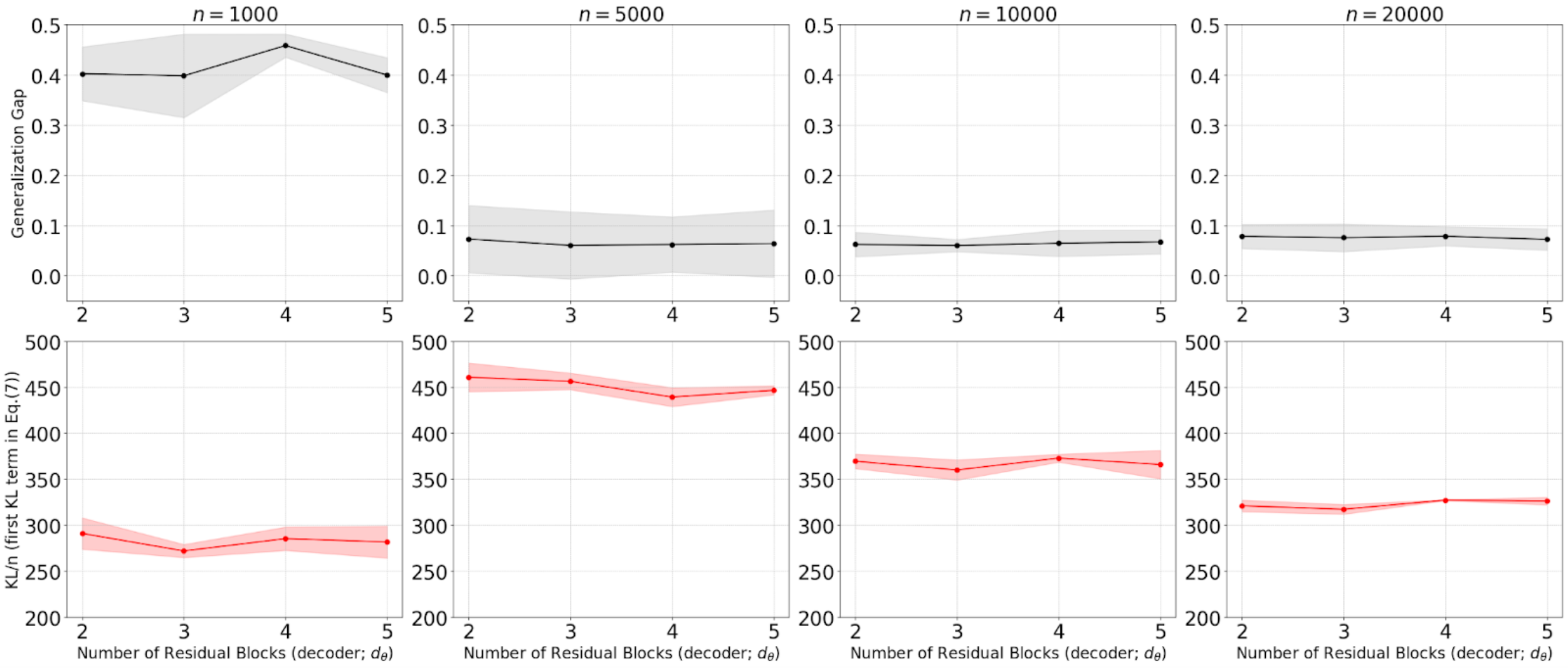}
\caption{Behavior of the generalization gap and the empirical KL term ($\KLD(\mathbf{Q}_{\mathbf{J},U}\|\mathbf{P})/n$) on the CIFAR-10 dataset ($K=128, d_z=64$). The top row shows their asymptotic behavior as a function of sample size $n$. The bottom row shows their behavior as the decoder complexity (number of residual blocks) is increased (for $n=20000$).}
\label{fig:app_cifar_decoder}
\end{figure*}

\begin{table}[t]
\caption{Training loss corresponding to the generalization gap experiments in Figure 2 (top row). As decoder complexity (number of Residual Blocks, RB) increases, the training loss tends to decrease for larger sample sizes ($n \ge 1000$), confirming that a more expressive decoder can better fit the training data.}
\label{tab:app_fig2_train_loss}
\centering
\begin{tabular}{ccccc}
\toprule
$n$ & \textbf{RB=2} & \textbf{RB=3} & \textbf{RB=4} & \textbf{RB=5} \\
\midrule
250 & $6.4851 \pm 0.2642$ & $7.0816 \pm 0.2817$ & $7.6026 \pm 0.2786$ & $7.4940 \pm 0.7172$ \\
1000 & $3.4664 \pm 0.0293$ & $3.4869 \pm 0.1286$ & $3.3180 \pm 0.0398$ & $3.2609 \pm 0.0905$ \\
2000 & $2.6391 \pm 0.0177$ & $2.5114 \pm 0.0130$ & $2.4645 \pm 0.0778$ & $2.3915 \pm 0.1214$ \\
4000 & $2.1102 \pm 0.0478$ & $1.9466 \pm 0.0152$ & $1.9223 \pm 0.0115$ & $1.9001 \pm 0.0475$ \\
\bottomrule
\end{tabular}
\end{table}

\begin{figure*}[t]
\centering
\includegraphics[width=1.\textwidth]{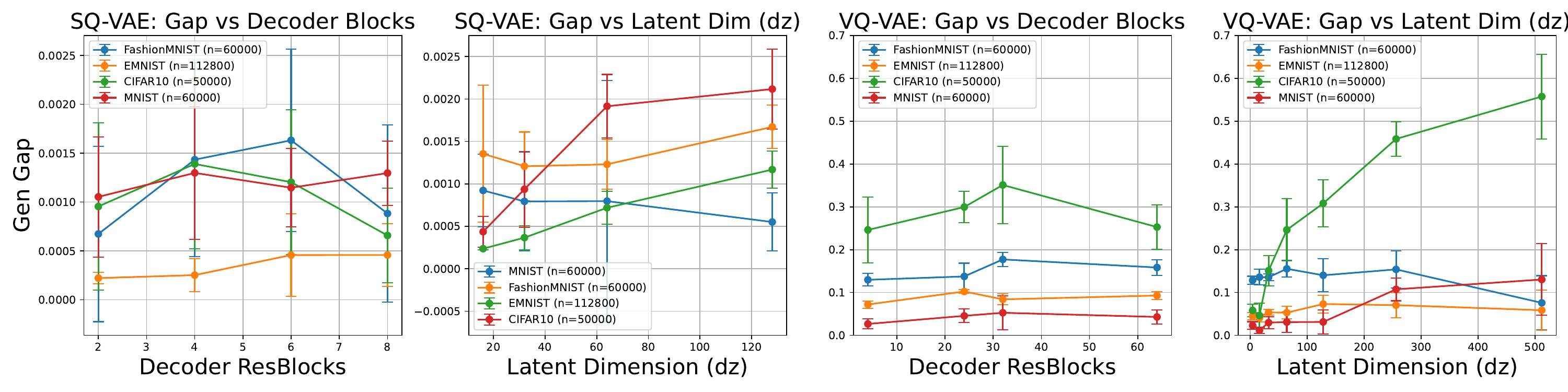}
\caption{Behavior of the generalization gap when increasing the number of residual blocks of the decoder network and the latent dimension $d_z$ in SQ-VAE (stochastic, left two panels) and VQ-VAE (deterministic, right two panels) models.}
\label{fig:app_sq_vs_vq}
\end{figure*}

\subsubsection{Analysis of the KL Divergence Terms (Theorem \ref{reconstructon_gen})}
\label{app:sub_kl_analysis}
Theorem \ref{reconstructon_gen} presents a bound comprising two KL terms. We now empirically analyze the behavior of both terms.

Figure \ref{fig:app_mnist_kl_terms} shows the behavior of these terms on the MNIST dataset. As shown in the top row (left and middle panels), the first KL term ($\KLD(\mathbf{Q}_{\mathbf{J}, U}\|\mathbf{P})/n$) does not decrease monotonically with $n$, consistent with our theoretical claim in Lemma \ref{lem_emp_order}. In contrast, the generalization gap (top left) and the second KL term (CMI term, top right) both decrease steadily as $n$ increases. This suggests that the second KL term, not the first, correctly captures the generalization behavior. The bottom row also shows that both KL terms increase with the codebook size $K$, confirming our theoretical predictions.

To make this relationship explicit, we plot the correlation between the generalization gap and each KL term in Figure~\ref{fig:res_corr_mnist}. The results on MNIST are clear: the second KL term (right panel) exhibits a consistent positive correlation ($r \approx$ 0.46-0.60) with the generalization gap across all decoder complexities. Conversely, the first KL term (left panel) shows a negative correlation, as its value does not decrease with $n$ while the generalization gap does.

We further validate this finding on the more complex CIFAR-10 dataset in Figure~\ref{fig:app_cifar_kl_corr}. The trends observed in MNIST are not only confirmed but are even more pronounced. The asymptotic behavior (left three panels) again shows that both the generalization gap and the second KL term decrease with $n$, while the first KL term does not.
Most importantly, the correlation plots (right two panels) provide definitive evidence. The second KL term exhibits an \textbf{extremely strong and consistent positive correlation ($r > 0.92$) with the generalization gap}. In stark contrast, the first KL term shows a strong negative correlation ($r < -0.58$).

This provides robust empirical evidence that the second term in Eq.~\eqref{eq_reconstructon_bound1} (the CMI term) is the component that correctly characterizes generalization behavior.

\begin{figure*}[t]
\centering
\includegraphics[width=1.\textwidth]{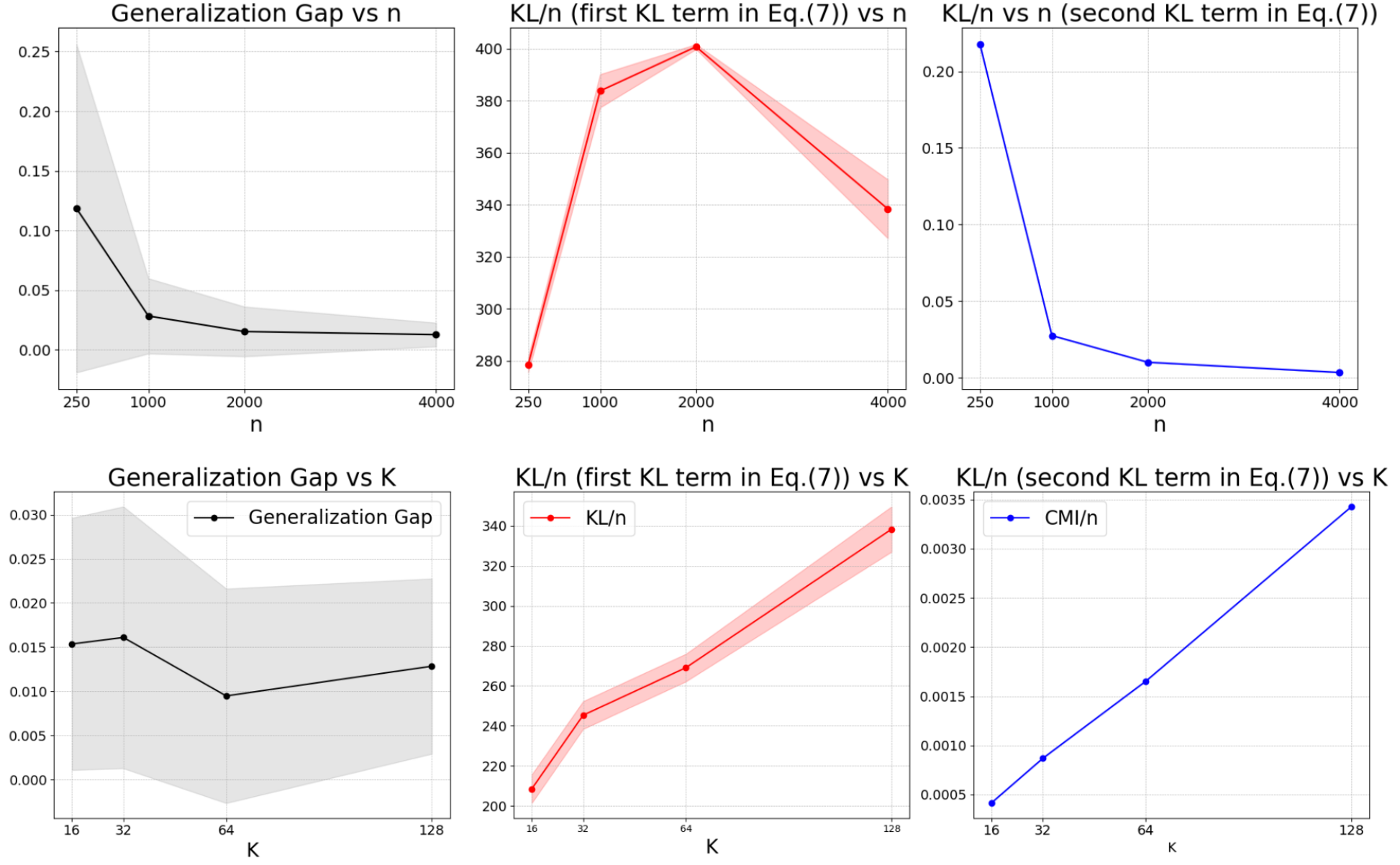}
\caption{Behavior of the generalization gap and the two KL terms from Eq. (\ref{eq_reconstructon_bound1}) on the MNIST dataset ($K=128, d_z=64$). \textbf{(Top row)} Asymptotic behavior as a function of sample size $n$. \textbf{(Bottom row)} Behavior as a function of codebook size $K$ (for $n=2000$).}
\label{fig:app_mnist_kl_terms}
\end{figure*}

\begin{figure*}[t]
\centering
\includegraphics[width=1.\textwidth]{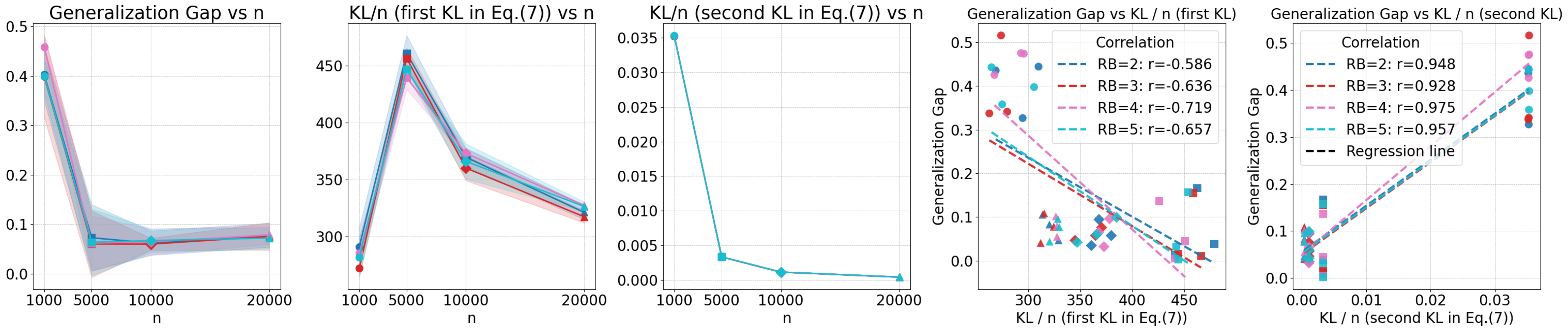}
\caption{The behavior of the generalization gap and the two KL terms from Eq.~\eqref{eq_reconstructon_bound1} on the CIFAR dataset ($K=128$, $d_{z}=64$).The three leftmost panels show the asymptotic behavior of the generalization gap, the first KL term, and the second KL term as a function of sample size $n$. The two rightmost panels show scatter plots correlating the generalization gap with the first KL term (fourth panel) and the second KL term (fifth panel). In these plots, the color indicates the number of decoder Residual Blocks (RB=2, 3, 4, or 5) and the marker shape indicates the sample size $n$. (Circle for $n=1000$, Square for $n=5000$, Diamond for $n=10000$, and Triangle for $n=20000$).}
\label{fig:app_cifar_kl_corr}
\end{figure*}

\subsubsection{Practical Utility of the Data-Dependent Prior}

In addition to validating our theoretical bounds, we also investigated the practical utility of our framework by implementing a data-dependent prior following the approach of \citet{sefidgaran2024minimum}.

\citet{sefidgaran2024minimum} proposed the \emph{Lossless Category-Dependent Variational Information Bottleneck (CDVIB)}, which is directly motivated by their theoretical results that bound the generalization error of encoder–decoder representation learning models. Their analysis demonstrates that the generalization error depends solely on the encoder and latent variables, rather than on the decoder. Consequently, unlike the standard VIB that employs a data-independent prior (e.g., $\mathcal{N}(0,I)$), their bound suggests that \emph{data-dependent priors}—which capture the structure and ``simplicity'' of the encoder—can tighten theoretical guarantees and improve generalization.

Building upon this insight, the Lossless CDVIB framework introduces a data-dependent Gaussian prior. To implement such a prior, the mean and variance of each prior component are updated at every training iteration $t$ using an exponential moving average of the corresponding batch statistics. This moving average enables the prior to gradually align with the encoder’s latent representation, ensuring that the KL regularization term consistently tracks the geometry of the encoder. This adaptive alignment mitigates the mismatch between the encoder’s latent distribution and the fixed isotropic prior used in standard VIB. Furthermore, since the ``ghost'' dataset assumed in the theoretical analysis is unavailable during training, the moving average empirically mimics this expectation by aggregating statistics across past mini-batches. In this sense, the moving prior reproduces the averaging effect of the ghost dataset, providing a practical realization of the theoretical setup.

Motivated by these concepts, we introduce a similar data-dependent prior into the ELBO objective in Eq.~\eqref{app_eq_elbo}. Specifically, we replace the entropy regularization term in Eq.~\eqref{app_eq_elbo} as follows:
\begin{align}\label{eq_prop_partial}
\frac{1}{N}\sum_{n=1}^{N}\mathbb{E}_{x_n}\! \left[H(\hat{P}_{\phi}(Z_q|Z))\right]
\;\longrightarrow\;
(1-\beta)\frac{1}{N}\sum_{n=1}^{N}\mathbb{E}_{x_n}\! \left[H(\hat{P}_{\phi}(Z_q|Z))\right]
+ \beta\,\mathrm{KL}_{\mathrm{CDVIB}},
\end{align}
where $\mathrm{KL}_{\mathrm{CDVIB}}$ is defined as follows. Recall that the entropy term is expressed as
\begin{align}
\frac{1}{N}\sum_{n=1}^{N}\mathbb{E}_{x_n}\! \left[H(\hat{P}_{\phi}(Z_q|Z))\right]
&=
\frac{1}{N}\sum_{n=1}^{N}\sum_{k=1}^{K} q_{n,k}\, \log q_{n,k},
\end{align}
where $q_{n,k}$ denotes the simplified form of $\mathbb{E}\hat{P}_{\phi}(Z_q|Z)$ for the data point $x_n$.
We then define the proposed regularizer as
\begin{align}
\mathrm{KL}_{\mathrm{CDVIB}}
&=
\frac{1}{N}\sum_{n=1}^{N}\sum_{k=1}^{K}
q_{n,k}\, \log \frac{q_{n,k}}{\pi_k},
\end{align}
where the denominator $\pi_k$ represents the moving average of the empirical statistics:
\begin{align}
\hat p_k &= \frac{1}{N}\sum_{n=1}^{N} q_{n,k},
\end{align}
and the data-dependent prior $\pi$ is updated as
\begin{align}
\pi &\leftarrow (1-\alpha)\,\pi + \alpha\,\hat p,
\qquad
\pi \leftarrow \frac{\pi}{\sum_{j=1}^{K} \pi_j},
\qquad
\alpha \in (0,1).
\end{align}

In practice, the empirical statistics are computed over each mini-batch. We employ a mixture of data-independent and data-dependent priors as the regularization term in Eq.~\eqref{eq_prop_partial}. Empirically, we observe that this mixture stabilizes training, while setting $\beta$ too close to $1$ often leads to suboptimal performance.

We evaluated the test reconstruction loss in terms of MSE following the experimental protocol of \citet{takida22a}. For all experiments, we fixed $\alpha=0.9$ and $\beta=0.5$. The numerical results are reported in Table~\ref{tab:app_prior_comparison}. Here, the MSE represents the total pixel-wise reconstruction loss per image, rather than the per-pixel average. 
To ensure statistical robustness, we repeated each experiment with ten different random seeds, and report the mean and variance of the MSE across these $10$ independent trials.
 We observed that incorporating the data-dependent prior consistently improves MSE performance across all settings.


\begin{table}[t]
\caption{Reconstruction error comparison between the baseline SQ-VAE (without a data-dependent prior) and our proposed method (with a data-dependent prior) on test dataset. Our method demonstrates consistently lower test loss across all benchmark datasets, validating the practical benefits of our theoretical framework.}
\label{tab:app_prior_comparison}
\centering
\begin{tabular}{lcc}
\toprule
\textbf{Dataset } & \textbf{SQVAE (baseline)} & \textbf{Proposed method} \\
\midrule
CIFAR10  & $10.75 \pm 0.10$ & $10.68 \pm 0.04$ \\
Fashion-MNIST & $1.37 \pm 0.02$ & $1.32 \pm 0.05$ \\
MNIST  & $3.23 \pm 0.04$ & $2.99 \pm 0.04$ \\
\bottomrule
\end{tabular}
\end{table}

\end{document}